\definecolor{mydarkblue}{rgb}{0,0.08,0.45}
\renewcommand{\mmt}{{\tilde{m}}}
\newcommand{\data}{\mathrm{data}}
\newcommand{\txts}{{(\bfss)}}
\newcommand{\txtx}{{(\bfxx_0)}}
\newcommand{\txteps}{{(\bfeps)}}
\newcommand{\txtv}{{(\bfvv)}}
\newcommand{\SNIS}{\mathrm{SNIS}}
\newcommand{\DOL}{\mathrm{DOL}}
\newcommand{\cDOL}{\mathrm{cDOL}}
\newcommand{\cskip}{\mathrm{skip}}
\newcommand{\cout}{\mathrm{out}}
\newcommand{\cin}{\mathrm{in}}
\newcommand{\cnoise}{\mathrm{noise}}
\newcommand{\lowsup}[1]{\smash{\raisebox{-0.01ex}{$\scriptscriptstyle #1$}}}
\title{Diagnosing and Improving Diffusion Models by Estimating the Optimal Loss Value}
\author{%
  \tabincell{l}{Yixian Xu$^1$\thanks{Equal contribution.}, \quad Shengjie Luo$^{1*}$, \quad Liwei Wang$^1$, \quad Di He$^1$\thanks{Correspondence to: Di He \textless\texttt{dihe@pku.edu.cn}\textgreater, Chang Liu \textless\texttt{liuchang@bza.edu.cn}\textgreater.}, \quad Chang Liu$^{2\dagger}$} \\
  $^1$ State Key Laboratory of General Artificial Intelligence, Peking University, Beijing, China \\
  $^2$ Zhongguancun Academy, Beijing, China
}
\titlespacing*{\section}{3pt}{4pt}{2pt} 
\titlespacing*{\subsection}{1pt}{3pt}{1pt} 
\titlespacing*{\subsubsection}{1pt}{2pt}{1pt} 
\begin{document}
\abovedisplayskip=4pt
\belowdisplayskip=4pt
\abovedisplayshortskip=3pt
\belowdisplayshortskip=4pt


\maketitle
\vspace{-10pt}
\begin{abstract}
  Diffusion models have achieved remarkable success in generative modeling.
  Despite more stable training, the loss of diffusion models is not indicative of absolute data-fitting quality, since its optimal value is typically not zero but unknown, leading to the confusion between large optimal loss and insufficient model capacity.
  In this work, we advocate the need to estimate the optimal loss value for diagnosing and improving diffusion models.
  We first derive the optimal loss in closed form under a unified formulation of diffusion models, and develop effective estimators for it, including a stochastic variant scalable to large datasets with proper control of variance and bias.
  With this tool, we unlock the inherent metric for diagnosing training quality of mainstream diffusion model variants, and develop a more performant training schedule based on the optimal loss.
  Moreover, using models with 120M to 1.5B parameters, we find that the power law is better demonstrated after subtracting the optimal loss from the actual training loss, suggesting a more principled setting for investigating the scaling law for diffusion models.
\end{abstract}

\section{Introduction} \label{sec:intro}

Diffusion-based generative models~\citep{sohl2015deep,ho2020denoising,song2021score} have shown unprecedented capability in modeling high-dimensional distribution and have become the dominant choice in various domains. The attractive potential has 
incentivized advances in multiple dimensions, such as 
prediction targets~\citep{kingma2021variational,salimans2022progressive,lipman2023flow}, diffusion process design~\citep{karras2022elucidating,liu2023flow}, and training schedule design~\citep{nichol2021improved,kingma2023understanding,esser2024scaling}.

The success is largely benefited from the more stable training process. 
Nevertheless, the diffusion loss only reflects the \emph{relative} data-fitting quality for monitoring training process or comparing models under the same setting, 
while remains obscure for measuring the \emph{absolute} fit to the training data.
It is due to that 
the optimal loss of diffusion model, \ie, the lowest possible loss value that can be attained by any model, is actually not zero but \emph{unknown} beforehand. 
This introduces a series of inconveniences.
After the training converges, one still does not know whether the model is already close to oracle, or the remaining loss can be further reduced by tuning the model.
Practitioners have to rely on generating samples to evaluate diffusion models, which requires significant computational cost, and sampler configurations introduce distracting factors. 
The unknown optimal loss also makes it obscured to analyze and compare learning quality at different diffusion steps, impeding a principled design of training schedule.
Moreover, as the actual loss value is not fully determined by model capacity but also the unknown optimal loss as the base value, it poses a question on using the actual loss value alone for monitoring the scaling law of diffusion models.

In this work, we highlight the importance of estimating the optimal loss value, and develop effective estimation methods applicable to large datasets. 
Using this tool, we unlock new observations of data-fitting quality of diffusion models under various formulation variants, and demonstrate how the optimal loss estimate leads to more principled analysis and performant designs.
Specifically, 

\begin{itemize}[leftmargin=8pt, itemsep=-2pt, topsep=0pt]
  \item We reveal the indefiniteness of the optimal loss from its expression, then develop estimators for the optimal loss based on the expression. For large datasets, we design a scalable estimator based on dataset sub-sampling, with a delicate design to properly balance variance and bias.
  \item Using the estimator, we reveal the patterns of the optimal loss across diffusion steps on diverse datasets, and by comparison with the losses of mainstream diffusion models under a unified formulation, we find the characteristics of different diffusion formulation variants, and identify the diffusion-step region where the model still underfits compared to the optimal loss.
  \item From the analysis, we designed a principled training schedule for diffusion models, based on the gap between the actual loss and the optimal loss. Our training schedule improves the FID by $2\%$-$14\%$ (for EDM~\citep{karras2022elucidating} / FM~\citep{lipman2023flow}) on CIFAR-10, $7\%$-$25\%$ (for EDM / FM) on ImageNet-64, and $9\%$ (for LightningDiT~\citep{yao2025vavae}) on ImageNet-256.
  \item We challenge the conventional formulation to study neural scaling law for diffusion models. We propose using the loss gap as the measure for data-fitting quality. Using state-of-the-art diffusion models~\citep{karras2024analyzing} in various sizes from 120M to 1.5B on both ImageNet-64 and ImageNet-512, we find that our modification leads to better satisfaction of the power law.
\end{itemize}

We would mention that estimating the optimal loss is not meant to achieve it, which may render overfitting, but to introduce a metric for measuring the absolute fitness to a dataset (\appxref{generalization}). 
We hope this work could provide a profound understanding on diffusion model training, and ignite more principled analyses and improvements for diffusion models.

\subsection{Related Work}
\textbf{Optimal loss and solution of diffusion model.}
A related work by \citet{bao2022analytic,bao2022estimating} derived the optimal ELBO loss under discrete Gaussian reverse process, and used it to determine the optimal reverse Gaussian (co)variances and optimize the discrete diffusion steps. \citet{gu2023memorization} further studied the memorization behavior of diffusion models.
In contrast, we consider general cases and develop effective training-free estimators for the optimal loss value, and emphasize its principled role with important real examples in monitoring and diagnosing model training, designing training schedules, and studying the scaling law.
There are also some other works that made efforts to estimate the optimal solution~\citep{xu2023stable} using importance sampling. 
Although more scalable methods are proposed using fast KNN search~\citep{niedoba2024nearest}, 
their viability for estimating the optimal loss on large datasets remains unverified, as the optimal loss requires estimating two nested expectations (see \appxref{knn}).

\textbf{Training design of diffusion model.}
Due to the stochastic nature, intensive research efforts are paid to investigate diffusion model training in multiple directions such as noise schedules and loss weight. \citet{karras2022elucidating} presented a design space that clearly separates design choices, enabling targeted explorations on training configurations. \citet{kingma2023understanding} analyzed different diffusion objectives in a unified way and connect them via ELBO. \citet{esser2024scaling} conduct large-scale experiments to compare different training configurations and motivate scalable design choices for billion-scale models. Most works require large-scale compute for trial and error, due to the lack of a principled guideline for training schedule design based on the absolute data-fitting quality. 

\textbf{Scaling law study for diffusion model.}
Model scaling behaviors are of great interest in deep learning literature. In particular, the remarkable success of Large Language Models has been largely credited to the establishment of scaling laws~\citep{kaplan2020scaling,henighan2020scaling,hoffmann2022an}, which help to predict the performance of models as they scale in parameters and data. There also exist works that empirically investigate the scaling behavior of diffusion models~\citep{peebles2023scalable,li2024scalability,mei2025bigger,esser2024scaling}, and make attempts to explicitly formulate scaling laws for diffusion transformers~\citep{liang2024scaling}. However, training loss values are typically used as the metric in these works, which are not corrected by the optimal loss to reflect the true optimization gap, leading to biased analysis for scaling behaviors of diffusion models.

\section{Formulation of Diffusion Model} \label{sec:prelim}

Diffusion models perform generative modeling by leveraging a step-by-step transformation from an arbitrary data distribution $p_\data$ to a Gaussian distribution.
Sampling and density evaluation for the data distribution can be done by reversing this transformation process step by step from the Gaussian.
In general, the transformation of distribution is constructed by:
{\addtolength{\abovedisplayskip}{-2pt}
\addtolength{\belowdisplayskip}{-1pt}
\begin{align}
  \bfxx_t = \alpha_t \bfxx_0 + \sigma_t \bfeps, \quad t \in [0,T],
  \label{eqn:diffu-interp}
\end{align}}%
where $\bfxx_0 \sim p_\data$ is taken as a data sample, $\bfeps \sim p(\bfeps) := \clN(\bfzro, \bfI)$ is a Gaussian noise sample, and $\bfxx_t$ is the constructed random variable that defines the intermediate distribution $p_t$. The coefficients $\alpha_t$ and $\sigma_t$ 
satisfy $\alpha_0 = 1$, $\sigma_0 = 0$, and $\alpha_T \ll \sigma_T$, so that $p_0 = p_\data$ and $p_T = \clN(\bfzro, \sigma_T^2 \bfI)$ yield the desired distributions.
\eqnref{diffu-interp} gives
  $p(\bfxx_t \mid \bfxx_0) = \clN(\bfxx_t \mid \alpha_t \bfxx_0, \sigma_t^2 \bfI)$,
which can be achieved by 
a diffusion process expressed in the stochastic differential equation
$\dd \bfxx_t = a_t \bfxx_t \ud t + g_t \ud \bfww_t$ starting from $\bfxx_0 \sim p_0$,
where $a_t := (\log \alpha_t)'$, $g_t := \sigma_t \sqrt{(\log \sigma_t^2/\alpha_t^2)'}$,
and $\bfww_t$ denotes the Wiener process. The blessing of the diffusion-process formulation is that the reverse process can be given explicitly~\citep{anderson1982reverse}:
\begin{align}
  \dd \bfxx_s = -a_{T-s} \bfxx_s \ud s + g_{T-s}^2 \! \nabla \! \log p_{T-s}(\bfxx_s) \ud s + g_{T-s} \ud \bfww_s
\end{align}
from $\bfxx_{s=0} \sim p_T$, where $s := T-t$ denotes the reverse time.
Alternatively, the deterministic process given by the ordinary differential equation:
\begin{align}
  \dd \bfxx_s = -a_{T-s} \bfxx_s \ud s + \frac12 g_{T-s}^2 \nabla \log p_{T-s}(\bfxx_s) \ud s 
  \label{eqn:diffu-ode-rev}
\end{align}
produces the same distribution $p_{T-s}$ at each reverse diffusion step $s$. 
The only obstacle to simulating the reverse process for generation is the unknown term $\nabla \log p_t(\bfxx_t)$ called the score function. Noting that $p_t$ is produced by perturbing data samples with Gaussian noise, diffusion models employ a neural network model $\bfss_\bftheta(\bfxx_t,t)$ to learn the score function using the denoising score matching loss~\citep{vincent2011connection,song2021score}:
$J_t^\txts(\bftheta):=$
{\addtolength{\abovedisplayskip}{-2pt}
\addtolength{\belowdisplayskip}{-4pt}
\begin{align}
  \bbE_{p_0(\bfxx_0) p(\bfxx_t \mid \bfxx_0)} \lrVert{\bfss_\bftheta(\bfxx_t,t) \!-\! \nabla_{\bfxx_t} \! \log p(\bfxx_t | \bfxx_0)}^2
  \stackrel{\text{\eqnref{diffu-interp}}}{=} \bbE_{p_0(\bfxx_0) p(\bfeps)} \lrVert{\bfss_\bftheta(\alpha_t \bfxx_0 \!+\! \sigma_t \bfeps, t) + \bfeps\!/\!\sigma_t}^2.
  \label{eqn:losst-score}
\end{align}}%
To cover the whole diffusion process, loss weight $w_t^\txts$ and noise schedule $p(t)$ are introduced 
to optimize over all diffusion steps using the total loss
  $J(\bftheta) := \bbE_{p(t)} w_t^\txts J_t^\txts(\bftheta)$.

\textbf{Alternative prediction targets.}
Besides the above \emph{score prediction} target, diffusion models also adopt other prediction targets. 
\eqnref{losst-score} motivates the \emph{noise prediction} ($\bfeps$-prediction) target~\citep{ho2020denoising}
  $\bfeps_\bftheta(\bfxx_t,t) := -\sigma_t\bfss_\bftheta(\bfxx_t,t)$, 
which turns the loss into:
\begin{align}
  J_t^\txteps(\bftheta) := \bbE_{p_0(\bfxx_0)} \bbE_{p(\bfeps)} \lrVert{\bfeps_\bftheta(\alpha_t \bfxx_0 + \sigma_t \bfeps, t) - \bfeps}^2 
  = \sigma_t^2 J_t^\txts(\bftheta), 
  \label{eqn:losst-wt-eps}
\end{align}
It poses a friendly, bounded-scale learning target, and avoids the artifact at $t=0$ of $J_t^\txts(\bftheta)$. 
If formally solving $\bfxx_0$ from \eqnref{diffu-interp} and let
  $\bfxx_{0\bftheta}(\bfxx_t,t) := \frac{\bfxx_t - \sigma_t \bfeps_\bftheta(\bfxx_t,t)}{\alpha_t}$,
then we get the loss:
\begin{align}
  J_t^\txtx(\bftheta) := \bbE_{p_0(\bfxx_0)} \bbE_{p(\eps)} \lrVert{\bfxx_{0\bftheta}(\alpha_t \bfxx_0 + \sigma_t \bfeps, t) - \bfxx_0}^2 
  = (\sigma_t^4 / \alpha^2_t) J_t^\txts(\bftheta).
  \label{eqn:losst-wt-x0}
\end{align}
It holds the semantics of \emph{clean-data prediction} ($\bfxx_0$-prediction)~\citep{kingma2021variational,karras2022elucidating}, 
and can be viewed as denoising auto-encoders~\citep{vincent2008extracting,alain2014regularized} with multiple noise scales.
From 
the equivalent deterministic process, one can also derive the \emph{vector-field prediction} ($\bfvv$-prediction) target 
$\bfvv_\bftheta(\bfxx_t,t) := a_t \bfxx_t - \frac12 g_t^2 \bfss_\bftheta(\bfxx_t,t)$ 
with loss function
\begin{align}
  J_t^\txtv(\bftheta) \! := \bbE_{p_0(\bfxx_0)} \bbE_{p(\eps)} \lrVert{\bfvv_{\bftheta}(\alpha_t \bfxx_0 \!+\! \sigma_t \bfeps, t) - (\alpha_t' \bfxx_0 \!+\! \sigma_t' \bfeps)}^2 
  \!= (g_t^4 / 4) J_t^\txts(\bftheta). 
  \label{eqn:losst-wt-v}
\end{align}
It coincides with the velocity prediction~\citep{salimans2022progressive} and the flow matching formulation~\citep{lipman2023flow,liu2023flow} ($\alpha_t' \bfxx_0 \!+\! \sigma_t' \bfeps$ is the conditional vector field). See \appxref{alternative} for details.

\vspace{-1pt}
\section{Estimating the Optimal Loss Value for Diffusion Models} \label{sec:meth}
\vspace{-1pt}

The diffusion loss in various forms (Eqs.~\ref{eqn:losst-score}-\ref{eqn:losst-wt-v}) allows effective and stable learning of intractable targets that would otherwise require diffusion simulation or posterior estimation.
Nevertheless, as we will show from the expression of the optimal solution and loss (\secref{optexpr}), the optimal loss value is typically non-zero but unknown, obscuring the diagnosis and design of diffusion training.
We then develop practical estimators of the optimal loss value, starting from a standard one (\secref{optloss-full}) to stochastic but scalable estimators applicable to large datasets (\secref{optloss-subset}).
Using these tools, we investigate mainstream diffusion models 
against the optimal loss with a few new observations (\secref{diff-conv}).

\subsection{Optimal Solution and Loss Value of Diffusion Models} \label{sec:optexpr}

Despite the intuition, the names of the prediction targets of diffusion model introduced in \secref{prelim} might be misleading. Taking the clean-data prediction formulation as an example, it is informationally impossible to predict the exact clean data from its noised version~\citep{daras2023ambient}.
From the appearance of the loss functions (Eq.~(\ref{eqn:losst-score}-\ref{eqn:losst-wt-v})), the actual learning targets of the models are \emph{conditional expectations}~\citep{de2021diffusion,bao2022analytic,bao2022estimating}:
{\addtolength{\abovedisplayskip}{-0pt}
\addtolength{\belowdisplayskip}{-0pt}
\begin{align}
  & \bfss_\bftheta^\star(\bfxx_t,t) = \bbE_{p(\bfxx_0 \mid \bfxx_t)}[\nabla_{\bfxx_t} \log p(\bfxx_t \mid \bfxx_0)], \label{eqn:optsol-score} \quad
  && \bfeps_\bftheta^\star(\bfxx_t,t) = \bbE_{p(\bfeps \mid \bfxx_t)}[\bfeps], \label{eqn:optsol-eps} \\
  & \bfxx_{0\bftheta}^\star(\bfxx_t, t) = \bbE_{p(\bfxx_0 \mid \bfxx_t)}[\bfxx_0], \label{eqn:optsol-x0} \quad
  && \bfvv^\star_\bftheta(\bfxx_t,t) = \bbE_{p(\bfxx_0, \bfeps \mid \bfxx_t)}[\alpha_t' \bfxx_0 + \sigma_t' \bfeps], \label{eqn:optsol-v}
\end{align}}%
where the conditional distributions are induced from 
$p(\bfxx_0, \bfxx_t, \bfeps) := p_0(\bfxx_0) p(\bfeps) \delta_{\alpha_t \bfxx_0 + \sigma_t \bfeps}(\bfxx_t)$. For completeness, we detail the derivation in \appxref{opt-sol}.

Looking back into the loss functions, the model learns the conditional expectations over random samples from the joint distribution. Hence even at optimality, the loss still holds a conditional variance value.
Noting that the joint distribution hence the conditional variance depends on the data distribution, it would be more direct to write down the optimal loss value in the clean-data prediction formulation, which we formally present below: 
\vspace{-2pt}
\begin{theorem} \label{thm:optloss}
  The optimal loss value for the clean-data prediction target defined in \eqnref{losst-wt-x0} is:
  \begin{align}
    {J_t^\txtx}^\star
    = \underbrace{\bbE_{p_0(\bfxx_0)} \lrVert{\bfxx_0}^2}_{=: A} - \underbrace{\bbE_{p(\bfxx_t)} \lrVert{\bbE_{p(\bfxx_0 \mid \bfxx_t)}[\bfxx_0]}^2}_{=: B_t}, \label{eqn:optlosst-x0} \quad 
    J^\star = \bbE_{p(t)} w_t^\txtx {J_t^\txtx}^\star.
  \end{align}
\end{theorem}
\looseness=-1See \appxref{proof-optloss} for proof. For other prediction targets, the optimal loss value can be calculated based on their relations in \eqnsref{losst-wt-eps,losst-wt-x0,losst-wt-v}.
The expression is derived from 
    $J_t^{\lowsup{\txtx^\star}} =$ 
    $\bbE_{p_t(\bfxx_t)} \tr\Cov_{p(\bfxx_0 \mid \bfxx_t)}[\bfxx_0]$,
which is indeed an averaged conditional variance, 
hence takes a positive value unless at $t=0$ or when $p_0(\bfxx_0)$ concentrates only on a single point. 
For a small $t$, $\bfxx_t$ is close to the $\bfxx_0$ that produces it via noising and is unlikely to be produced by other $\bfxx_0$, so $p(\bfxx_0 \mid \bfxx_t)$ has a low variance and $J_t^{\lowsup{\txtx^\star}}$ approaches zero as $t \to 0$.
$J_t^{\lowsup{\txtx^\star}}$ increases with $t$, until for a sufficiently large $t$, $\bfxx_t$ becomes dominated by the noise (see \eqnref{diffu-interp}) hence has a diminishing correlation with $\bfxx_0$, so $p(\bfxx_0 \mid \bfxx_t) \approx p_\data(\bfxx_0)$ and $J_t^{\lowsup{\txtx^\star}} \approx \tr\Cov_{p_\data(\bfxx_0)} [\bfxx_0]$ 
approaches the data variance.
Note that this optimal loss only depends on dataset and diffusion settings, but not on model architectures and parameterization.

\subsection{Empirical Estimator for the Optimal Loss Value} \label{sec:optloss-full}
To estimate the optimal loss value using \eqnref{optlosst-x0}
on a dataset $\{\bfxx_0^{(n)}\}_{n\in[N]}$, where $[N] := \{1, \cdots, N\}$,
the first term $A := \bbE_{p_0(\bfxx_0)} \lrVert{\bfxx_0}^2$ 
can be directly estimated through one pass:
{\addtolength{\abovedisplayskip}{-2pt}
\addtolength{\belowdisplayskip}{-2pt}
\begin{align}
  \Ah = \frac1N \sum\nolimits_{n\in[N]} \lrVert*{\bfxx_0^{(n)}}^2.
  \label{eqn:Ahat}
\end{align}}%
However, the second term $B_t := \bbE_{p(\bfxx_t)} \lrVert{\bbE_{p(\bfxx_0 \mid \bfxx_t)}[\bfxx_0]}^2$ requires estimating two nested expectations that cannot be reduced.
The inner expectation is taken under the posterior distribution $p(\bfxx_0 \mid \bfxx_t)$ which cannot be sampled directly. By expanding the distribution using tractable ones (Bayes rule), the term can be reformulated as: $\bbE_{p(\bfxx_0 \mid \bfxx_t)}[\bfxx_0]=$
{\addtolength{\abovedisplayskip}{-2pt}
\addtolength{\belowdisplayskip}{-2pt}
\begin{align}
  \frac{\bbE_{p(\bfxx_0)} [\bfxx_0 p(\bfxx_t \mid \bfxx_0)]}{\bbE_{p(\bfxx_0)} [p(\bfxx_t \mid \bfxx_0)]}
  \!=\! \frac{\bbE_{p(\bfxx_0)} [\bfxx_0 K_t(\bfxx_t, \bfxx_0)]}{\bbE_{p(\bfxx_0)} [K_t(\bfxx_t, \bfxx_0)]},
  \text{ where } K_t(\bfxx_t, \bfxx_0) \!:=\! \exp\lrbrace*[\bigg]{-\frac{\lrVert{\bfxx_t - \alpha_t \bfxx_0}^2}{2 \sigma_t^2}}, \!\! \label{eqn:def-K}
\end{align}}%
according to \eqnref{diffu-interp}.
Both the numerator and denominator in the expression can then be estimated on the dataset. 
The outer expectation can be estimated by averaging over a set of independent and identically distributed (IID) samples $\{\bfxx_t^{\lowsup{(m)}}\}_{m\in[M]}$ following \eqnref{diffu-interp}, where each sample is produced by an independently (\ie, with replacement) randomly selected data sample $\bfxx_0$ and a randomly drawn noise sample $\bfeps \sim \clN(\bfzro, \bfI)$.
The estimator for the second term is then:
{\addtolength{\abovedisplayskip}{-3pt}
\addtolength{\belowdisplayskip}{-2pt}
\begin{align}
  \Bh_t = \frac1M \sum_{m\in[M]} \lrVert{\frac{\sum_{n\in[N]} \bfxx_0^{(n)} K_t(\bfxx_t^{(m)}, \bfxx_0^{(n)})}{\sum_{n'\in[N]} K_t(\bfxx_t^{(m)}, \bfxx_0^{(n')})}}^2.
  \label{eqn:Bhat-full}
\end{align}}%
The outer expectation can be conducted sequentially until the estimation converges. This typically takes $M$ up to two to three times of $N$.  
See \appxref{cdol-convergence} for details.

\vspace{-2pt}
\subsection{Scalable Estimators for Large Datasets} \label{sec:optloss-subset}
\vspace{-2pt}

Although asymptotically unbiased (\appxref{IS}), the $\Bh$ estimator in \eqnref{Bhat-full} incurs a quadratic complexity in dataset size $N$, which is unaffordably costly for large datasets which are ubiquitous in modern machine learning tasks. For a scalable estimator, dataset sub-sampling is an effective strategy 
to reduce computational complexity.
Instead of using independent random subsets to estimate the numerator and denominator separately, we adopt the self-normalized importance sampling (SNIS) estimator~\citep{robert1999monte,kroese2012monte} (see \appxref{IS} for background):
{\addtolength{\abovedisplayskip}{-2pt}
\addtolength{\belowdisplayskip}{-2pt}
\begin{align}
  \Bh_t^\SNIS := \frac1M \sum\limits_{m\in[M]} \lrVert{\frac{\sum_{l\in[L]} \bfxx_0^{(l)} K_t(\bfxx_t^{(m)}, \bfxx_0^{(l)})}{\sum_{l'\in[L]} K_t(\bfxx_t^{(m)}, \bfxx_0^{(l')})}}^2.
  \label{eqn:Bhat-snis}
\end{align}}%
It uses the same randomly selected (with replacement) subset $\{\bfxx_0^{\lowsup{(l)}}\}_{l\in[L]}$, where $L \ll N$, for both the numerator and denominator, which leads to more stable estimates. 
One can repeat drawing the random data subset $\{\bfxx_0^{\lowsup{(l)}}\}_{l\in[L]}$ and calculate the estimate until convergence. 

A specialty for estimating the diffusion optimal loss is that, for a given $\bfxx_t^{\lowsup{(m)}}$ sample, when $\sigma_t$ is small, the weight term $K_t(\bfxx_t^{\lowsup{(m)}}, \bfxx_0^{\lowsup{(l)}})$ is dominated by the $\bfxx_0$ sample closest to $\bfxx_t^{\lowsup{(m)}} / \alpha_t$ (see \eqnref{def-K}), which could be missed in the randomly selected subset $\{\bfxx_0^{\lowsup{(l)}}\}_{l\in[L]}$, thus incurring a large variance.
Fortunately, we know that by construction (\eqnref{diffu-interp}), each $\bfxx_t^{\lowsup{(m)}}$ sample is produced from a data sample $\bfxx_0^{\lowsup{(n_m)}}$ and a noise sample $\bfeps^{\lowsup{(m)}}$ using $\bfxx_t^{\lowsup{(m)}} = \alpha_t \bfxx_0^{\lowsup{(n_m)}} + \sigma_t \bfeps^{\lowsup{(m)}}$, and when $\sigma_t$ is small, $\alpha_t$ is also close to 1 (\secref{prelim}), indicating that $\bfxx_0^{\lowsup{(n_m)}}$ is likely the most dominant $\bfxx_0$ sample and should be included in the subset $\{\bfxx_0^{\lowsup{(l)}}\}_{l\in[L]}$.
This can be simply implemented by 
constructing the $\{\bfxx_t^{\lowsup{(\mmt)}}\}_{\mmt\in[M]}$ samples by independently (\ie, with replacement) drawing a sample $\bfxx_0^{\lowsup{(l_\mmt)}}$ from the subset $\{\bfxx_0^{\lowsup{(l)}}\}_{l\in[L]}$ and setting $\bfxx_t^{\lowsup{(\mmt)}} = \alpha_t \bfxx_0^{\lowsup{(l_\mmt)}} + \sigma_t \bfeps^{\lowsup{(\mmt)}}$ 
with $\bfeps^{\lowsup{(\mmt)}} \sim \clN(\bfzro, \bfI)$.
We call it the Diffusion Optimal Loss (DOL) estimator:
{\addtolength{\abovedisplayskip}{-6pt}
\addtolength{\belowdisplayskip}{-2pt}
\begin{align}
  \Bh_t^\DOL := \frac1M \sum_{\mmt\in[M]} \lrVert{\frac{\sum_{l\in[L]} \bfxx_0^{(l)} K_t(\bfxx_t^{(\mmt)}, \bfxx_0^{(l)})}{\sum_{l'\in[L]} K_t(\bfxx_t^{(\mmt)}, \bfxx_0^{(l')})}}^2.
  \label{eqn:Bhat-dol}
\end{align}}%
Nevertheless, this introduces an artificial correlation between $\bfxx_t$ and $\bfxx_0$ samples: it becomes more probable to calculate $K_t$ for $(\bfxx_t, \bfxx_0)$ pairs where $\bfxx_t$ is constructed from $\bfxx_0$. Such pairs have larger $K_t$ values, hence over-estimating $B_t$ and under-estimating the optimal loss $J_t^{\lowsup{\txtx^\star}}$. 
We introduce a simple correction by down-weighting such pairs with a coefficient $C$, and call it the corrected DOL (cDOL) estimator:
{\addtolength{\abovedisplayskip}{-6pt}
\addtolength{\belowdisplayskip}{-2pt}
\begin{align}
  \Bh_t^\cDOL := \frac1M \! \sum_{\mmt\in[M]} \lrVert{\frac{\sum_{l\in[L], l \ne l_\mmt} \bfxx_0^{(l)} K_t(\bfxx_t^{(\mmt)}\!, \bfxx_0^{(l)}) + \frac1C \bfxx_0^{(l_\mmt)} K_t(\bfxx_t^{(\mmt)}\!, \bfxx_0^{(l_\mmt)})}{\sum_{l'\in[L], l' \ne l_\mmt} K_t(\bfxx_t^{(\mmt)}\!, \bfxx_0^{(l')}) + \frac1C K_t(\bfxx_t^{(\mmt)}\!, \bfxx_0^{(l_\mmt)})}}^2 \!, \!
  \label{eqn:Bhat-cdol}
\end{align}}%
where $l_\mmt$ indexes the sample in $\{\bfxx_0^{\lowsup{(l)}}\}_{l\in[L]}$ that is used to construct $\bfxx_t^{\lowsup{(\mmt)}}$.
To formalize the effectiveness, we provide the following theoretical result on the cDOL estimator: 
\begin{theorem}\label{thm:correction}
    The $\Bh_t^\cDOL$ estimator with subset size $L$ has the same expectation as the $\Bh_t^\SNIS$ estimator with subset size $L-1$ when $M\to\infty, C\to\infty$,
    hence is a consistent estimator. 
\end{theorem}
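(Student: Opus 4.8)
The plan is to push the two limits through in the order $M\to\infty$ and then $C\to\infty$, recognize the surviving expression as the large-$M$ limit of $\Bh_t^\SNIS$ at subset size $L-1$, and finally invoke the textbook consistency of self-normalized importance sampling. Throughout I would assume $p_\data$ has finite second moment, which is the only integrability hypothesis I expect to need. For the first limit, note that the $M$ summands of \eqnref{Bhat-cdol} are IID — each a deterministic function of the triple $\big(\{\bfxx_0^{(l)}\}_{l\in[L]},\, l_\mmt,\, \bfeps^{(\mmt)}\big)$ — and that, writing $R_C$ for the bracketed ratio in \eqnref{Bhat-cdol}, $R_C$ is a convex combination of the $\bfxx_0^{(l)}$ with nonnegative weights summing to one, so $\lrVert{R_C}\le\max_{l\in[L]}\lrVert{\bfxx_0^{(l)}}$ and each summand has finite mean. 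The strong law of large numbers then gives $\Bh_t^\cDOL\to\bbE\,\lrVert{R_C}^2$ almost surely as $M\to\infty$, where the expectation is over the subset draw, the uniform index $l_\mmt\in[L]$, and $\bfeps^{(\mmt)}\sim\clN(\bfzro,\bfI)$; note this is exactly the mean of $\Bh_t^\cDOL$ at any finite $M$.

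Next I would let $C\to\infty$. The two $\frac1C$-scaled self-pair terms in \eqnref{Bhat-cdol} vanish, so $R_C\to R_\infty:=\frac{\sum_{l\ne l_\mmt}\bfxx_0^{(l)}K_t(\bfxx_t^{(\mmt)},\bfxx_0^{(l)})}{\sum_{l'\ne l_\mmt}K_t(\bfxx_t^{(\mmt)},\bfxx_0^{(l')})}$ pointwise, with $\bfxx_t^{(\mmt)}=\alpha_t\bfxx_0^{(l_\mmt)}+\sigma_t\bfeps^{(\mmt)}$; together with the uniform bound $\lrVert{R_C}^2\le\max_{l\in[L]}\lrVert{\bfxx_0^{(l)}}^2$, dominated convergence yields $\bbE\,\lrVert{R_C}^2\to\bbE\,\lrVert{R_\infty}^2$. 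The crucial structural point is that, because the length-$L$ subset is sampled with replacement, conditioning on the value of $l_\mmt$ leaves the remaining $L-1$ entries $\{\bfxx_0^{(l)}:l\ne l_\mmt\}$ IID $\sim p_\data$ and independent of the pair $(\bfxx_0^{(l_\mmt)},\bfeps^{(\mmt)})$, hence independent of $\bfxx_t^{(\mmt)}$, whose marginal law is precisely $p_t$ by \eqnref{diffu-interp}. Relabelling those $L-1$ leftover samples as $\{\bfxx_0^{(l)}\}_{l\in[L-1]}$, we obtain
\[
  \bbE\,\lrVert{R_\infty}^2 \;=\; \bbE_{p(\bfxx_t)}\, \bbE_{p_\data^{\otimes(L-1)}}\, \lrVert{\frac{\sum_{l\in[L-1]}\bfxx_0^{(l)}K_t(\bfxx_t,\bfxx_0^{(l)})}{\sum_{l'\in[L-1]}K_t(\bfxx_t,\bfxx_0^{(l')})}}^2 ,
\]
which is exactly the $M\to\infty$ limit (equivalently the mean) of $\Bh_t^\SNIS$ at subset size $L-1$, whose $\bfxx_t$ samples are by construction drawn from $p_t$ independently of the data subset (\secref{optloss-full}). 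This establishes the asserted equality of expectations.

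For consistency I would then appeal to the standard asymptotics of self-normalized importance sampling: for $p_t$-a.e.\ $\bfxx_t$, the numerator and denominator inside $R_\infty$ are strongly consistent as $L\to\infty$ for $\bbE_{p_\data}[\bfxx_0K_t(\bfxx_t,\bfxx_0)]$ and the strictly positive $\bbE_{p_\data}[K_t(\bfxx_t,\bfxx_0)]$, so by the continuous mapping theorem $R_\infty\to\bbE_{p(\bfxx_0\mid\bfxx_t)}[\bfxx_0]$ almost surely (cf.\ \appxref{IS}). Combining this with a uniform-integrability bound on $\lrVert{R_\infty}^2$ — which Jensen's inequality controls by an SNIS average of the $\lrVert{\bfxx_0^{(l)}}^2$, itself consistent in mean for $\bbE_{p(\bfxx_0\mid\bfxx_t)}\lrVert{\bfxx_0}^2$, finite for $p_t$-a.e.\ $\bfxx_t$ since its $p(\bfxx_t)$-average equals $A=\bbE_{p_\data}\lrVert{\bfxx_0}^2<\infty$ — yields $\bbE\,\lrVert{R_\infty}^2\to\bbE_{p(\bfxx_t)}\lrVert{\bbE_{p(\bfxx_0\mid\bfxx_t)}[\bfxx_0]}^2=B_t$ (see \eqnref{optlosst-x0}). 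Chaining the limits $M\to\infty$, $C\to\infty$, $L\to\infty$ then gives $\Bh_t^\cDOL\to B_t$, i.e.\ $\Bh_t^\cDOL$ is consistent.

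I expect the crux to be the independence claim in the second paragraph — that deleting the random index $l_\mmt$ from a with-replacement subset returns $L-1$ genuinely fresh IID draws independent of the synthesized $\bfxx_t^{(\mmt)}$. This is exactly the property the $\frac1C$ down-weighting is engineered to exploit, and it is what converts the deliberately biased DOL construction back into an honest SNIS estimator, only at the reduced subset size. The rest is bookkeeping, with one mild care: the $L\to\infty$ passage needs a uniform-integrability argument (via the Jensen bound above) rather than a single dominating function, since $\max_{l\in[L-1]}\lrVert{\bfxx_0^{(l)}}^2$ grows with $L$; the $M\to\infty$ and $C\to\infty$ interchanges, by contrast, are handled by the fixed dominating variable $\max_{l\in[L]}\lrVert{\bfxx_0^{(l)}}^2$ once $p_\data$ has finite second moment.
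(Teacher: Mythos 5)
Your proof is correct in substance and follows the same skeleton as the paper's: SLLN for $M\to\infty$, a symmetry/independence argument identifying the $C\to\infty$ limit with the SNIS mean at subset size $L-1$, and SNIS asymptotics for consistency. Two points of difference are worth noting.

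First, on the route to the expectation identity: you push $C\to\infty$ \emph{inside} the expectation via dominated convergence (with $\max_l\lrVert{\bfxx_0^{(l)}}^2$ as the dominating variable), then use the with-replacement independence structure to recognize the result as the $\SNIS$ mean at size $L-1$; the paper instead takes the expectation over the subset first, exploits exchangeability to reduce the uniform sum over $i\in[L]$ to the single term $i=L$, and only then sends $C\to\infty$. These are two equivalent ways of justifying the same interchange of limit and expectation, and your DCT argument is clean given the convex-combination bound on $R_C$.

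Second, on consistency: you obtain it by chaining $M\to\infty$, $C\to\infty$, $L\to\infty$ and inheriting consistency from the $\SNIS$ estimator, whereas the paper's appendix actually proves a stronger fact — that the cDOL estimator is consistent for \emph{every fixed} $C>0$, by dividing the sums by $L-1$ and showing directly that each summand $X_{\mmt}\to\bbE_{p(\bfxx_0\mid\bfxx_t^{(\mmt)})}[\bfxx_0]$ a.s.\ as $L\to\infty$. Your route is a valid reading of the theorem's ``hence,'' and your uniform-integrability/Jensen device for the $L\to\infty$ passage is a sensible way to justify the $L^1$ convergence the paper leaves tacit; but it establishes consistency only along the specific iterated limit, not the $\forall C>0$ statement the paper actually proves.

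One small imprecision: when you invoke the SLLN for $M\to\infty$, the $M$ summands are i.i.d.\ only \emph{conditionally} on the fixed subset $\{\bfxx_0^{(l)}\}_{l\in[L]}$; the a.s.\ limit is therefore the conditional expectation given the subset, a random variable, not the unconditional $\bbE\lrVert{R_C}^2$ that you state. This does not affect the downstream argument, since what enters the theorem is indeed the unconditional expectation (which, as you correctly observe, is already independent of $M$), but the a.s.-limit sentence as written conflates the two.
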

See \appxref{proof-cor} for proof. Note that the first terms in the numerator and denominator are unbiased, but the second terms introduce biases due to the artificial correlation between $\bfxx_t$ and $\bfxx_0$ samples. The DOL estimator in \eqnref{Bhat-dol} amounts to using $C=1$, which suffers from the biases. The bias can be reduced using $C > 1$ in the cDOL estimator. On the other hand, the second terms become the dominant components at small $t$ for estimating the numerator and denominator, respectively. Always including them using a finite $C$ hence reduces estimation variance.
The complete process of the cDOL estimator is concluded in \algref{subset-estimator-C} in \appxref{cdol-convergence}.

\subsection{Estimation Results of Optimal Loss Values} \label{sec:diff-conv} 

\begin{figure*}
  \vspace{-4pt}
  \begin{subfigure}[b]{0.34\textwidth}
    \centering
    \includegraphics[width=\textwidth]{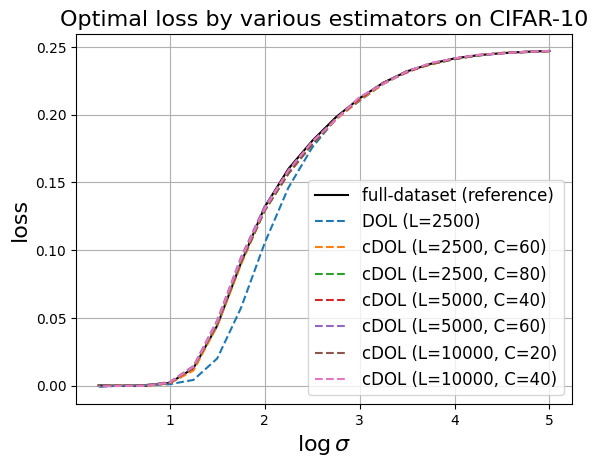}
    \captionsetup{skip=2pt}
    \caption{}
  \end{subfigure}
  \hspace{-5pt}
  \begin{subfigure}[b]{0.34\textwidth}
    \centering
    \includegraphics[width=\textwidth]{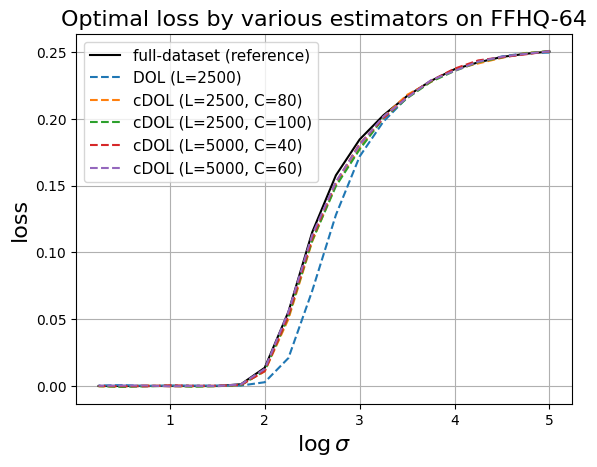}
    \captionsetup{skip=2pt}
    \caption{}
  \end{subfigure}
  \hspace{-8pt}
  \raisebox{0pt}{%
  \begin{subfigure}[b]{0.33\textwidth}
    \centering
    \includegraphics[width=\textwidth]{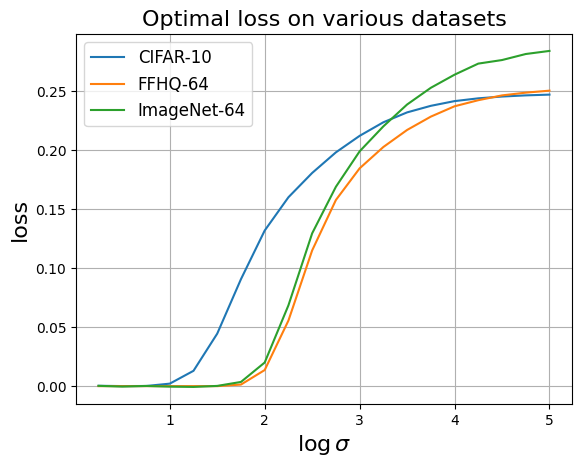}
    \captionsetup{skip=2pt}
    \caption{}
  \end{subfigure}}%
  \vspace{-4pt}
  \caption{Estimation results of optimal loss value.
  \textbf{(a,b)} Noise-scale-wise optimal loss estimates by the DOL (Eqs.~\ref{eqn:Ahat},~\ref{eqn:Bhat-dol}) and the corrected DOL (cDOL) (Eqs.~\ref{eqn:Ahat},~\ref{eqn:Bhat-cdol}) estimators, with the full-dataset estimate (Eqs.~\ref{eqn:Ahat},~\ref{eqn:Bhat-full}) as reference, on the \textbf{(a)} CIFAR-10 and \textbf{(b)} FFHQ-64 datasets.
  \textbf{(c)} Noise-scale-wise optimal loss on various datasets in different scales. 
  Figures are plotted for the $\bfxx_0$-prediction loss under the VE process.
  }
  \vspace{-10pt}
  \label{fig:opt-loss-cifar}
\end{figure*}

We now provide empirical results of diffusion optimal loss estimates on popular datasets. 
We first compare the scalable estimators on CIFAR-10~\citep{krizhevsky2009learning} and FFHQ-64~\citep{karras2019style} (\figref{opt-loss-cifar}(a,b)), whose relatively small sizes allow the full-dataset estimate by \eqnsref{Ahat,Bhat-full}, providing a reference for the scalable estimators. With the best scalable estimator identified (from \figref{var-analysis}), we apply it to the much larger ImageNet-64~\citep{krizhevsky2012imagenet} dataset, and analyze the optimal loss pattern (\figref{opt-loss-cifar}(c)).

As different prediction targets (\secref{prelim}) and diffusion processes (\secref{comp-sched} below) can be converted to each other, we choose the clean-data prediction target and variance exploding (VE) process ($\alpha_t \equiv 1$)~\citep{song2019generative,song2021score} to present the diffusion optimal loss.
We plot the optimal loss for each diffusion step, which is marked by $\log \sigma$ to decouple the arbitrariness in the time schedule $\sigma_t$ (as advocated by~\citep{karras2022elucidating}; the same $\sigma$ indicates the same distribution at that step of diffusion).
All the scalable estimators repeat data subset sampling until the estimate converges. See \appxref{cdol-convergence} for settings and discussions on efficiency.

\begin{wrapfigure}{r}{0.37\textwidth}
    \centering
    \vspace{-8pt}
    \includegraphics[width=0.38\textwidth]{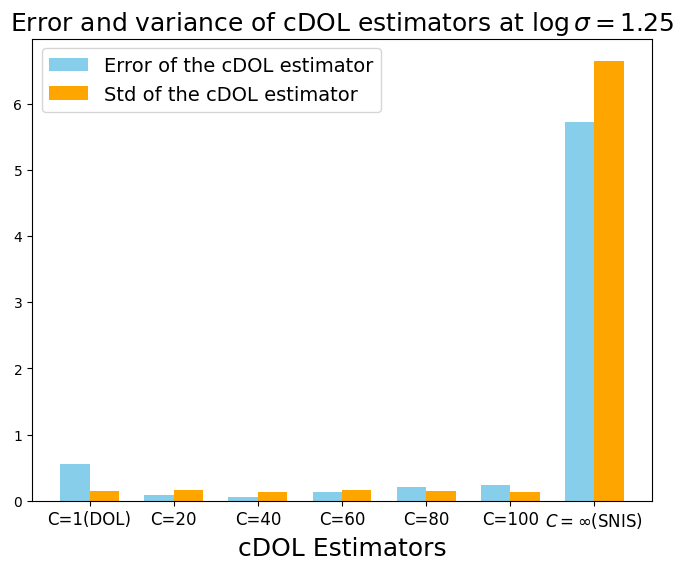}
    \caption{Error and variance of cDOL estimates using various $C$ values (including DOL and SNIS as extreme cases) for the optimal loss at $\log\sigma=1.25$ on CIFAR-10.}
    \label{fig:var-analysis}
    \vspace{-10pt}
\end{wrapfigure}

\textbf{Comparison among the scalable estimators.}
From \figref{opt-loss-cifar}(a,b), we can see that the DOL estimator indeed under-estimates the optimal loss as we pointed out, especially at intermediate diffusion steps. The cDOL estimator can effectively mitigate the bias, and stays very close to the reference under diverse choices of $C$.
The insensitivity of the cDOL estimator w.r.t $C$ can be understood as that, for small $t$ (equivalently, $\sigma$), both the numerator and denominator are dominated by the $C$-corrected terms, in which $C$ cancels out, and for large $t$, the $K_t(\bfxx_t^{\lowsup{(\mmt)}}\!\!, \bfxx_0^{\lowsup{(l_\mmt)}})$ term is in the same scale as other terms hence is overwhelmed when compared with the summation.

To better analyze the behavior of the estimators, we zoom in on their estimation error and standard deviation. \figref{var-analysis} presents the results at an intermediate $\log \sigma$ where the estimation is more challenging.
The result confirms that the variance increases with $C$.
Particularly, at $C=\infty$ which corresponds to the SNIS estimator (\thmref{correction}), it is hard to sample the dominating cases for the estimate, leading to a large variance, and a significantly large estimation error.
At the $C=1$ end which corresponds to the DOL estimator, although the variance is smaller, its bias still leads it to a large estimation error.
The cDOL estimator with $C$ in between achieves consistently low estimation error.
Empirically, a preferred $C$ is around $4N/L$. 
The subset size $L$ can be taken to fully utilize memory. We also compare our cDOL estimators with prior estimators for optimal \emph{solution}; see \appxref{knn}. 

\textbf{The pattern of optimal loss.}
From \figref{opt-loss-cifar}(c), we observe that the optimal loss $J_t^{\lowsup{\txtx^\star}}$ increases monotonically with the noise scale $\sigma$ on all the three datasets.
The optimal loss is close to zero only when the noise scale $\sigma$ is less than a \emph{critical point} $\sigma^\star$,
in which situation the noisy samples stay so close to their corresponding clean sources that they are unlikely to intersect with each other, hence preserve the information of the clean samples, allowing the model to perform a nearly perfect denoising.
We can see that the critical point $\sigma^\star$ depends on the dataset. CIFAR-10 achieves the minimal $\sigma^\star$, since it has the lowest image resolution (32$\times$32), \ie, the lowest data-space dimension, where the data samples appear less sparse hence easier to overlap after isotropic noise perturbation. Both FFHQ-64 and ImageNet-64 have 64$\times$64 resolution, but ImageNet-64 is larger, hence data samples are easier to overlap, leading to a smaller $\sigma^\star$.

Beyond the critical point, the optimal loss takes off quickly. The positive value indicates the intrinsic difficulty of the denoising task, where even an oracle denoiser would be confused.
The increase trend converges for sufficiently large noise scale $\sigma$, which meets our analysis under \thmref{optloss} that $J_t^{\lowsup{\txtx^\star}}$ converges to the data variance. As ImageNet-64 contains more diverse samples (images from more classes), it has a larger data variance, hence converges to a higher value than the other two.

\section{Analyzing and Improving Diffusion Training Schedule with Optimal Loss} \label{sec:training}
From the training losses in \secref{prelim}, the degree of freedom for the training strategy is the \emph{noise schedule} $p(t)$ and the \emph{loss weight} $w_t$, collectively called the training schedule.
In the literature, extensive works~\citep{ho2020denoising,song2021score,karras2022elucidating,kingma2023understanding,esser2024scaling} have designed training schedules for various prediction targets and diffusion processes individually, based on the analysis on the loss scale over diffusion steps.
Here, we argue that analyzing the \emph{gap} between the actual loss and the optimal loss would be a more principled approach, since it is the gap but not the loss itself that reflects the data-fitting insufficiency and the potential for improvement.
Under this view, we first analyze and compare the loss gap of mainstream diffusion works on the same ground (\secref{comp-sched}), identifying new patterns that are related to inference (\ie, generation) performance. We then develop a new training schedule based on the observation (\secref{new-sched}). 

\subsection{Analyzing Training Schedules through Optimal Loss} \label{sec:comp-sched}

Existing training schedules are developed under different diffusion processes and prediction targets. For a unified comparison on the same ground, we start with the equivalence among the formulations and convert them to the same formulation. As explained in \secref{diff-conv}, we use the noise scale $\sigma$ in place of $t$ to mark the diffusion step to decouple the choice of time schedule $\sigma_t$.

\textbf{Equivalent conversion among diffusion formulations.}
\secref{prelim} has shown the equivalence and conversion among prediction targets. We note that different diffusion processes in the form of \eqnref{diffu-interp} can also be equivalently converted to each other. Particularly, the variance preserving (VP) process ($\alpha_\sigma = \sqrt{1-\sigma^2}$)~\citep{sohl2015deep,ho2020denoising} and the flow matching (FM) process ($\alpha_\sigma = 1-\sigma$)~\citep{lipman2023flow,liu2023flow} can be converted to the variance exploding (VE) process ($\alpha_\sigma \equiv 1$)~\citep{song2019generative,song2021score} by
$\bfxx_\sigma^\text{VE} := \frac{\bfxx_\sigma}{\alpha_\sigma}$, since
$\bfxx_\sigma^\text{VE} = \bfxx_0 + \frac{\sigma}{\alpha_\sigma} \bfeps$ by \eqnref{diffu-interp}, and $\bfxx_0 = \bfxx_0^\text{VE}$.
The correspondence of diffusion step is given by $\sigma^\text{VE} = \frac{\sigma}{\alpha_\sigma}$.
With this fact, various diffusion models can be viewed as different parameterizations of the $\bfxx_0$-prediction under the VE process~\citep{karras2022elucidating}, where the parameterization is formulated by:
\begin{align}
  \bfxx_{0\bftheta}(\bfxx, \sigma) = c^\cskip_\sigma \bfxx + c^\cout_\sigma \bfF_\bftheta(c^\cin_\sigma \bfxx,c^\cnoise_\sigma),
  \label{eqn:edm-precond}
\end{align}
where $\bfxx_{0\bftheta}$, $\bfxx$, and $\sigma$ are the $\bfxx_0$-prediction model, the diffusion variable, and the noise scale under the VE process, and $\bfF_\bftheta(\cdot,\cdot)$ represents the bare neural network used for the original prediction target and diffusion process.
The precondition coefficients $c^{\cdot}_\sigma$ are responsible for the conversion.
Their instances for reproducing mainstream diffusion models are listed in \tabref{conversion}, where the converted $w_\sigma$ and $p(\sigma)$ from the original works are also listed.
See \appxref{conversion} for details.
For EDM~\citep{karras2022elucidating}, the precondition coefficients are not derived from a conversion but directly set to satisfy the input-output unit variance principle. This leads to a new prediction target 
we call the $\bfF$-prediction.

\begin{table*}[t]
\centering
\small
\vspace{-4pt}
\caption{Viewing mainstream diffusion models under the same formulation as $\bfxx_0$-prediction under the VE process, following \eqnref{edm-precond}.
Each diffusion model is labeled by ``diffusion process''-``prediction target'' (``common name''). 
}
\vspace{-7pt}
\label{tab:conversion}
\setlength\tabcolsep{2pt}
\renewcommand{\arraystretch}{1.1}
\resizebox{1.00\textwidth}{!}{
\begin{tabular}{lcccccc}
\toprule
Formulations & $c^\cskip_\sigma$ & $c^\cout_\sigma$ & $c^\cin_\sigma$ & $c^\cnoise_\sigma$& $w_\sigma$ & $p(\sigma)$ \\ \midrule
VP-$\bfeps$ (DDPM)~\citep{ho2020denoising} & 1 & $-\sigma$ & $\frac1{\sqrt{1+\sigma^2}}$ & $999\, t(\sigma)$ & $\frac1{\sigma^2}$ & \tabincell{c}{$t \sim \clU(10^{-5},1)$, \\ $\sigma \!=\! \sqrt{\! e^{\beta_{\min} t + \frac12 (\beta_{\max} - \beta_{\min}) t^2} \!-\! 1}$} \\
VE-$\bfF$ (EDM)~\citep{karras2022elucidating} & $\frac{\sigma_\data^2}{\sigma^2+\sigma_\data^2}$ & $\frac{\sigma_\data\sigma}{\sqrt{\sigma^2+\sigma_\data^2}}$ & $\frac1{\sqrt{\sigma^2+\sigma_\data^2}}$ & $\frac14 \log \sigma$ & $\frac{\sigma^2+\sigma_\data^2}{\sigma^2\sigma_\data^2}$ & $\log \sigma \sim \clN(P_{\text{mean}}, P_{\text{std}}^2)$ \\
VE-$\bfeps$ (NCSN)~\citep{song2021denoising} & 1 & $\sigma$ & 1 & $\log \frac{\sigma}{2}$ & $\frac1{\sigma^2}$ & $\log\sigma\sim\clU(\log\sigma_{\text{min}},\log\sigma_{\max})$\\
FM-$\bfvv$ (FM)~\citep{lipman2023flow} & $\frac{1}{1+\sigma}$ & $-\frac{\sigma}{1+\sigma}$ & $\frac{1}{1+\sigma}$ & $\frac{\sigma}{1+\sigma}$ & $(\frac{1+\sigma}{\sigma})^2$ & $t\sim \clU(0,1),\sigma=\frac{t}{1-t}$ \\
FM-$\bfvv$ (SD3)~\citep{esser2024scaling} & $\frac{1}{1+\sigma}$ & $-\frac{\sigma}{1+\sigma}$ & $\frac{1}{1+\sigma}$ & $\frac{\sigma}{1+\sigma}$ & $(\frac{1+\sigma}{\sigma})^2$ & $\log \sigma\sim \clN(0, 1)$ \\
\bottomrule
\end{tabular}}
\vspace{-8pt}
\end{table*}

\textbf{Comparison between optimal loss gaps.}
Using the above equivalent convention, we convert the actual training loss of diffusion models for various prediction targets under various diffusion processes to the $\bfxx_0$-prediction loss under the VE process, which serves as a unified metric on the same ground. 
We conduct the comparison on the CIFAR-10 dataset, and compare the gap between their actual loss and the optimal loss, which has been estimated and presented in \figref{opt-loss-cifar}(a; the full-dataset curve).
Results shown in \figref{model-loss}(a) reveal some new observations. 
The optimal loss gap 
across different diffusion steps is not 
even: most of the representative diffusion models leave a large loss gap in the intermediate diffusion steps around $\log\sigma \in [-2,2]$, indicating room to improve.
In addition, $\bfeps$-prediction models incur a large error for large $\sigma$, revealing a difficulty in learning such models.

\begin{figure*}[t]
  \vspace{-4pt}
  \begin{subfigure}[b]{0.33\textwidth}
    \centering
    \includegraphics[width=\textwidth]{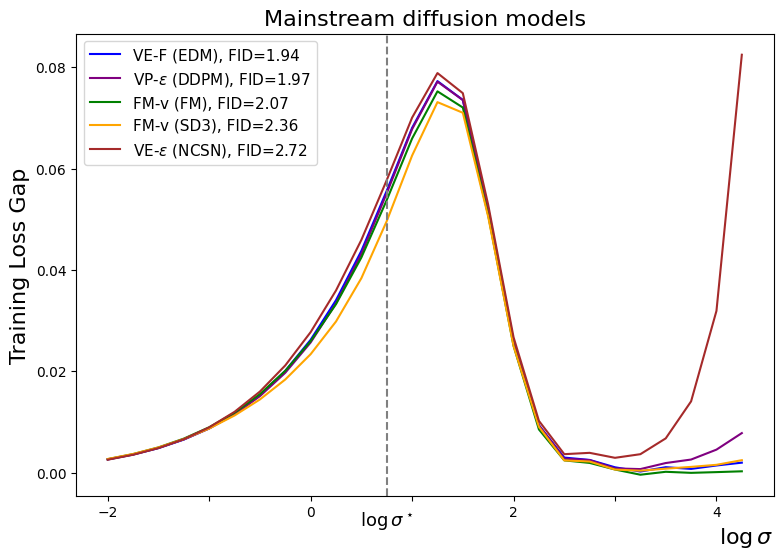}
    \captionsetup{skip=2pt}
    \caption{}
  \end{subfigure}
  \hspace{-2pt}
  \begin{subfigure}[b]{0.33\textwidth}
    \centering
    \includegraphics[width=\textwidth]{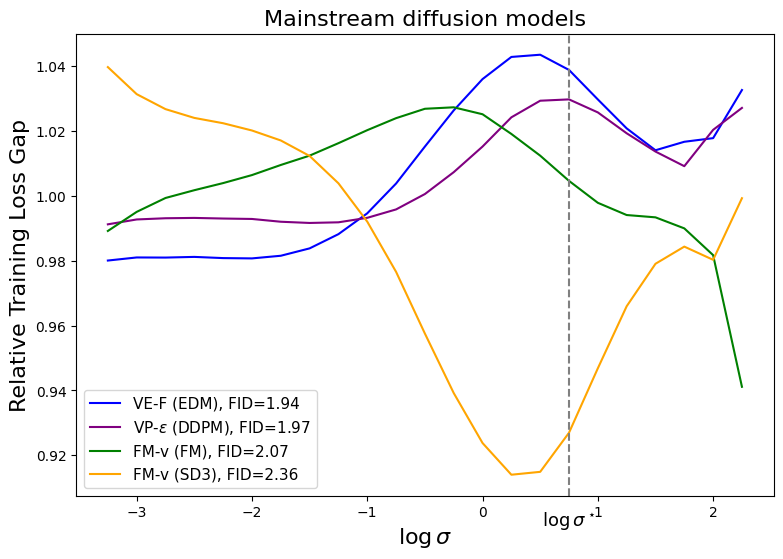}
    \captionsetup{skip=2pt}
    \caption{}
  \end{subfigure}
  \hspace{-2pt}
  \begin{subfigure}[b]{0.33\textwidth}
    \centering
    \includegraphics[width=\textwidth]{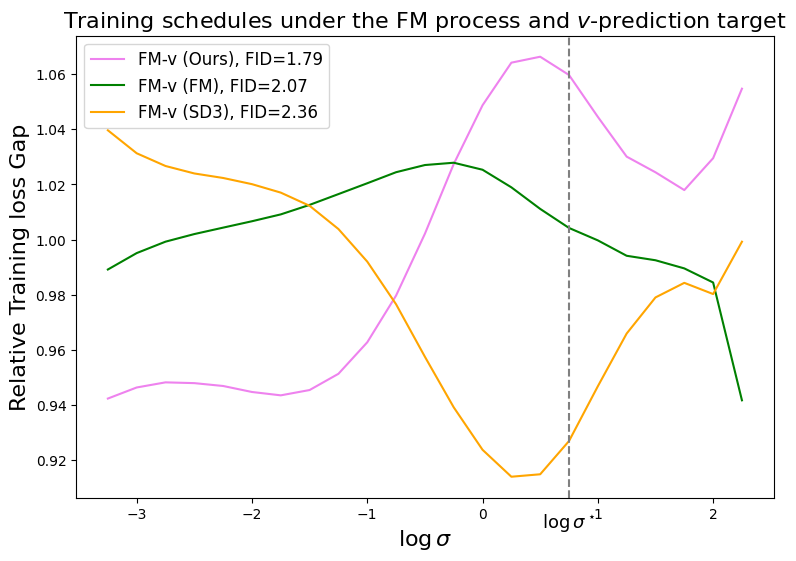}
    \captionsetup{skip=2pt}
    \caption{}
  \end{subfigure}
  \hspace{-12pt}
  \vspace{-6pt}
  \caption{Training loss gap across diffusion noise scales under various diffusion model training settings on CIFAR-10. FID as a metric for inference performance is marked for each training setting in the legend.
  \textbf{(a)} Training loss gap of mainstream diffusion models. 
  \textbf{(b)} 
  Relative training loss gap for a clearer comparison among mainstream diffusion models.
  The relative values at each noise scale are taken by dividing the (absolute) training loss gap values under the four settings by their average value (\ie, noise-scale-wise normalized).
  ``VE-$\bfeps$'' is omitted for its salient difference. 
  \textbf{(c)}
  Relative training loss gap for comparing existing training schedules and our schedule (\secref{new-sched}) under the FM process with the $\bfvv$-prediction target. 
  Curves under different diffusion model settings are plotted together by viewing them as parameterizations of the $\bfxx_0$-prediction under the variance exploding (VE) process (\eqnref{edm-precond}; \tabref{conversion}). See \appxref{model-loss-settings} for detailed settings, results using different samplers and under precision/recall and memorization metrics.
  }
  \label{fig:model-loss}
  \vspace{-8pt}
\end{figure*}

\textbf{Loss gap vs. inference performance.}
We now use the gap to the optimal loss as the fundamental data-fitting measure to analyze which region is more critical for inference (\ie, generation) performance, measured in Fr\'echet Inception Distance (FID)~\citep{heusel2017gans} also marked in \figref{model-loss}(a).
All diffusion models use the same deterministic ODE sampler with $\text{NFE}=35$ following \citet{karras2022elucidating}.
We can see that an erroneous fit at large noise scales of $\bfeps$-prediction models leads to a deficiency in generation quality, \eg, NCSN vs EDM in \figref{model-loss}(a). Among methods with a good fit for large $\sigma$, the intermediate noise scale region $\log\sigma\in[-2.0,2.0]$ becomes more relevant to inference performance.
We then zoom into the intermediate region and 
compare these methods using the normalized training loss gap, taken by dividing the value on each curve by the average value over the four curves, as shown in \figref{model-loss}(b) (``VE-$\bfeps$'' is omitted due to its significant deviation from other curves). 
Counter-intuitively, around the critical point $\sigma^\star$ defined as the largest $\sigma$ whose optimal loss $J_\sigma^\star$ becomes positive, although the gap is around the largest over noise scales, the loss gap turns out to \emph{negatively} correlate with FID. The correlation becomes \emph{positive} only in the region further left to $\sigma^\star$. This reveals that there is a trade-off in learning a diffusion model at different noise scales, and sacrificing the fit in the region around $\sigma^\star$ for a better fit in the noise scale interval further left to $\sigma^\star$ leads to a better inference performance.

\subsection{Principled Design of Training Schedule} \label{sec:new-sched}
From the observation above, the training schedule plays a crucial role in optimizing diffusion models, due to the trade-off over different noise scales.
We hence design a principled training schedule based on conclusions from the representative optimal loss estimates.

\textbf{The loss weight.}
$w_\sigma$ calibrates the error resolution across different noise scales. 
For this, the optimal loss $J_\sigma^\star$ provides a perfect reference scale for the loss at each diffusion step $\sigma$, so $w_\sigma = a/J_\sigma^\star$ with a scale factor $a$ is a natural choice to align the loss at various $\sigma$ to the same scale.
Although it downweighs the loss for large noise scales, the above observation suggests that the model can still achieve a good fit if using 
$\bfvv$-prediction and $\bfF$-prediction 
(\tabref{conversion}). 
For smaller noise scales, a cutoff $w^\star$ is needed to avoid divergence, which stops the increase of $w_\sigma$ before $\sigma$ runs smaller than 
the critical point $\sigma^\star$.
As observed in \secref{comp-sched}, the interval where the loss gap has a positive correlation to inference performance is on the left of (vs. around) $\sigma^\star$. 
We hence introduce 
an additional weight function $f(\sigma) = \clN(\log \sigma; \mu, \varsigma^2)$, whose parameters $\mu$ and $\varsigma$ are chosen to put the major weight over the left interval.
The resulting loss weight is finally given by:
{\addtolength{\abovedisplayskip}{-2pt}
\addtolength{\belowdisplayskip}{-2pt}
\begin{align}
    w_\sigma = a \, \min\big\{1/J_\sigma^\star,w^\star\big\} + f(\sigma) \, \bbI_{\sigma < \sigma^\star}.
    \label{eqn:w-schedule}
\end{align}}%
The loss weight for other prediction targets can be derived following conversions in \eqnsref{losst-wt-eps,losst-wt-x0,losst-wt-v}.

\textbf{The noise schedule.}
$p(\sigma)$ allocates the optimization frequency to each noise level.
A desired $p(\sigma)$ should favor noise steps on which the optimization task has not yet been done well, which can be measured by the difference from $w_\sigma J_\sigma(\bftheta)$ to $w_\sigma J_\sigma^\star$. 
This provides a principled measure for optimization insufficiency, which leads to an adaptive noise schedule:
    $p(\sigma) \propto w_\sigma (J_\sigma(\bftheta)-J_\sigma^\star)$.
See \figref{our-p} for an example of the resulting noise schedule in practical training, which indeed allocates more steps in the positively correlated interval.

Note that applying the loss weight and noise schedule in training a model only requires estimating the optimal loss $J^\star_\sigma$ on the training dataset, which does not require training a model beforehand.

\begin{wraptable}{r}{0.60\textwidth}
    \centering
        \centering
        \vspace{-12pt}
        \captionsetup{skip=-4pt}
        \caption{
        Generation FID ($\downarrow$) by existing training schedules and ours on CIFAR-10 and ImageNet-64.
        \vspace{4pt}}
        \label{tab:fid}
        \renewcommand{\arraystretch}{1.1}
        \setlength\tabcolsep{2pt}
        \resizebox{0.61\textwidth}{!}{
        \begin{tabular}{l@{\hspace{-2pt}}ccc}
        \toprule
         & \multicolumn{2}{c}{CIFAR-10} & ImageNet-64 \\
         \cmidrule(lr){2-3}
         & Conditional & Unconditional & Conditional\\
        \midrule
        StyleGAN~\citep{karras2019style} & 2.42 & 2.92 & - \\
        ScoreSDE (deep)~\citep{song2021score} & - & 2.20 & - \\
        {\footnotesize{ImprovedDDPM\citep{nichol2021improved}}} & - & 2.94 & 3.54 \\
        2-Rectified Flow~\citep{liu2023flow} & - & 4.85 & 2.92 \\
        VDM~\citep{kingma2021variational} & -& 2.49 & 3.40 \\
        \midrule
        EDM~\citep{karras2022elucidating} & \ 1.79 & \ 1.98 & 2.44  \\
        + EDM2~\citep{karras2024analyzing} schedule & 1.94 & \ 2.09 & - \\
        + \textbf{Our schedule} & \textbf{1.75} & \textbf{1.94} & \textbf{2.25}  \\
        \midrule
        FM~\citep{lipman2023flow} & - & 6.35 & 14.45  \\
        + EDM sampler & 2.07 & 2.24 & 3.06 \\
        + \textbf{Our schedule} & \textbf{1.77} & \textbf{2.03} & \textbf{2.29} \\
        \bottomrule
        \end{tabular}
        }
        \vspace{-12pt}
\end{wraptable}

\textbf{CIFAR-10 \& ImageNet-64 Results.}
We evaluate the designed training schedule in training two advanced diffusion models EDM~\citep{karras2022elucidating} and Flow Matching (FM)~\citep{lipman2023flow} on the CIFAR-10 and ImageNet-64 datasets (details in \appxref{experiments-image}).
As shown in \tabref{fid}, 
our training schedule significantly improves inference (\ie, generation) performance upon the original for both the EDM and FM settings on both datasets, 
demonstrating the value of new insights from our analysis using the optimal loss.
For a closer look at how our schedule works, in \figref{model-loss}(c) we plot the relative training loss gap across noise scales using our schedule, and compare it with the schedule of the original~\citep{lipman2023flow} and of StableDiffusion 3 (SD3)~\citep{esser2024scaling}.
We find that our schedule indeed further decreases the loss in the interval with positive correlation to performance, aligning with the insight from \secref{comp-sched}. 

\textbf{ImageNet-256 Results.}
Finally, we evaluate our training schedule on the ImageNet-256 dataset and compare the results with existing approaches. We use VA-VAE~\citep{yao2025vavae} as the tokenizer and employ a modified LightningDiT~\citep{yao2025vavae} architecture enhanced with QK-Normalization~\citep{dehghani2023scaling} to improve training stability (details in \appxref{experiments-image}). As shown in \tabref{imagenet256}, our training schedule further improves inference performance over the original LightningDiT training schedule.
Human evaluation results in \tabref{human} also suggest the advantages.

\begin{table}[t]  
  \centering
  \vspace{-4pt}
  \small
  \caption{Comparison between existing training schedules and our schedule on ImageNet-256.}
  \label{tab:imagenet256}
  \vspace{-8pt}
  \resizebox{1.0\textwidth}{!}{
  \begin{tabular}{@{}    
      l@{}    
      r r r r  
      r r r r  
    @{}}
    \toprule
    \multirow{2}{*}{Method}
           & \multicolumn{4}{c}{Generation w/o CFG}  
           & \multicolumn{4}{c}{Generation w/ CFG} \\
    \cmidrule(lr){2-5} \cmidrule(lr){6-9}
    & FID($\downarrow$) &  IS($\uparrow$)  & Pre.($\uparrow$) & Rec.($\uparrow$)
    & FID($\downarrow$) &  IS($\uparrow$)  & Pre.($\uparrow$) & Rec.($\uparrow$) \\
    \midrule
    \multicolumn{9}{l}{\itshape Pixel-space Diffusion Models} \\
    ADM~\citep{dhariwal2021diffusion}
                    &  10.94 & -- & 0.69 & 0.63   
                    &   3.94  & 215.9  &  0.83  &  0.53 \\  
    RIN~\citep{jabri2022scalable}    
                    &  3.42 & 182.0 & --  
                    &  -- & -- & -- & --  \\  
    SimpleDiffusion~{\footnotesize{\citep{hoogeboom2023simple}}}                        
                    &  2.77  & 211.8  &   --  &   --    
                    &  2.12 & 256.3 & -- & --  \\  
   VDM++~\citep{kingma2023understanding}                
                    &  2.40 & \textbf{225.3} &   --  &   --    
                    &  2.12 & 267.7 &   --  &   --   \\  
    SiD2~\citep{hoogeboom2024simpler}          
                    &  -- & -- & -- & --   
                    &  1.38 & -- & -- & --  \\  
    \midrule  
    \multicolumn{9}{l}{\itshape Latent Diffusion Models} \\  
    MaskDiT~\citep{zheng2023fast}   
                    &  5.69 & 177.9 & 0.74 & 0.60   
                    &  2.28 & 276.6 & 0.80 & 0.61  \\  
    DiT~\citep{peebles2023scalable}                                          
                    &  9.62 & 121.5 & 0.67 & 0.67   
                    &  2.27 & 278.2 & \textbf{0.83} & 0.57  \\  
    SiT~\citep{ma2024sit}                                          
                    &  8.61 & 131.7 & 0.68 & 0.67   
                    &  2.06 & 270.3 & 0.82 & 0.59  \\  
    FasterDiT~\citep{yao2024fasterdit}                                     
                    &  7.91 & 131.3 & 0.67 & \textbf{0.69}  
                    &  2.03 & 264.0 & 0.81 & 0.60  \\  
    MDT~\citep{gao2023masked}                                    
                    &  6.23 & 143.0 & 0.71 & 0.65   
                    &  1.79 & 283.0 & 0.81 & 0.61  \\  
    MDTv2~\citep{gao2023mdtv2}                                 
                    &   --  &   --  &   --  &   --    
                    &  1.58 & \textbf{314.7} & 0.79 & 0.65  \\  
    REPA~\citep{yu2024representation}                                 
                    &  5.90 &   --  &   --  &   --    
                    &  1.42 & 305.7 & 0.80 & 0.65  \\  
    \midrule  
    LightningDiT~\citep{yao2025vavae}   
      &  2.17 & 205.6 & \textbf{0.77} & 0.65   
      &  1.35 & 295.3 & 0.79 & 0.65  \\ 
      + reproduction 
      &  2.29& 206.2 & 0.76 & 0.66    
      &  1.42 & 292.9 & 0.79 & 0.65  \\ 
      + \textbf{Our schedule} & \textbf{2.08} & 220.8 & \textbf{0.77} & 0.66 & \textbf{1.30} & 301.3 & 0.79 & \textbf{0.66} \\
    \bottomrule  
  \end{tabular}
  }
  \vspace{-10pt}
\end{table}  

\section{Principled Scaling Law Study for Diffusion Models} \label{sec:scalaw}

\begin{figure*}[t]
  \centering
  \begin{subfigure}[b]{0.331\textwidth}
    \centering
    \includegraphics[width=\textwidth]{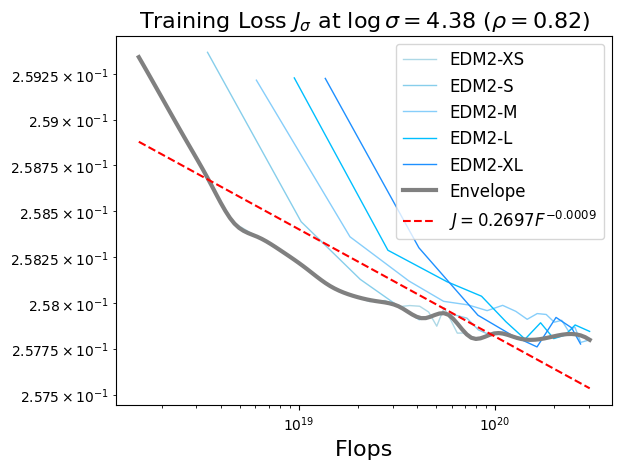}
    \captionsetup{skip=1pt}
    \caption{}
  \end{subfigure}
  \begin{subfigure}[b]{0.329\textwidth}
    \centering
    \includegraphics[width=\textwidth]{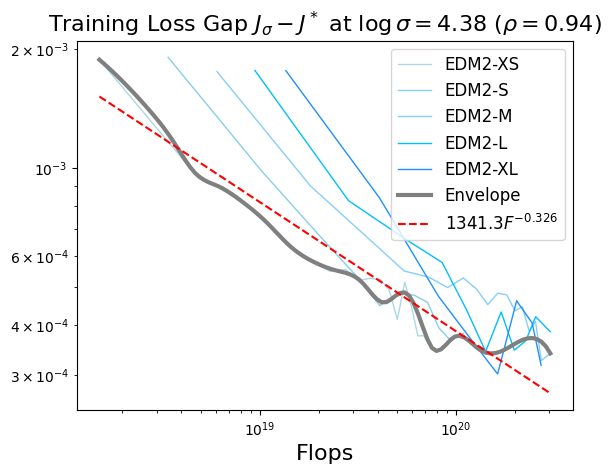}
    \captionsetup{skip=1pt}
    \caption{}
  \end{subfigure}
  \hspace{-4pt}
  \begin{subfigure}[b]{0.331\textwidth}
    \centering
    \includegraphics[width=\textwidth]{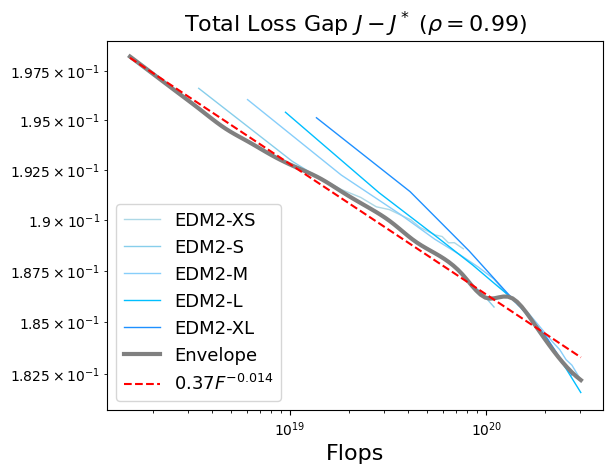}
    \captionsetup{skip=1pt}
    \caption{}
  \end{subfigure}
  \caption{Scaling law study using optimal loss on ImageNet-64. 
  Training curves and their envelopes for a range of model sizes showing \textbf{(a)} the actual absolute loss value and \textbf{(b)} the gap between the actual and the optimal loss at noise scale $\log\sigma = 4.38$.
  \textbf{(c)} Training curves and their envelope showing the gap for the total loss (averaged over noise scales according to the training schedule).
  Correlation coefficients $\rho$ for the envelopes are marked. 
  }
  \label{fig:scaling-law64}
  \vspace{-20pt}
\end{figure*}

Neural scaling law~\citep{kaplan2020scaling} has been the driving motivation for pursuing large models, which shows the consistent improvement of model performance with computational cost. The conventional version takes the form of a power law~\citep{kaplan2020scaling,henighan2020scaling,hoffmann2022an}:
  $J(F) = \beta F^\alpha$, 
where $F$ denotes floating point operations (FLOPs) measuring training budget, $J(F)$ denotes the minimal training loss attained by models in various sizes (the envelope in \figref{scaling-law64}), 
and $\alpha < 0$ and $\beta > 0$ are power law parameters.
The specialty of a scaling law study for diffusion model is that, as the optimal loss sets a non-zero lower bound of the training loss, not all the loss value in $J(F)$ can be reduced with the increase of $F$, questioning the form 
which converges to zero as $F \to \infty$. Instead, the following modified power law is assumed:
\begin{align}
  J(F) - J^\star = \beta F^\alpha, 
  \label{eqn:diffu-scaling-law}
\end{align}
where $J^\star$ denotes the optimal loss value. 
It can be rephrased as that $\log(J(F) - J^\star)$ is linear in $\log F$, so we can verify it through the linear correlation coefficient $\rho$. 
We conduct experiments using current state-of-the-art diffusion models EDM2~\citep{karras2024analyzing} with parameter size ranging from 120M to 1.5B.

We first compare the training curves at a large noise scale, for which \figref{scaling-law64}(a) and (b) assume the original and the modified (\eqnref{diffu-scaling-law}) scaling laws, respectively. We can observe that in the modified version, the envelope is indeed closer to a line, and the improved correlation coefficient $\rho=0.94$ (vs. $0.82$) validates this quantitatively.
For the total loss, we 
use the optimized adaptive loss weight by EDM2~\citep{karras2024analyzing}.
The result is shown in \figref{scaling-law64}(c), which achieves $\rho = 0.9917$,
and the fitted scaling law is given by:
  $J(F) = 0.3675 \, F^{-0.014} + 0.015$.
\appxref{scaling law} provides more results on ImageNet-512.
We hope this approach could lead to more profound future studies in the scaling law for diffusion models.

\section{Conclusion} \label{sec:concl}

In this work, we emphasize the central importance of optimal loss estimation to make the training loss value meaningful for diagnosing and improving diffusion model training. To the best of our knowledge, we are the first to notice and work on this issue. We develop analytical expressions and practical estimators for diffusion optimal loss, particularly a scalable estimator applicable to large datasets and has proper variance and bias control.
With this tool, we revisit the training behavior of mainstream diffusion models, and propose an optimal-loss-based approach for a principled design of training schedules, which indeed improves performance in practice. Furthermore, we investigate the scaling behavior w.r.t model size, in which accounting for the optimal loss value as an offset makes the scaling curves better fulfill the power law.

Although the optimal loss estimation is not directly meant for analyzing inference performance, it introduces a serious metric for data-fitting quality, which paves the way for future generalization studies.
We believe our new approaches and insights could motivate future research on analyzing and improving diffusion models, and advance the progress of generative-model research.

\subsubsection*{Acknowledgments}
This work is supported by Zhongguancun Academy (Grant No. C20250506). 
DH is supported by National Science Foundation of China (NSFC62376007),  National Science Foundation of China (under Key Project No. 92570203), Beijing Natural Science Foundation (Z250001) and Beijing Major Science and Technology Project under Contract no. Z251100008425004.

\section{Ethics Statement}
This work adheres to the ICLR Code of Ethics.
Our study does not involve human subjects, personal data, or sensitive demographic information. 
All experiments are conducted on publicly available benchmark datasets, which are widely used in the machine learning community. 
No new data collection or human/animal experimentation was performed. 

\section{Reproducibility Statement}
To facilitate the reproducibility of our research, we provide comprehensive details throughout the paper and its supplementary materials. We begin by establishing the necessary backgrounds in \secref{prelim}. For all theoretical claims in the main text, we offer detailed derivations in \appxref{proofs}. All our experiments are thoroughly documented; the datasets, training procedures, and settings are carefully described in \appxref{experiments}. Upon acceptance of this paper, we commit to making our full codebase and all model checkpoints publicly available to ensure that the community can fully reproduce our results.

\section{The use of Large Language models (LLMs)}
In the preparation of this manuscript, LLMs were employed as a writing assistant to refine the language and improve the grammar. Following this process, all textual content was meticulously reviewed, revised, and validated by the authors, who assume full responsibility for the final work presented.

\bibliography{main}

@article{anderson1982reverse,
  title={Reverse-time diffusion equation models},
  author={Anderson, Brian DO},
  journal={Stochastic Processes and their Applications},
  volume={12},
  number={3},
  pages={313--326},
  year={1982},
  publisher={Elsevier}
}

@inproceedings{vincent2008extracting,
  title={Extracting and composing robust features with denoising autoencoders},
  author={Vincent, Pascal and Larochelle, Hugo and Bengio, Yoshua and Manzagol, Pierre-Antoine},
  booktitle={Proceedings of the 25th international conference on Machine learning},
  pages={1096--1103},
  year={2008}
}

@article{vincent2011connection,
  title={A connection between score matching and denoising autoencoders},
  author={Vincent, Pascal},
  journal={Neural computation},
  volume={23},
  number={7},
  pages={1661--1674},
  year={2011},
  publisher={MIT Press}
}

@article{alain2014regularized,
  title={What regularized auto-encoders learn from the data-generating distribution},
  author={Alain, Guillaume and Bengio, Yoshua},
  journal={The Journal of Machine Learning Research},
  volume={15},
  number={1},
  pages={3563--3593},
  year={2014},
  publisher={JMLR. org}
}

@inproceedings{sohl2015deep,
  title={Deep unsupervised learning using nonequilibrium thermodynamics},
  author={Sohl-Dickstein, Jascha and Weiss, Eric and Maheswaranathan, Niru and Ganguli, Surya},
  booktitle={International Conference on Machine Learning},
  pages={2256--2265},
  year={2015},
  organization={PMLR}
}

@inproceedings{song2019generative,
  title={Generative modeling by estimating gradients of the data distribution},
  author={Song, Yang and Ermon, Stefano},
  booktitle={Advances in Neural Information Processing Systems},
  volume={32},
  year={2019}
}

@inproceedings{ho2020denoising,
  title={Denoising diffusion probabilistic models},
  author={Ho, Jonathan and Jain, Ajay and Abbeel, Pieter},
  booktitle={Advances in Neural Information Processing Systems},
  volume={33},
  pages={6840--6851},
  year={2020}
}

@inproceedings{nichol2021improved,
  title={Improved denoising diffusion probabilistic models},
  author={Nichol, Alexander Quinn and Dhariwal, Prafulla},
  booktitle={International Conference on Machine Learning},
  pages={8162--8171},
  year={2021},
  organization={PMLR}
}

@inproceedings{song2021denoising,
  title={Denoising Diffusion Implicit Models},
  author={Jiaming Song and Chenlin Meng and Stefano Ermon},
  booktitle={International Conference on Learning Representations},
  year={2021}
}

@inproceedings{song2021score,
  title={Score-Based Generative Modeling through Stochastic Differential Equations},
  author={Song, Yang and Sohl-Dickstein, Jascha and Kingma, Diederik P and Kumar, Abhishek and Ermon, Stefano and Poole, Ben},
  booktitle={International Conference on Learning Representations},
  year={2021}
}

@article{kingma2021variational,
  title={Variational diffusion models},
  author={Kingma, Diederik and Salimans, Tim and Poole, Ben and Ho, Jonathan},
  journal={Advances in neural information processing systems},
  volume={34},
  pages={21696--21707},
  year={2021}
}

@article{dhariwal2021diffusion,
  title={Diffusion models beat {GANs} on image synthesis},
  author={Dhariwal, Prafulla and Nichol, Alexander},
  journal={Advances in Neural Information Processing Systems},
  volume={34},
  pages={8780--8794},
  year={2021}
}

@article{karras2022elucidating,
  title={Elucidating the Design Space of Diffusion-Based Generative Models},
  author={Karras, Tero and Aittala, Miika and Aila, Timo and Laine, Samuli},
  journal={Advances in neural information processing systems},
  volume={35},
  pages={26565--26577},
  year={2022}
}

@inproceedings{bao2022analytic,
  title={{Analytic-DPM}: an Analytic Estimate of the Optimal Reverse Variance in Diffusion Probabilistic Models},
  author={Bao, Fan and Li, Chongxuan and Zhu, Jun and Zhang, Bo},
  booktitle={International Conference on Learning Representations},
  year={2022}
}

@inproceedings{bao2022estimating,
  title={Estimating the Optimal Covariance with Imperfect Mean in Diffusion Probabilistic Models},
  author={Bao, Fan and Li, Chongxuan and Sun, Jiacheng and Zhu, Jun and Zhang, Bo},
  booktitle={International Conference on Machine Learning},
  pages={1555--1584},
  year={2022},
  organization={PMLR}
}

@inproceedings{peebles2023scalable,
  title={Scalable diffusion models with transformers},
  author={Peebles, William and Xie, Saining},
  booktitle={Proceedings of the IEEE/CVF International Conference on Computer Vision},
  pages={4195--4205},
  year={2023}
}

@inproceedings{salimans2022progressive,
  title={Progressive Distillation for Fast Sampling of Diffusion Models},
  author={Tim Salimans and Jonathan Ho},
  booktitle={International Conference on Learning Representations},
  year={2022},
  url={https://openreview.net/forum?id=TIdIXIpzhoI}
}

@inproceedings{karras2024analyzing,
  title={Analyzing and improving the training dynamics of diffusion models},
  author={Karras, Tero and Aittala, Miika and Lehtinen, Jaakko and Hellsten, Janne and Aila, Timo and Laine, Samuli},
  booktitle={Proceedings of the IEEE/CVF Conference on Computer Vision and Pattern Recognition},
  pages={24174--24184},
  year={2024}
}

@article{lu2024simplifying,
  title={Simplifying, stabilizing and scaling continuous-time consistency models},
  author={Lu, Cheng and Song, Yang},
  journal={arXiv preprint arXiv:2410.11081},
  year={2024}
}

@inproceedings{kingma2023understanding,
  title={Understanding Diffusion Objectives as the {ELBO} with Simple Data Augmentation},
  author={Diederik P Kingma and Ruiqi Gao},
  booktitle={Thirty-seventh Conference on Neural Information Processing Systems},
  year={2023},
  url={https://openreview.net/forum?id=NnMEadcdyD}
}

@inproceedings{lipman2023flow,
  title={Flow Matching for Generative Modeling},
  author={Yaron Lipman and Ricky T. Q. Chen and Heli Ben-Hamu and Maximilian Nickel and Matthew Le},
  booktitle={The Eleventh International Conference on Learning Representations},
  year={2023},
  url={https://openreview.net/forum?id=PqvMRDCJT9t}
}

@inproceedings{liu2023flow,
  title={Flow Straight and Fast: Learning to Generate and Transfer Data with Rectified Flow},
  author={Xingchao Liu and Chengyue Gong and Qiang Liu},
  booktitle={The Eleventh International Conference on Learning Representations},
  year={2023},
  url={https://openreview.net/forum?id=XVjTT1nw5z}
}

@inproceedings{esser2024scaling,
  title={Scaling rectified flow transformers for high-resolution image synthesis},
  author={Esser, Patrick and Kulal, Sumith and Blattmann, Andreas and Entezari, Rahim and M{\"u}ller, Jonas and Saini, Harry and Levi, Yam and Lorenz, Dominik and Sauer, Axel and Boesel, Frederic and others},
  booktitle={Forty-first International Conference on Machine Learning},
  year={2024}
}

@article{krizhevsky2012imagenet,
  title={Imagenet classification with deep convolutional neural networks},
  author={Krizhevsky, Alex and Sutskever, Ilya and Hinton, Geoffrey E},
  journal={Advances in neural information processing systems},
  volume={25},
  year={2012}
}

@article{kroese2012monte,
  title={Monte carlo methods},
  author={Kroese, Dirk P and Rubinstein, Reuven Y},
  journal={Wiley Interdisciplinary Reviews: Computational Statistics},
  volume={4},
  number={1},
  pages={48--58},
  year={2012},
  publisher={Wiley Online Library}
}

@article{krizhevsky2009learning,
  title={Learning multiple layers of features from tiny images},
  author={Krizhevsky, Alex and Hinton, Geoffrey and others},
  year={2009},
  publisher={Toronto, ON, Canada}
}

@misc{kaplan2020scaling,
    title={Scaling Laws for Neural Language Models},
    author={Jared Kaplan and Sam McCandlish and Tom Henighan and Tom B. Brown and Benjamin Chess and Rewon Child and Scott Gray and Alec Radford and Jeffrey Wu and Dario Amodei},
    year={2020},
    eprint={2001.08361},
    archivePrefix={arXiv},
    primaryClass={cs.LG}
}

@misc{henighan2020scaling,
    title={Scaling Laws for Autoregressive Generative Modeling},
    author={Tom Henighan and Jared Kaplan and Mor Katz and Mark Chen and Christopher Hesse and Jacob Jackson and Heewoo Jun and Tom B. Brown and Prafulla Dhariwal and Scott Gray and Chris Hallacy and Benjamin Mann and Alec Radford and Aditya Ramesh and Nick Ryder and Daniel M. Ziegler and John Schulman and Dario Amodei and Sam McCandlish},
    year={2020},
    eprint={2010.14701},
    archivePrefix={arXiv},
    primaryClass={cs.LG}
}

@inproceedings{
hoffmann2022an,
title={An empirical analysis of compute-optimal large language model training},
author={Jordan Hoffmann and Sebastian Borgeaud and Arthur Mensch and Elena Buchatskaya and Trevor Cai and Eliza Rutherford and Diego de las Casas and Lisa Anne Hendricks and Johannes Welbl and Aidan Clark and Tom Hennigan and Eric Noland and Katherine Millican and George van den Driessche and Bogdan Damoc and Aurelia Guy and Simon Osindero and Karen Simonyan and Erich Elsen and Oriol Vinyals and Jack William Rae and Laurent Sifre},
booktitle={Advances in Neural Information Processing Systems},
editor={Alice H. Oh and Alekh Agarwal and Danielle Belgrave and Kyunghyun Cho},
year={2022},
url={https://openreview.net/forum?id=iBBcRUlOAPR}
}

@inproceedings{
daras2023ambient,
title={Ambient Diffusion: Learning Clean Distributions from Corrupted Data},
author={Giannis Daras and Kulin Shah and Yuval Dagan and Aravind Gollakota and Alex Dimakis and Adam Klivans},
booktitle={Thirty-seventh Conference on Neural Information Processing Systems},
year={2023},
url={https://openreview.net/forum?id=wBJBLy9kBY}
}

@article{de2021diffusion,
  title={Diffusion schr{\"o}dinger bridge with applications to score-based generative modeling},
  author={De Bortoli, Valentin and Thornton, James and Heng, Jeremy and Doucet, Arnaud},
  journal={Advances in Neural Information Processing Systems},
  volume={34},
  pages={17695--17709},
  year={2021}
}

@book{robert1999monte,
  title={Monte Carlo statistical methods},
  author={Robert, Christian P and Casella, George and Casella, George},
  volume={2},
  year={1999},
  publisher={Springer}
}

@article{
mei2025bigger,
title={Bigger is not Always Better: Scaling Properties of Latent Diffusion Models},
author={Kangfu Mei and Zhengzhong Tu and Mauricio Delbracio and Hossein Talebi and Vishal M. Patel and Peyman Milanfar},
journal={Transactions on Machine Learning Research},
issn={2835-8856},
year={2025},
url={https://openreview.net/forum?id=0u7pWfjri5},
note={}
}

@inproceedings{li2024scalability,
  title={On the scalability of diffusion-based text-to-image generation},
  author={Li, Hao and Zou, Yang and Wang, Ying and Majumder, Orchid and Xie, Yusheng and Manmatha, R and Swaminathan, Ashwin and Tu, Zhuowen and Ermon, Stefano and Soatto, Stefano},
  booktitle={Proceedings of the IEEE/CVF Conference on Computer Vision and Pattern Recognition},
  pages={9400--9409},
  year={2024}
}

@article{liang2024scaling,
  title={Scaling laws for diffusion transformers},
  author={Liang, Zhengyang and He, Hao and Yang, Ceyuan and Dai, Bo},
  journal={arXiv preprint arXiv:2410.08184},
  year={2024}
}

@article{karras2019style,
  title={A Style-Based Generator Architecture for Generative Adversarial Networks},
  author={Karras, Tero},
  journal={arXiv preprint arXiv:1812.04948},
  year={2019}
}

@book{durrett2019probability,
  title={Probability: theory and examples},
  author={Durrett, Rick},
  volume={49},
  year={2019},
  publisher={Cambridge university press}
}

@article{xu2023stable,
  title={Stable target field for reduced variance score estimation in diffusion models},
  author={Xu, Yilun and Tong, Shangyuan and Jaakkola, Tommi},
  journal={arXiv preprint arXiv:2302.00670},
  year={2023}
}

@article{niedoba2024nearest,
  title={Nearest neighbour score estimators for diffusion generative models},
  author={Niedoba, Matthew and Green, Dylan and Naderiparizi, Saeid and Lioutas, Vasileios and Lavington, Jonathan Wilder and Liang, Xiaoxuan and Liu, Yunpeng and Zhang, Ke and Dabiri, Setareh and {\'S}cibior, Adam and others},
  journal={arXiv preprint arXiv:2402.08018},
  year={2024}
}

@article{gu2023memorization,
  title={On memorization in diffusion models},
  author={Gu, Xiangming and Du, Chao and Pang, Tianyu and Li, Chongxuan and Lin, Min and Wang, Ye},
  journal={arXiv preprint arXiv:2310.02664},
  year={2023}
}

@article{sanz2024first,
  title={A first course in Monte Carlo methods},
  author={Sanz-Alonso, Daniel and Al-Ghattas, Omar},
  journal={arXiv preprint arXiv:2405.16359},
  year={2024}
}

@article{heusel2017gans,
  title={Gans trained by a two time-scale update rule converge to a local nash equilibrium},
  author={Heusel, Martin and Ramsauer, Hubert and Unterthiner, Thomas and Nessler, Bernhard and Hochreiter, Sepp},
  journal={Advances in neural information processing systems},
  volume={30},
  year={2017}
}

@article{zheng2023fast,
  title={Fast training of diffusion models with masked transformers},
  author={Zheng, Hongkai and Nie, Weili and Vahdat, Arash and Anandkumar, Anima},
  journal={arXiv preprint arXiv:2306.09305},
  year={2023}
}

@inproceedings{ma2024sit,
  title={Sit: Exploring flow and diffusion-based generative models with scalable interpolant transformers},
  author={Ma, Nanye and Goldstein, Mark and Albergo, Michael S and Boffi, Nicholas M and Vanden-Eijnden, Eric and Xie, Saining},
  booktitle={European Conference on Computer Vision},
  pages={23--40},
  year={2024},
  organization={Springer}
}

@article{yao2024fasterdit,
  title={Fasterdit: Towards faster diffusion transformers training without architecture modification},
  author={Yao, Jingfeng and Wang, Cheng and Liu, Wenyu and Wang, Xinggang},
  journal={Advances in Neural Information Processing Systems},
  volume={37},
  pages={56166--56189},
  year={2024}
}

@inproceedings{gao2023masked,
  title={Masked diffusion transformer is a strong image synthesizer},
  author={Gao, Shanghua and Zhou, Pan and Cheng, Ming-Ming and Yan, Shuicheng},
  booktitle={Proceedings of the IEEE/CVF international conference on computer vision},
  pages={23164--23173},
  year={2023}
}

@article{gao2023mdtv2,
  title={Mdtv2: Masked diffusion transformer is a strong image synthesizer},
  author={Gao, Shanghua and Zhou, Pan and Cheng, Ming-Ming and Yan, Shuicheng},
  journal={arXiv preprint arXiv:2303.14389},
  year={2023}
}

@article{yu2024representation,
  title={Representation alignment for generation: Training diffusion transformers is easier than you think},
  author={Yu, Sihyun and Kwak, Sangkyung and Jang, Huiwon and Jeong, Jongheon and Huang, Jonathan and Shin, Jinwoo and Xie, Saining},
  journal={arXiv preprint arXiv:2410.06940},
  year={2024}
}

@article{jabri2022scalable,
  title={Scalable adaptive computation for iterative generation},
  author={Jabri, Allan and Fleet, David and Chen, Ting},
  journal={arXiv preprint arXiv:2212.11972},
  year={2022}
}

@inproceedings{hoogeboom2023simple,
  title={simple diffusion: End-to-end diffusion for high resolution images},
  author={Hoogeboom, Emiel and Heek, Jonathan and Salimans, Tim},
  booktitle={International Conference on Machine Learning},
  pages={13213--13232},
  year={2023},
  organization={PMLR}
}

@article{hoogeboom2024simpler,
  title={Simpler diffusion (sid2): 1.5 fid on imagenet512 with pixel-space diffusion},
  author={Hoogeboom, Emiel and Mensink, Thomas and Heek, Jonathan and Lamerigts, Kay and Gao, Ruiqi and Salimans, Tim},
  journal={arXiv preprint arXiv:2410.19324},
  year={2024}
}

@inproceedings{yao2025vavae,
  title={Reconstruction vs. generation: Taming optimization dilemma in latent diffusion models},
  author={Yao, Jingfeng and Yang, Bin and Wang, Xinggang},
  booktitle={Proceedings of the IEEE/CVF Conference on Computer Vision and Pattern Recognition},
  year={2025}
}

@inproceedings{dehghani2023scaling,
  title={Scaling vision transformers to 22 billion parameters},
  author={Dehghani, Mostafa and Djolonga, Josip and Mustafa, Basil and Padlewski, Piotr and Heek, Jonathan and Gilmer, Justin and Steiner, Andreas Peter and Caron, Mathilde and Geirhos, Robert and Alabdulmohsin, Ibrahim and others},
  booktitle={International Conference on Machine Learning},
  pages={7480--7512},
  year={2023},
  organization={PMLR}
}
\bibliographystyle{plainnat}

\newpage
\appendix
\section*{Appendix}
\numberwithin{table}{subsection}
\numberwithin{figure}{subsection}

The supplementary material is organized as follows.
We first remark the relation between estimating optimal loss and achieving generalization in \appxref{generalization}. In \appxref{alternative}, we review alternative formulations of diffusion models.
In \appxref{opt-sol}, we provide a detailed derivation of the optimal solution for diffusion models in various formulations. In \appxref{conversion}, we give the detailed derivation of the conversion shown in \tabref{conversion}. In \appxref{proofs}, we give the proofs of all theorems. In \appxref{IS}, we give a brief introduction to the important sampling methods. In \appxref{experiments}, we give the details of all our experiments.

\section{Discussions on Optimal Loss Estimation and Generalization in Diffusion Models} \label{appx:generalization}

We would like to emphasize that the motivation of estimating the optimal loss is \emph{not} to enforce the model to achieve the optimal loss on a dataset, in which case the model is overfitted to the training data samples,
but to fulfill the more fundamental need of measuring the absolute fitness to a dataset, as a monitor to the training process, and also the evaluation of fitness to a held-out test set.
As an analogy, monitoring the supervision loss (\eg, the mean squared error (MSE) between model prediction and data labels) does not meant to optimize it to zero on the training set, but to evaluate the fitness to data. Particularly, one would use the loss to monitor training process, and the use the supervision loss on a test set to evaluate the performance, which is essentially to evaluate the fitness to the test set.
The specialty with diffusion loss is that its optimal value is unknown beforehand, so the loss value does not readily reflect the absolute fitness to a dataset. Estimating the diffusion optimal loss can hence provide a reference to interpret a real training loss value. This even enables evaluating the generalization of a diffusion model using the loss value on a test set compared to the optimal value on the dataset. This evaluation does not require the costly and tricky generation process as is typically adopted.

For the design of the training schedule, we would like to mention that the design is not intended to reduce the loss gap for every time step.
As is shown in \figref{model-loss}, there is a trade-off of loss optimization at different time steps, where the loss gap in some time-step intervals is positively correlated to inference (\ie, generation) performance, while on some other intervals negatively correlated to inference performance.
Therefore, the design of the training schedule is rather putting more emphasis on the positively correlated regions, even if this would sacrifice the loss in some other regions.
A direct evidence is shown in \tabref{mem} in \appxref{model-loss-settings}, which demonstrates that our schedule improves inference performance without memorizing/overfitting training data indeed.

\section{Alternative Formulations of Diffusion Models}\label{appx:alternative}
Besides the \emph{score prediction} target introduced in \secref{prelim}, diffusion models also adopt other prediction targets. The formulation of
\eqnref{losst-score} motivates the \emph{noise prediction} target~\citep{ho2020denoising}. Let $\bfeps_\bftheta(\bfxx_t,t) := -\sigma_t\bfss_\bftheta(\bfxx_t,t)$, the loss becomes
\begin{align}
  J(\bftheta) = \bbE_{p(t)} w_t^\txteps \bbE_{p_0(\bfxx_0)} \bbE_{p(\bfeps)} \lrVert{\bfeps_\bftheta(\alpha_t \bfxx_0 + \sigma_t \bfeps, t) - \bfeps}^2,  \quad \text{where } ~w_t^\txteps = w_t^\txts / \sigma_t^2. \label{eqn:loss-eps}
\end{align}
This formulation poses a friendly, bounded-scale learning target and avoids the artifact at $t=0$ of the denoising score matching loss. 
If formally solving $\bfxx_0$ from \eqnref{diffu-interp} and let
\begin{align}
  \bfxx_{0\bftheta}(\bfxx_t,t) := \frac{\bfxx_t - \sigma_t \bfeps_\bftheta(\bfxx_t,t)}{\alpha_t} = \frac{\bfxx_t + \sigma_t^2 \bfss_\bftheta(\bfxx_t,t)}{\alpha_t},
\end{align}
then we get the total loss becomes $J(\bftheta) \!=\! \bbE_{p(t)} w_t^\txtx J_t^\txtx(\bftheta)$, where:
{\addtolength{\abovedisplayskip}{-6pt}
\addtolength{\belowdisplayskip}{-1pt}
\begin{align}
  J_t^\txtx(\bftheta) :=
  \bbE_{p(\bfxx_0)} \bbE_{p(\eps)} \lrVert{\bfxx_{0\bftheta}(\alpha_t \bfxx_0 + \sigma_t \bfeps, t) - \bfxx_0}^2 
  = (\sigma_t^4 / \alpha^2_t) J_t^\txts(\bftheta), \quad
  w_t^\txtx = (\alpha^2_t / \sigma_t^4) w_t^\txts.
\end{align}}%
It holds the semantics of \emph{clean-data prediction}~\citep{kingma2021variational,karras2022elucidating}, 
and can be viewed as denoising auto-encoders~\citep{vincent2008extracting,alain2014regularized} with multiple noise scales.
From 
the equivalent deterministic process, one can also derive the \emph{vector-field prediction} target by
\begin{align}
    \bfvv_\bftheta(\bfxx_t,t) := a_t \bfxx_t - \frac12 g_t^2 \bfss_\bftheta(\bfxx_t,t),
\end{align}
an the loss function becomes $J(\bftheta) = \bbE_{p(t)} w_t^\txtv J_t^\txtv(\bftheta)$, where
{\addtolength{\abovedisplayskip}{-1pt}
\addtolength{\belowdisplayskip}{-2pt}
\begin{align}
  J_t^\txtv(\bftheta):=
  \bbE_{p(\bfxx_0)} \bbE_{p(\eps)} \lrVert{\bfvv_{\bftheta}(\alpha_t \bfxx_0 \!+\! \sigma_t \bfeps, t) - (\alpha_t' \bfxx_0 \!+\! \sigma_t' \bfeps)}^2, 
 \quad
  w_t^\txtv \!= (4 / g_t^4) w_t^\txts\!.\! 
\end{align}}%
It coincides with velocity prediction~\citep{salimans2022progressive} and the flow matching formulation~\citep{lipman2023flow,liu2023flow}: $\bfvv:=\alpha_t' \bfxx_0 \!+\! \sigma_t' \bfeps$ is the conditional vector field given $\bfxx_0$ and $\bfeps$ (same distribution as $\bfxx_T$).
In particular, if we set $\alpha_t=1-t$, $\sigma_t=t$, $T=1$, then $a_t=\frac{1}{t-1}, \frac{1}{2}g_t^2=\frac{t}{1-t}$, and we have $\bfvv=\bfeps-\bfxx_0$, which corresponds to the Flow Matching~\citep{lipman2023flow,liu2023flow} formulation that is also widely used in generative modeling recently due to its simplicity.

\section{Optimal Solution of Diffusion Models}\label{appx:opt-sol}
In this section, we provide a detailed derivation of the optimal solution for diffusion models across various formulations. We demonstrate that in every case the models’ learning targets are conditional expectations, as discussed in \secref{optexpr}. To this end, we begin by introducing a useful lemma that outlines a key property of conditional expectations~\citep{durrett2019probability}.

\begin{lemma}\label{lemma:conditional expectation}
    Let $\mathbf{x}, \mathbf{y}$ be random vectors. Then the optimal approximation of $\mathbf{y}$  based on $\mathbf{x}$ is 
    \begin{align}
        \bfff^*(\mathbf{x})=\mathop{\arg\min}\limits_{\bfff:\bbR^d\to\bbR^d}\mathbb{E}\Vert\mathbf{y}-\bfff(\mathbf{x})\Vert^2=\mathbb{E}[\mathbf{y}|\mathbf{x}].
    \end{align}
\end{lemma}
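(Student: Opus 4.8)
The plan is to prove this as the standard orthogonal-projection fact for $L^2$: the conditional expectation $\mathbb{E}[\mathbf{y}\mid\mathbf{x}]$ is the $L^2$-projection of $\mathbf{y}$ onto the subspace of square-integrable $\sigma(\mathbf{x})$-measurable functions, hence minimizes the squared distance to $\mathbf{y}$. Concretely, I would first reduce to the interesting case: assume $\mathbf{y}\in L^2$ (otherwise $\mathbb{E}\|\mathbf{y}-\mathbf{f}(\mathbf{x})\|^2=\infty$ for every $\mathbf{f}$ and the statement is vacuous), and note that any candidate $\mathbf{f}:\mathbb{R}^d\to\mathbb{R}^d$ with $\mathbb{E}\|\mathbf{f}(\mathbf{x})\|^2=\infty$ cannot be optimal, so it suffices to compare against candidates in $L^2$. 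Writing $\mathbf{m}(\mathbf{x}):=\mathbb{E}[\mathbf{y}\mid\mathbf{x}]$, I would then add and subtract $\mathbf{m}(\mathbf{x})$ and expand the squared norm:
\begin{align}
  \mathbb{E}\|\mathbf{y}-\mathbf{f}(\mathbf{x})\|^2 = \mathbb{E}\|\mathbf{y}-\mathbf{m}(\mathbf{x})\|^2 + \mathbb{E}\|\mathbf{m}(\mathbf{x})-\mathbf{f}(\mathbf{x})\|^2 + 2\,\mathbb{E}\big\langle \mathbf{y}-\mathbf{m}(\mathbf{x}),\, \mathbf{m}(\mathbf{x})-\mathbf{f}(\mathbf{x})\big\rangle .
\end{align}

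The key step is to show the cross term vanishes. Since $\mathbf{m}(\mathbf{x})-\mathbf{f}(\mathbf{x})$ is $\sigma(\mathbf{x})$-measurable and all three vectors lie in $L^2$ (so the product is integrable by Cauchy–Schwarz), the tower property gives
\begin{align}
  \mathbb{E}\big\langle \mathbf{y}-\mathbf{m}(\mathbf{x}),\, \mathbf{m}(\mathbf{x})-\mathbf{f}(\mathbf{x})\big\rangle = \mathbb{E}\big\langle \mathbb{E}[\mathbf{y}-\mathbf{m}(\mathbf{x})\mid\mathbf{x}],\, \mathbf{m}(\mathbf{x})-\mathbf{f}(\mathbf{x})\big\rangle = 0,
\end{align}
because $\mathbb{E}[\mathbf{y}\mid\mathbf{x}]=\mathbf{m}(\mathbf{x})$ by definition makes the inner conditional expectation zero. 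Substituting back yields
\begin{align}
  \mathbb{E}\|\mathbf{y}-\mathbf{f}(\mathbf{x})\|^2 = \mathbb{E}\|\mathbf{y}-\mathbf{m}(\mathbf{x})\|^2 + \mathbb{E}\|\mathbf{m}(\mathbf{x})-\mathbf{f}(\mathbf{x})\|^2 \ge \mathbb{E}\|\mathbf{y}-\mathbf{m}(\mathbf{x})\|^2 ,
\end{align}
with equality iff $\mathbb{E}\|\mathbf{m}(\mathbf{x})-\mathbf{f}(\mathbf{x})\|^2=0$, i.e. $\mathbf{f}(\mathbf{x})=\mathbb{E}[\mathbf{y}\mid\mathbf{x}]$ almost surely. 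This identifies the argmin (as an $L^2$ element) and proves the lemma.

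I do not expect a genuine obstacle here: this is a classical result and the computation is routine. The only points needing care are measure-theoretic bookkeeping — restricting to $L^2$ candidates so the objective and the inner products are finite, invoking the defining property $\mathbb{E}[\mathbf{y}-\mathbb{E}[\mathbf{y}\mid\mathbf{x}]\mid\mathbf{x}]=0$ together with the tower rule to kill the cross term, and stating uniqueness of the minimizer only up to almost-sure equality. For the vector-valued case one can either run the inner-product argument above directly or apply the scalar statement coordinatewise and sum; both are immediate. I would cite a standard probability reference such as \citet{durrett2019probability} for the existence and the basic properties of conditional expectation rather than reproving them.
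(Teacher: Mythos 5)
Your proof is correct and follows essentially the same route as the paper: add and subtract $\mathbb{E}[\mathbf{y}\mid\mathbf{x}]$, expand the square, kill the cross term via the tower property, and conclude by nonnegativity of the residual term. The extra measure-theoretic bookkeeping you flag (restricting to $L^2$ candidates, uniqueness up to a.s.\ equality) is sound but not spelled out in the paper's version.
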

\begin{proof}
    We can compute $\mathbb{E}\Vert\mathbf{y}-\bfff(\mathbf{x})\Vert^2$ directly by
    \begin{align}
        \mathbb{E}\Vert\mathbf{y}-\bfff(\mathbf{x})\Vert^2&= \mathbb{E}\Vert\mathbf{y}-\mathbb{E}[\mathbf{y}|\mathbf{x}]+\mathbb{E}[\mathbf{y}|\mathbf{x}]-\bfff(\mathbf{x})\Vert^2\\
        &= \mathbb{E}\Vert\mathbf{y}-\mathbb{E}[\mathbf{y}|\mathbf{x}]\Vert^2+\mathbb{E}\left[\Vert\mathbb{E}[\mathbf{y}|\mathbf{x}]-\bfff(\mathbf{x})\Vert^2\right] + 2\mathbb{E}\langle\mathbf{y}-\mathbb{E}[\mathbf{y}|\mathbf{x}],\mathbb{E}[\mathbf{y}|\mathbf{x}]-\bfff(\mathbf{x})\rangle. 
    \end{align}
    Since
    \begin{align}
        \mathbb{E}\langle\mathbf{y}-\mathbb{E}[\mathbf{y}|\mathbf{x}],\mathbb{E}[\mathbf{y}|\mathbf{x}]-\bfff(\mathbf{x})\rangle = \mathbb{E}\left[\mathbb{E}\langle\mathbf{y}-\mathbb{E}[\mathbf{y}|\mathbf{x}],\mathbb{E}[\mathbf{y}|\mathbf{x}]-\bfff(\mathbf{x})\rangle|\mathbf{x}\right]=0,
    \end{align}
    we have the following decomposition:
    \begin{align}
        \mathbb{E}\Vert\mathbf{y}-\bfff(\mathbf{x})\Vert^2=\mathbb{E}\Vert\mathbf{y}-\mathbb{E}[\mathbf{y}|\mathbf{x}]\Vert^2+\mathbb{E}\left[\Vert\mathbb{E}[\mathbf{y}|\mathbf{x}]-\bfff(\mathbf{x})\Vert^2\right].
    \end{align}
    Since $\mathbb{E}\left[\Vert\mathbb{E}[\mathbf{y}|\mathbf{x}]-\bfff(\mathbf{x})\Vert^2\right]\ge0$, we have
    \begin{align}
        \mathbb{E}\Vert\mathbf{y}-\bfff(\mathbf{x})\Vert^2
        &=\mathbb{E}\Vert\mathbf{y}-\mathbb{E}[\mathbf{y}|\mathbf{x}]\Vert^2+\mathbb{E}\left[\Vert\mathbb{E}[\mathbf{y}|\mathbf{x}]-\bfff(\mathbf{x})\Vert^2\right]\\
        &\ge \mathbb{E}\Vert\mathbf{y}-\mathbb{E}[\mathbf{y}|\mathbf{x}]\Vert^2.
    \end{align}
    The inequality becomes equality if and only if $\bfff(\mathbf{x})=\mathbb{E}[\mathbf{y}|\mathbf{x}].$
    So the the optimal approximation of $\mathbf{y}$  based on $\mathbf{x}$ is $\mathbb{E}[\mathbf{y}|\mathbf{x}]$, \ie,
    \begin{align}
        \bfff^*(\mathbf{x})=\mathop{\arg\min}\limits_{\bfff:\bbR^d\to\bbR^d}\mathbb{E}\Vert\mathbf{y}-\bfff(\mathbf{x})\Vert^2=\mathbb{E}[\mathbf{y}|\mathbf{x}].
    \end{align}
\end{proof}
\paragraph{Score prediction target.} The score prediction target of diffusion model is given by
\begin{align}
  J_t^\txts(\bftheta):=& \bbE_{p(t)}w_t^{\txts}\bbE_{p_0(\bfxx_0)} \bbE_{p(\bfxx_t \mid \bfxx_0)} \lrVert{\bfss_\bftheta(\bfxx_t,t) - \nabla_{\bfxx_t} \log p(\bfxx_t \mid \bfxx_0)}^2.
\end{align}
Then by Lemma~\ref{lemma:conditional expectation}, for any $t$ satisfying $w_t^{\txts}>0$, the optimal solution of the network $\bfss_\bftheta(\cdot,t):\bbR^d\to\bbR^d$ is given by
\begin{align}
    \bfss_\bftheta^*(\bfxx_t,t) ={} & \bbE_{p(\bfxx_0 \mid \bfxx_t)}[\nabla_{\bfxx_t} \log p(\bfxx_t \mid \bfxx_0)].
\end{align}

\paragraph{Noise prediction target.} The noise prediction target of diffusion model is given by
\begin{align}
    J_t^\txteps(\bftheta) :={} & \bbE_{p(t)}w_t^\txteps\bbE_{p_0(\bfxx_0)} \bbE_{p(\bfeps)} \lrVert{\bfeps_\bftheta(\alpha_t \bfxx_0 + \sigma_t \bfeps, t) - \bfeps}^2.
\end{align}
Then by Lemma~\ref{lemma:conditional expectation}, for any $t$ satisfying $w_t^{\txteps}>0$, the optimal solution of the network $\bfeps_\bftheta(\cdot,t):\bbR^d\to\bbR^d$ is given by
\begin{align}
    \bfeps_\bftheta^*(\bfxx_t,t) ={} & \bbE_{p(\bfeps \mid \bfxx_t)}[\bfeps].
\end{align}
\paragraph{Clean-data prediction target.} The clean-data prediction target of diffusion model is given by
\begin{align}
    & J_t^\txtx(\bftheta) := \bbE_{p(\bfxx_0)} \bbE_{p(\eps)} \lrVert{\bfxx_{0\bftheta}(\alpha_t \bfxx_0 + \sigma_t \bfeps, t) - \bfxx_0}^2 .
\end{align}
Then by Lemma~\ref{lemma:conditional expectation}, for any $t$ satisfying $w_t^{\txtx}>0$, the optimal solution of the network $\bfxx_{0\bftheta}(\cdot,t):\bbR^d\to\bbR^d$ is given by
\begin{align}
    \bfxx_{0\bftheta}^*(\bfxx_t, t) ={} & \bbE_{p(\bfxx_0 \mid \bfxx_t)}[\bfxx_0].
\end{align}
\paragraph{Vector-field prediction target.} The vector field prediction target of diffusion model is given by
\begin{align}
    J_t^\txtv(\bftheta) \! :={} & \bbE_{p(\bfxx_0)} \bbE_{p(\eps)} \lrVert{\bfvv_{\bftheta}(\alpha_t \bfxx_0 \!+\! \sigma_t \bfeps, t) - (\alpha_t' \bfxx_0 \!+\! \sigma_t' \bfeps)}^2.
\end{align}
Then by Lemma~\ref{lemma:conditional expectation}, for any $t$ satisfying $w_t^{\txtv}>0$, the optimal solution of the network $\bfvv_{\bftheta}(\cdot,t):\bbR^d\to\bbR^d$ is given by
\begin{align}
 \bfvv^*_\bftheta(\bfxx_t,t) ={} & \bbE_{p(\bfxx_0, \bfeps \mid \bfxx_t)}[\alpha_t' \bfxx_0 + \sigma_t' \bfeps].
\end{align}

\section{Details on the General Diffusion Formulation}\label{appx:conversion}

In this section, we introduce the detailed conversion between different diffusion formulations. As we have mentioned in Sec.~\ref{sec:diff-conv}, we can convert previous schedules to EDM formulation. The loss function in EDM formulation is given by
\begin{align}
  J(\theta) = \bbE_{p(\sigma)}w_\sigma\bbE_{p(\bfxx_0),p(\bfeps)} \Vert \bfxx_{0\bftheta}(\bfxx_0+\sigma\bfeps,\sigma)-\bfxx_0\Vert^2,
\end{align}
where the denoiser has precondition $\bfxx_{0\bftheta}(\bfxx,\sigma)=c_\sigma^\cskip \bfxx + c_\sigma^\cout \bfF_\theta(c_\sigma^\cin \bfxx,c_\sigma^\cnoise).$

\subsection{Convert VP Schedules to EDM Formulation}
\paragraph{DDPM (VP), $\bfeps$-prediction loss function.} The loss function in DDPM formulation~\citep{ho2020denoising,song2021score} is given by
\begin{align}\label{appx:ddpm-obj}
    J_{\text{DDPM}}^{(\bfeps)} = \bbE_{p^{\text{DDPM}}(t)}\bbE_{p(\bfxx_0),p(\bfeps)} \left\Vert\bfeps_\theta\left(\alpha_t\bfxx_0+\sigma_t\bfeps,(M-1)t\right)-\bfeps\right\Vert^2,
\end{align}
where $M=1000$, $p^{\text{DDPM}}(t) = \clU(\varepsilon_t,1), \varepsilon_t=10^{-5}$, and 
\begin{align}
    \sigma_t&=\sqrt{1-\exp(-(\beta_0t+\frac{1}{2}(\beta_{\text{max}}-\beta_0)t^2))}, \\
    \alpha_t&=\sqrt{1-\sigma^2}=\exp(-(\beta_0t+\frac{1}{2}(\beta_{\text{max}}-\beta_0)t^2)).
\end{align}
 The diffusion process satisfying $\alpha_t^2+\sigma_t^2=1$ is also known as the variance preserving (VP) process. In order to convert the DDPM formulation to the EDM formulation, we should transform the diffusion process to VE. 
Dividing $\sqrt{1-\sigma_t^2}$ in both side of the equation $\bfxx_t=\sqrt{1-\sigma^2_t}\bfxx_0+\sigma_t\bfeps$, we have
\begin{align}
    \frac{\bfxx_t}{\sqrt{1-\sigma_t^2}}=\bfxx_0+\frac{\sigma_t}{\sqrt{1-\sigma_t^2}}\bfeps.
\end{align}
Let $\hat{\bfxx}_{t}=\frac{\bfxx_t}{\sqrt{1-\sigma^2_t}}$ and $\hat{\sigma}_t=\frac{\sigma_t}{\sqrt{1-\sigma_t^2}}$, then $\hat{\bfxx}_{t}=\bfxx_0+\hat{\sigma}_t\bfeps$, \ie, $\hat{\bfxx}_{t}$, is a VE process. The inverse transform is given by $\sigma_t=\frac{\hat{\sigma}_t}{\sqrt{1+\hat{\sigma}_t^2}}$. Under this transformation, the loss function of DDPM becomes
\begin{align}
    J_{\text{DDPM}}^{(\bfeps)} &= \bbE_{p^{\text{DDPM}}(t)}\bbE_{p(\bfxx_0),p(\bfeps)}\left\Vert\bfeps_\theta(\sqrt{1-\sigma^2_t}\bfxx_0+\sigma_t\bfeps,(M-1)t)-\bfeps\right\Vert^2\\
    &=\bbE_{p^{\text{DDPM}}(t)}\bbE_{p(\bfxx_0),p(\bfeps)}\left\Vert\bfeps_\theta(\bfxx_t,(M-1)t)-\frac{\bfxx_t-\sqrt{1-\sigma_t^2}\bfxx_0}{\sigma_t}\right\Vert^2\\
    &=\bbE_{p^{\text{DDPM}}(t)}\bbE_{p(\bfxx_0),p(\bfeps)}\left(\frac{1-\sigma_t^2}{\sigma_t^2}\right)\left\Vert\frac{\bfxx_t}{\sqrt{1-\sigma_t^2}}-\frac{\sigma_t}{\sqrt{1-\sigma_t^2}}\bfeps_\theta(\bfxx_t,(M-1)t)-\bfxx_0\right\Vert^2\\
    &=\bbE_{p^{\text{DDPM}}(t)}\bbE_{p(\bfxx_0),p(\bfeps)} \frac{1}{\hat{\sigma}_t^2}\left\Vert\hat{\bfxx}_t-\hat{\sigma}_t\bfeps_\theta\left(\frac{\hat{\bfxx}_{t}}{\sqrt{1+\hat{\sigma}_t^2}},(M-1)t\right)-\bfxx_0\right\Vert^2,
\end{align}
where we have used the relations $\hat{\bfxx}_{t}=\frac{\bfxx_t}{\sqrt{1-\sigma^2_t}}$ and $\sigma_t=\frac{\hat{\sigma}_t}{\sqrt{1+\hat{\sigma}_t^2}}$ to get the last equation. Comparing the loss function with the general loss function of EDM,
\begin{align}
  &J_{\text{DDPM}}^{(\bfeps)} = \bbE_{p^{\text{DDPM}}(\hat{\sigma})}w^{\text{DDPM}}(\hat{\sigma})\bbE_{p(\bfxx_0),p(\bfeps)} \Vert \bfxx_{0\bftheta}(\bfxx_0+\hat{\sigma}\bfeps,\hat{\sigma})-\bfxx_0\Vert^2, \\
  & \text{where} \quad\bfxx_{0\bftheta}(\bfxx,\hat{\sigma})=c_{\text{skip}}^{\text{DDPM}}(\hat{\sigma})\bfxx+c_{\text{out}}^{\text{DDPM}}(\hat{\sigma}) \bfeps_\theta(c_{\text{in}}^{\text{DDPM}}(\hat{\sigma})\bfxx,c_{\text{noise}}^{\text{DDPM}}(\hat{\sigma})),
\end{align}
we get 
\begin{align}
 & c_{\text{skip}}^{\text{DDPM}}(\hat{\sigma}_t)=1,
 && c_{\text{out}}^{\text{DDPM}}(\hat{\sigma}_t)=-\hat{\sigma}_t, \\
 & c_{\text{in}}^{\text{DDPM}}(\hat{\sigma}_t)=\frac{1}{\sqrt{1+\hat{\sigma}_t^2}},
 &&  c_{\text{noise}}^{\text{DDPM}}(\hat{\sigma}_t)=(M-1)t.
\end{align}
And the training schedule is given by
\begin{align}
    w^{\text{DDPM}}_{\hat{\sigma}}=\frac{1}{\hat{\sigma}^2}, \quad
    p^{\text{DDPM}}(\hat{\sigma})=\left(\frac{\sigma}{\sqrt{1-\sigma^2}}\right)_{\#}{\sigma_t}_{\#}\clU(\varepsilon_t,1).
\end{align}

\subsection{Convert VE Schedules to EDM Formulation}
\paragraph{Review of EDM, $\bfF$-prediction loss function.} EDM~\cite{karras2022elucidating} proposes the ``unit variance principle'' to derive the EDM precondition. Recall that the denoiser has precondition $\bfxx_{0\bftheta}(\bfxx,\sigma)=c_\sigma^\cskip \bfxx + c_\sigma^\cout \bfF_\theta(c_\sigma^\cin \bfxx, c_\sigma^\cnoise)$, where $\bfF_\theta$ is the neural network,
then the effective loss function is given by
\begin{align}
  J^{(\bfF)}_{\text{EDM}}(\theta) &= \bbE_{p^{\text{EDM}}(\sigma)}w^{\text{EDM}}(\sigma)\bbE_{p(\bfxx_0),p(\bfeps)} \Vert \bfxx_{0\bftheta}(\bfxx_0+\sigma\bfeps,\sigma)-\bfxx_0\Vert^2 \\
  &=\bbE_{p^{\text{EDM}}(\sigma)} w^{\text{EDM}}(\sigma) {c_\sigma^\cout}^2 \bbE_{p(\bfxx_0),p(\bfeps)} \Vert \bfF_{\bftheta}(c_\sigma^\cin \bfxx_\sigma,c_\sigma^\cnoise)-\frac{\bfxx_0-c_\sigma^\cskip \bfxx_\sigma}{c_\sigma^\cout}\Vert^2,
\end{align}
 where $\bfxx_\sigma=\bfxx_0+\sigma\bfeps$. The unit variance principle is given by
 \begin{align}
     &\text{Var}(c_\sigma^\cin \bfxx_\sigma)=1, \\
     &\text{Var}\left(\frac{\bfxx_0-c_\sigma^\cskip \bfxx_\sigma}{c_\sigma^\cout}\right)=1,\\
     &w^{\text{EDM}}(\sigma) {c_\sigma^\cout}^2 = 1.
 \end{align}
 Then we get the explicit expression of the precondition as follows:
\begin{align}
 & c_{\text{skip}}^{\text{EDM}}(\sigma)=\frac{\sigma^2_{\text{data}}}{\sigma^2+\sigma^2_{\text{data}}},
 && c_{\text{out}}^{\text{EDM}}(\sigma)=\frac{\sigma\cdot\sigma_{\text{data}}}{\sqrt{\sigma^2+\sigma^2_{\text{data}}}}, \\
 & c_{\text{in}}^{\text{EDM}}(\sigma)=\frac{1}{\sqrt{\sigma^2+\sigma^2_{\text{data}}}},
 &&  c_{\text{noise}}^{\text{EDM}}(\sigma)=\frac{1}{4}\ln{\sigma}.
\end{align}
And the training schedule is given by
\begin{align}
    w^{\text{EDM}}_\sigma=\frac{\sigma^2+\sigma^2_{\text{data}}}{(\sigma\cdot\sigma_{\text{data}})^2}, \quad
    p^{\text{EDM}}(\sigma)=\exp_{\#}\clN(P_{\text{mean}},P_{\text{std}}^2).
\end{align}
\citet{lu2024simplifying} shows that when $\sigma_t=\sin(\frac{\pi}{2}t)$, the EDM loss function is equivalent to $\bfvv$-prediction of VP process.

\paragraph{NCSN (VE), $\bfeps$-prediction loss function.}  The loss function in NCSN (VE) formulation~\citep{song2021score} is given by
\begin{align}
    J_{\text{NCSN}}^{(\bfeps)} = \bbE_{p^{\text{NCSN}}(\sigma)}\bbE_{p(\bfxx_0),p(\bfeps)} \left\Vert\bfeps_\theta\left(\bfxx_0+\sigma\bfeps,\ln{(\frac{\sigma}{2})}\right)+\bfeps \right \Vert^2,
\end{align}
where $p^{\text{NCSN}}(\sigma)=\exp_\#\clU(\ln{\sigma_{\text{min}}},\ln{\sigma_{\text{max}}})$, \ie, $\ln\sigma\sim\clU(\ln{\sigma_{\text{min}}},\ln{\sigma_{\text{max}}})$. Then we can convert it to the EDM formulation:
\begin{align}
    J_{\text{NCSN}}^{(\bfeps)} &= \bbE_{p^{\text{NCSN}}(\sigma)}\bbE_{p(\bfxx_0),p(\bfeps)} \left\Vert\bfeps_\theta\left(\bfxx_0+\sigma\bfeps,\ln{(\frac{\sigma}{2})}\right)+\bfeps \right \Vert^2 \\
    &=\bbE_{p^{\text{NCSN}}(\sigma)}\bbE_{p(\bfxx_0),p(\bfeps)} \left\Vert\bfeps_\theta\left(\bfxx_0+\sigma\bfeps,\ln{(\frac{\sigma}{2})}\right)+\frac{\bfxx_\sigma-\bfxx_0}{\sigma} \right \Vert^2\\
    &=\bbE_{p^{\text{NCSN}}(\sigma)}\bbE_{p(\bfxx_0),p(\bfeps)} \frac{1}{\sigma^2}\left\Vert\bfxx_\sigma+\sigma\bfeps_\theta\left(\bfxx_0+\sigma\bfeps,\ln{(\frac{\sigma}{2})}\right)-\bfxx_0 \right \Vert^2.
\end{align}
Then we get the explicit expression of the precondition as follows:
\begin{align}
 & c_{\text{skip}}^{\text{NCSN}}(\sigma)=1,
 && c_{\text{out}}^{\text{NCSN}}(\sigma)=\sigma, \\
 & c_{\text{in}}^{\text{NCSN}}(\sigma)=1,
 &&  c_{\text{noise}}^{\text{NCSN}}(\sigma)=\ln{(\frac{\sigma}{2})}.
\end{align}
And the training schedule is given by
\begin{align}
    w^{\text{NCSN}}_\sigma=\frac{1}{\sigma^2}, \quad
    p^{\text{NCSN}}(\sigma)=\exp_\#\clU(\ln{\sigma_{\text{min}}},\ln{\sigma_{\text{max}}}).
\end{align}

\subsection{Convert Flow Matching Schedules to EDM Formulation}
\paragraph{Flow Matching, $\bfvv$-prediction loss function.} In the original Flow Matching paper~\citep{lipman2023flow}, $p_t(\bfxx_t)$ is the noise distribution when $t=0$ and becomes the data distribution when $t=1$. To align with the time line of diffusion models, we revert the original Flow Matching construction, \ie, $p_t(\bfxx_t)$ is the data distribution when $t=0$ and becomes the noise distribution when $t=1$. Then the loss function in the Flow Matching formulation is given by
\begin{align}\label{appx:fm-obj}
    J_{\text{FM}}^{(\bfvv)}(\theta)=\bbE_{p^{\text{FM}}(t)}\bbE_{p(\bfxx_0),p(\bfeps)}\left\Vert \bfvv_\theta\left(\alpha_t\bfxx_0+\sigma_t\bfeps,t\right) - (\bfeps-\bfxx_0)\right\Vert^2,
\end{align}
where $\alpha_t=1-t, \sigma_t=t, p^{\text{FM}}(t)=\clU(0,1)$. In order to convert the Flow Matching formulation to the EDM formulation, we should transform the flow matching diffusion process to the VE diffusion process. Dividing $(1-\sigma_t)$ in both side of the equation $\bfxx_t=(1-\sigma_t)\bfxx_0+\sigma_t\bfeps$, we have 
\begin{align}
    \frac{\bfxx_t}{1-\sigma_t}=\bfxx_0+\frac{\sigma_t}{1-\sigma_t}\bfeps.
\end{align}
Let $\hat{\bfxx}_{t}=\frac{\bfxx_t}{1-\sigma_t}, ~\hat{\sigma}_t=\frac{\sigma_t}{1-\sigma_t}$, then $\hat{\bfxx}_{t}$ satisfies $\hat{\bfxx}_t=\bfxx_0+\hat{\sigma}_t\bfeps$, \ie, $\hat{\bfxx}_{t}$ is a VE process. The inverse transform is given by $\sigma_t=\frac{\hat{\sigma}_t}{1+\hat{\sigma}_t}$. Under this transformation, the loss function of Flow Matching becomes   
\begin{align}
J_{\text{FM}}^{(\bfvv)}(\theta)&=\bbE_{p^{\text{FM}}(\sigma_t)}\bbE_{p(\bfxx_0),p(\bfeps)}\Vert \bfvv_\theta(\bfxx_t,\sigma_t)-(\bfeps-\bfxx_0)\Vert^2\\
&=\bbE_{p^{\text{FM}}(\sigma_t)}\bbE_{p(\bfxx_0),p(\bfeps)}\Vert \bfvv_\theta(\bfxx_t,\sigma_t)-\frac{\bfxx_t-\bfxx_0}{\sigma_t}\Vert^2\\
&=\bbE_{p^{\text{FM}}(\sigma_t)}\bbE_{p(\bfxx_0),p(\bfeps)}\frac{1}{\sigma_t^2}\Vert \bfxx_t-\sigma_t \bfvv_\theta(\bfxx_t,\sigma_t)-\bfxx_0\Vert^2\\
&=\bbE_{p^{\text{FM}}(\hat{\sigma})}\bbE_{p(\bfxx_0),p(\bfeps)}\left(\frac{1+\hat{\sigma}_t}{\hat{\sigma}_t}\right)^2\left\Vert \frac{\hat{\bfxx}_t}{1+\hat{\sigma}_t}-\frac{\hat{\sigma}_t}{1+\hat{\sigma}_t}\bfvv_\theta\left(\frac{\hat{\bfxx}_t}{1+\hat{\sigma}_t},\frac{\hat{\sigma}_t}{1+\hat{\sigma}_t}\right)-\bfxx_0\right\Vert^2,
\end{align}
where we use the relations $\hat{\bfxx}_{t}=\frac{\bfxx_t}{1-\sigma_t},\sigma_t=\frac{\hat{\sigma}_t}{1+\hat{\sigma}_t}$ to get the last equation. Compare the loss function with the general loss function of EDM
\begin{align}
  J(\theta) = \bbE_{p(\sigma)}w_\sigma\bbE_{p(\bfxx_0),p(\bfeps)} \Vert \bfxx_{0\bftheta}(\bfxx_0+\sigma\bfeps,\sigma)-\bfxx_0\Vert^2,
\end{align}
we get 
\begin{align}
 & c_{\text{skip}}^{\text{FM}}(\hat{\sigma})=\frac{1}{1+\hat{\sigma}},
 && c_{\text{out}}^{\text{FM}}(\hat{\sigma})=-\frac{\hat{\sigma}}{1+\hat{\sigma}}, \\
 &c_{\text{in}}^{\text{FM}}(\hat{\sigma})=\frac{1}{1+\hat{\sigma}},
 &&  c_{\text{noise}}^{\text{FM}}(\hat{\sigma})=\frac{\hat{\sigma}}{1+\hat{\sigma}}.
\end{align}
And the training schedule is given by
\begin{align}
    w^{\text{FM}}_{\hat{\sigma}}=\frac{(1+\hat{\sigma})^2}{\hat{\sigma}^2}, \quad
    p^{\text{FM}}(\hat{\sigma})=(\frac{\sigma}{1-\sigma})_{\#}\clU(0,1).
\end{align}

\paragraph{Stable Diffusion 3, $\bfvv$-prediction loss function.} The Stable Diffusion 3 framework \citep{esser2024scaling} also uses the Flow Matching (Rectified Flow) diffusion process and constructs $\bfvv$-prediction loss function. The difference is that SD3 proposes the logit-normal noise schedule, \ie, 
\begin{align}
    p_{\text{ln}}(t;m,s) = \frac{1}{s\sqrt{2\pi}}\frac{1}{t(1-t)}\exp\left(-\frac{(\log\frac{t}{1-t}-m)^2}{2s^2}\right).
\end{align}
\citet{esser2024scaling} shows that $m=0,s=1$ consistently achieves good performance. Let $p^{\text{SD3}}(t)=p_{\text{ln}}(t;0,1)$. Then the SD3 loss function is given by
\begin{align}
    J_{\text{SD3}}^{(\bfvv)}(\theta)=\bbE_{p^{\text{SD3}}(t)}\bbE_{p(\bfxx_0),p(\bfeps)}\left\Vert \bfvv_\theta\left(\alpha_t\bfxx_0+\sigma_t\bfeps,t\right) - (\bfeps-\bfxx_0)\right\Vert^2.
\end{align}
It is obvious that the SD3 loss function has the same precondition with FM, \ie,
\begin{align}
     & c_{\text{skip}}^{\text{SD3}}(\hat{\sigma})=\frac{1}{1+\hat{\sigma}},
 && c_{\text{out}}^{\text{SD3}}(\hat{\sigma})=-\frac{\hat{\sigma}}{1+\hat{\sigma}}, \\
 &c_{\text{in}}^{\text{SD3}}(\hat{\sigma})=\frac{1}{1+\hat{\sigma}},
 &&  c_{\text{noise}}^{\text{SD3}}(\hat{\sigma})=\frac{\hat{\sigma}}{1+\hat{\sigma}}.
\end{align}
Since $\hat{\sigma}_t=\frac{\sigma_t}{1-\sigma_t}=\frac{t}{1-t}=\text{logit}(t)$, then $p^{\text{SD3}}(\hat{\sigma}_t)=\clN(0,1)$. So the training schedule is given by
\begin{align}
    w^{\text{SD3}}_{\hat{\sigma}}=\frac{(1+\hat{\sigma})^2}{\hat{\sigma}^2}, \quad
    p^{\text{SD3}}(\hat{\sigma})=\exp_{\#}\clN(0,1).
\end{align}

\section{Proofs}\label{appx:proofs}
\subsection{Proof of Theorem~\ref{thm:optloss}}\label{appx:proof-optloss}

\begin{theorem}
  The optimal loss value for clean-data prediction defined in \eqnref{losst-wt-x0} is: 
  {\addtolength{\abovedisplayskip}{-2pt}
  \addtolength{\belowdisplayskip}{-8pt}
  \begin{align}
    {J_t^\txtx}^* = \underbrace{\bbE_{p(\bfxx_0)} \lrVert{\bfxx_0}^2}_{=: A} - \underbrace{\bbE_{p(\bfxx_t)} \lrVert{\bbE_{p(\bfxx_0 \mid \bfxx_t)}[\bfxx_0]}^2}_{=: B_t}, 
    \quad 
    J^* = \bbE_{p(t)} w_t^\txtx {J_t^\txtx}^*.
  \end{align}}%
\end{theorem}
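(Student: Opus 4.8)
The plan is to combine the optimal-solution characterization of the clean-data prediction model (already established in \appxref{opt-sol}) with the elementary variance decomposition, and then dispose of the outer expectation over diffusion steps separately.

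First I would recall from Lemma~\ref{lemma:conditional expectation} that, with the joint law $p(\bfxx_0,\bfxx_t,\bfeps) := p_0(\bfxx_0) p(\bfeps) \delta_{\alpha_t \bfxx_0 + \sigma_t \bfeps}(\bfxx_t)$, the minimizer of $J_t^\txtx(\bftheta) = \bbE_{p(\bfxx_0)}\bbE_{p(\bfeps)}\lrVert{\bfxx_{0\bftheta}(\alpha_t\bfxx_0 + \sigma_t\bfeps, t) - \bfxx_0}^2$ over measurable functions $\bfxx_{0\bftheta}(\cdot,t)$ is attained at $\bfxx_{0\bftheta}^*(\bfxx_t,t) = \bbE_{p(\bfxx_0 \mid \bfxx_t)}[\bfxx_0]$. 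Substituting this back and rewriting the expectation as a nested one over $p(\bfxx_t)$ and $p(\bfxx_0 \mid \bfxx_t)$ gives ${J_t^\txtx}^* = \bbE_{p(\bfxx_t)}\bbE_{p(\bfxx_0 \mid \bfxx_t)}\lrVert{\bfxx_0 - \bbE_{p(\bfxx_0' \mid \bfxx_t)}[\bfxx_0']}^2$, i.e. the $\bfxx_t$-average of the conditional variance of $\bfxx_0$ given $\bfxx_t$. Next I would apply the identity $\bbE\lrVert{\mathbf{z} - \bbE\mathbf{z}}^2 = \bbE\lrVert{\mathbf{z}}^2 - \lrVert{\bbE\mathbf{z}}^2$ to the inner conditional expectation (with $\bfxx_t$ held fixed): $\bbE_{p(\bfxx_0 \mid \bfxx_t)}\lrVert{\bfxx_0 - \bbE_{p(\bfxx_0' \mid \bfxx_t)}[\bfxx_0']}^2 = \bbE_{p(\bfxx_0 \mid \bfxx_t)}\lrVert{\bfxx_0}^2 - \lrVert{\bbE_{p(\bfxx_0 \mid \bfxx_t)}[\bfxx_0]}^2$. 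Taking the outer expectation over $p(\bfxx_t)$, the first term collapses by the tower property to $\bbE_{p(\bfxx_0)}\lrVert{\bfxx_0}^2 = A$ (the joint marginalizes to $p_0$ in $\bfxx_0$), and the second term is exactly $B_t = \bbE_{p(\bfxx_t)}\lrVert{\bbE_{p(\bfxx_0 \mid \bfxx_t)}[\bfxx_0]}^2$, yielding ${J_t^\txtx}^* = A - B_t$.

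For the total loss $J(\bftheta) = \bbE_{p(t)} w_t^\txtx J_t^\txtx(\bftheta)$, I would note that the model value $\bfxx_{0\bftheta}(\cdot,t)$ can be prescribed independently for (almost) every $t$, so the infimum over $\bftheta$ can be pushed inside the outer expectation over $p(t)$; hence $J^* = \bbE_{p(t)} w_t^\txtx \inf_\bftheta J_t^\txtx(\bftheta) = \bbE_{p(t)} w_t^\txtx {J_t^\txtx}^*$. The only genuinely delicate points — which I would package as standing assumptions rather than belabor — are the usual idealizations: enough model capacity that the per-step conditional-expectation minimizers are \emph{jointly} realizable, the measurable existence of $\bfxx_t \mapsto \bbE_{p(\bfxx_0\mid\bfxx_t)}[\bfxx_0]$ as a genuine function, and the integrability condition $\bbE_{p(\bfxx_0)}\lrVert{\bfxx_0}^2 < \infty$ so that every expectation above and the variance decomposition are well defined. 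None of these is an obstacle in the regularity regime of interest; the substantive content is just the conditional bias–variance split of Step two.
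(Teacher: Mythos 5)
Your proof is correct and follows essentially the same route as the paper: invoke the conditional-expectation optimality lemma to identify $\bfxx_{0\bftheta}^*(\bfxx_t,t)=\bbE_{p(\bfxx_0\mid\bfxx_t)}[\bfxx_0]$, substitute back, and use the conditional variance decomposition $\bbE\lrVert{\mathbf{z}-\bbE\mathbf{z}}^2=\bbE\lrVert{\mathbf{z}}^2-\lrVert{\bbE\mathbf{z}}^2$ together with the tower property to arrive at $A-B_t$. The paper's proof compresses your middle step into a single line, and you additionally make explicit the (routine) interchange of infimum and $\bbE_{p(t)}$ plus some regularity caveats, but these are just spelled-out versions of what the paper leaves implicit.
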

\begin{proof}
According to \appxref{opt-sol}, the optimal solution of the network for clean-data prediction is given by $\bfxx_{0\bftheta}(\bfxx_t, t)=\bbE[\bfxx_0\mid\bfxx_t]$, where $\bfxx_t=\alpha_t\bfxx_0+\sigma_t\bfeps$.
And the loss function is minimized if and only if $\bfxx_{0\bftheta}(\bfxx_t, t)=\bbE[\bfxx_0\mid\bfxx_t]$, and the optimal loss value is given by $\bbE_{p(\bfxx_0),p(\eps)} \lrVert{\bbE[\bfxx_0\mid\bfxx_t] - \bfxx_0}^2$. So we have
\begin{align}
    {J_t^\txtx}^* &=\bbE_{p(\bfxx_0),p(\eps)} \lrVert{\bbE[\bfxx_0\mid\bfxx_t] - \bfxx_0}^2 \\
    &=\bbE_{p(\bfxx_0)} \lrVert{\bfxx_0}^2-\bbE_{p(\bfxx_t)} \lrVert{\bbE_{p(\bfxx_0 \mid \bfxx_t)}[\bfxx_0]}^2,
\end{align}
and $J^* =\bbE_{p(t)} w_t^\txtx {J_t^\txtx}^*$. 
\end{proof}

\subsection{Proof of Theorem~\ref{thm:correction}}\label{appx:proof-cor}
The following theorem is a detailed version of Theorem \ref{thm:correction}. 
\begin{theorem}
    For a given a subset $\{\bfxx_0^{(l)}\}_{l=1}^L$, the $\Bh_t^\cDOL$ estimator given in \eqnref{Bhat-cdol} converges a.s. to the following expression as $M\to\infty$:
    \begin{align}
         \Bh_t^\cDOL\to\mathbb{E}_{p(\bfeps)}\left[\frac{1}{L}\sum\limits_{i=1}^L
         \left\Vert\frac{\sum\limits_{l\neq i}^L\bfxx_{0}^{(l)}K_t(\alpha_t \bfxx_{0}^{(i)}+\sigma_t\bfeps, \bfxx_{0}^{(l)}) + \frac{1}{C}\bfxx_0^{(i)}K_t(\alpha_t \bfxx_{0}^{(i)}+\sigma_t\bfeps, \bfxx_{0}^{(i)})}{\sum\limits_{l\neq i}^LK_t(\alpha_t \bfxx_{0}^{(i)}+\sigma_t\bfeps, \bfxx_{0}^{(l)}) + \frac{1}{C}K_t(\alpha_t \bfxx_{0}^{(i)}+\sigma_t\bfeps, \bfxx_{0}^{(i)})}\right\Vert^2\right].
    \end{align}
    So $\Bh_t^\cDOL$ is a consistent estimator, $\forall C>0$. Furthermore, the $\Bh_t^\cDOL$ estimator with subset size $L$ has the same expectation as the SNIS estimator  $\Bh_t^\SNIS$ with subset size $L-1$ when $M\to\infty, C\to\infty$. 

\end{theorem}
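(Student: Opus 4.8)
The plan is to dispatch the three assertions in order: the almost-sure limit of $\Bh_t^\cDOL$ as $M\to\infty$ for a fixed subset, the $C\to\infty$ identification with the expectation of $\Bh_t^\SNIS$ at subset size $L-1$, and consistency for every $C>0$. For the first, I would note that by construction the $M$ summands of $\Bh_t^\cDOL$ are i.i.d.: the $\mmt$-th summand is a fixed function $h(l_\mmt,\bfeps^{(\mmt)})$ of an index $l_\mmt\sim\clU([L])$ and a noise $\bfeps^{(\mmt)}\sim\clN(\bfzro,\bfI)$, with $\bfxx_t^{(\mmt)}=\alpha_t\bfxx_0^{(l_\mmt)}+\sigma_t\bfeps^{(\mmt)}$. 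The elementary fact that makes everything go through is that the vector inside each $\lrVert{\cdot}^2$ is a convex combination of $\{\bfxx_0^{(l)}\}_{l\in[L]}$: numerator and denominator carry the same nonnegative coefficients, namely $K_t(\bfxx_t^{(\mmt)},\bfxx_0^{(l)})$ for $l\ne l_\mmt$ and $\frac1C K_t(\bfxx_t^{(\mmt)},\bfxx_0^{(l_\mmt)})$ for $l=l_\mmt$, so each summand is at most $\max_{l\in[L]}\lrVert{\bfxx_0^{(l)}}^2$, a finite constant for a fixed finite subset. Kolmogorov's strong law then gives $\Bh_t^\cDOL\to\bbE[h(l,\bfeps)]$ a.s.\ as $M\to\infty$, and conditioning on $l=i$ (probability $1/L$) turns the right-hand side into the claimed stepwise average; here one must keep track that the same index $l$ governs both which data point generates $\bfxx_t$ and which point is down-weighted by $1/C$, so both occurrences of $\bfxx_0^{(i)}$ in the $i$-th term are the correct ones.

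For the $C\to\infty$ identity, I would send $1/C\to0$ inside the limit expression. Provided $L\ge2$, the denominator $\sum_{l\ne i}K_t(\bfxx_t,\bfxx_0^{(l)})$ is strictly positive (each $K_t>0$), so the $\frac1C$-weighted terms drop out and the $i$-th summand converges pointwise in $\bfeps$ to the SNIS ratio built from the $L-1$ points $\{\bfxx_0^{(l)}\}_{l\ne i}$; dominated convergence, with the $C$-uniform bound $\max_l\lrVert{\bfxx_0^{(l)}}^2$, passes this limit through $\bbE_{\bfeps}$. Taking expectation over the size-$L$ subset drawn i.i.d.\ with replacement and using exchangeability of $\{\bfxx_0^{(l)}\}_{l\in[L]}$, every term of $\frac1L\sum_i$ contributes equally, so the whole expression equals the expectation of one representative term; relabeling $\bfxx_0^{(i)}$ as the independent data sample generating $\bfxx_t$ and $\{\bfxx_0^{(l)}\}_{l\ne i}$ as a fresh size-$(L-1)$ subset reproduces exactly the expectation of $\Bh_t^\SNIS$ at subset size $L-1$.

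Finally, for consistency I would invoke that $\Bh_t^\SNIS$ is itself a consistent estimator of $B_t$ as the subset size grows --- standard self-normalized importance sampling (\appxref{IS}): its empirical numerator and denominator converge a.s.\ to $\bbE_{p(\bfxx_0)}[\bfxx_0 K_t]$ and $\bbE_{p(\bfxx_0)}[K_t]>0$, so the squared ratio converges under a mild moment condition on $p_\data$. Combined with the previous step this gives consistency in the regime $C\to\infty$; for fixed finite $C$ the only extra point is that each single $\frac1C$-correction term is $O(1/L)$ relative to the accompanying sum of $L-1$ i.i.d.\ terms, hence negligible in the same law-of-large-numbers argument, so consistency holds for all $C>0$. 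I expect the real obstacle to lie in the bookkeeping of these last two steps: pinning down precisely over which randomness ``the expectation of the estimator'' is taken (the subset together with the samples, on both sides of the identity), justifying the iterated passage $M\to\infty$ then $C\to\infty$, and --- for finite-$C$ consistency --- securing enough integrability, since the convex-combination bound $\max_l\lrVert{\bfxx_0^{(l)}}^2$ grows with $L$ and one really wants, e.g., $\bbE_{p_\data}\lrVert{\bfxx_0}^2<\infty$ plus a dominated-convergence argument in the $\bfxx_t$-average rather than a crude uniform bound. The a.s.\ convergence and the exchangeability identity are otherwise routine.
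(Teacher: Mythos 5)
Your proof is correct and follows essentially the same route as the paper's. The first part (strong law of large numbers for $M\to\infty$ with the fixed subset, then conditioning on $l_{\tilde m}=i$) is identical to the paper's argument; your additional observation that each summand is a squared norm of a convex combination of $\{\bfxx_0^{(l)}\}$ and hence bounded by $\max_l\lrVert{\bfxx_0^{(l)}}^2$ is a nice way to secure integrability explicitly, which the paper leaves implicit. For the $C\to\infty$ identification you take the limit inside the $\bbE_\bfeps$ first (with dominated convergence via the same convex-combination bound) and then average over the i.i.d.\ subset by exchangeability; the paper does it in the opposite order, first collapsing $\frac1L\sum_i$ to a single representative term by exchangeability of the i.i.d.\ subset and then sending $C\to\infty$. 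Both are valid, and neither order is materially simpler.

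For consistency your reasoning is a touch circuitous: you establish the $C\to\infty$ case by appealing to SNIS consistency and then argue that for finite $C$ the $\frac1C$-correction is $O(1/L)$ after normalization, whereas the paper directly normalizes numerator and denominator by $\frac1{L-1}$, applies the strong law to the $L-1$ i.i.d.\ terms, and notes the two $\frac1{C(L-1)}$ corrections vanish a.s.\ for any fixed $C>0$. Your version reaches the same place. You also rightly flag that both proofs are somewhat informal about the iterated $M\to\infty$ then $L\to\infty$ passage and about the precise integrability hypotheses — the paper does not address these either, so no gap is being opened relative to it. Overall this is the paper's argument modulo a reordering of limits and a bit more explicit bookkeeping.
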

\begin{proof}
    The cDOL estimator is given by
    \begin{align}
        \Bh_t^\cDOL=\frac1M\sum_{\tilde{m}=1}^M X_{\tilde{m}}, \quad\text{where}  ~X_{\tilde{m}}:=\lrVert{\frac{\sum\limits_{\substack{l = 1, \\ l \ne l_\mmt}}^L    \bfxx_0^{(l)}   K_t(\bfxx_t^{(\mmt)}  , \bfxx_0^{(l)}) + \frac1C \bfxx_0^{(l_\mmt)}   K_t(\bfxx_t^{(\mmt)}  , \bfxx_0^{(l_\mmt)})}{\sum\limits_{\substack{l' = 1, \\ l' \ne l_\mmt}}^L    K_t(\bfxx_t^{(\mmt)}  , \bfxx_0^{(l')}) + \frac1C K_t(\bfxx_t^{(\mmt)}  , \bfxx_0^{(l_\mmt)})}}^2.
    \end{align}
     Given a subset $\{\bfxx_0^{(l)}\}_{l=1}^L$, since $\{l_{\tilde{m}}\}_{\tilde{m}=1}^M$ and $\{\bfeps_{\tilde{m}}\}_{{\tilde{m}=1}}^M$ are i.i.d. respectively, so  $\{\bfxx_{\tilde{m}}=\alpha_t\bfxx_0^{l_{\tilde{m}}}+\sigma_t\bfeps_{\tilde{m}}\}_{\tilde{m}=1}^M$ are i.i.d. random vectors. Then $\{X_{\tilde{m}}\}_{\tilde{m}=1}^M$ are i.i.d. random variables.
    By the strong law of large number~\citep{durrett2019probability}, the estimator $\Bh_t^\cDOL$ converges almost surely to the following expression as $M\to \infty$:
        $\Bh_t^\cDOL
        \stackrel{M \to \infty, \mathrm{a.s.}}{\to}\bbE X$
        \begin{align}
        &= \mathbb{E}_{\llt,\bfeps} \lrVert{\frac{\sum\limits_{\substack{l = 1, \\ l \ne \llt}}^L    \bfxx_0^{(l)}   K_t(\alpha_t\bfxx_0^{(\llt)}+\sigma_t\bfeps, \bfxx_0^{(l)}) + \frac1C \bfxx_0^{(\llt)}   K_t(\alpha_t\bfxx_0^{(\llt)}+\sigma_t\bfeps, \bfxx_0^{(\llt)})}{\sum\limits_{\substack{l' = 1, \\ l' \ne \llt}}^L    K_t(\alpha_t\bfxx_0^{(\llt)}+\sigma_t\bfeps, \bfxx_0^{(l')}) + \frac1C K_t(\alpha_t\bfxx_0^{(\llt)}+\sigma_t\bfeps, \bfxx_0^{(\llt)})}}^2\\
        &=\mathbb{E}_{\bfeps} \left[\frac{1}{L}\sum_{i=1}^L\lrVert{\frac{\sum\limits_{\substack{l = 1, \\ l \ne i}}^L    \bfxx_0^{(l)}   K_t(\alpha_t\bfxx_0^{(i)}+\sigma_t\bfeps, \bfxx_0^{(l)}) + \frac1C \bfxx_0^{(i)}   K_t(\alpha_t\bfxx_0^{(i)}+\sigma_t\bfeps, \bfxx_0^{(i)})}{\sum\limits_{\substack{l' = 1, \\ l' \ne i}}^L    K_t(\alpha_t\bfxx_0^{(i)}+\sigma_t\bfeps, \bfxx_0^{(l')}) + \frac1C K_t(\alpha_t\bfxx_0^{(i)}+\sigma_t\bfeps, \bfxx_0^{(i)})}}^2\right].
        \end{align}
        This completes the first statement. Since
        \begin{align}
            &\frac{1}{L-1}\sum\limits_{\substack{l = 1, \\ l \ne l_\mmt}}^L    \bfxx_0^{(l)}   K_t(\bfxx_t^{(\mmt)}  , \bfxx_0^{(l)})\stackrel{L \to \infty, \mathrm{a.s}}{\to}\bbE_{p(\bfxx_0)} [\bfxx_0 K_t(\bfxx_t^{(\mmt)}, \bfxx_0)],\\
            &\frac{1}{L-1}\sum\limits_{\substack{l = 1, \\ l \ne l_\mmt}}^L     K_t(\bfxx_t^{(\mmt)}  , \bfxx_0^{(l)})\stackrel{L \to \infty, \mathrm{a.s}}{\to}\bbE_{p(\bfxx_0)} [K_t(\bfxx_t^{(\mmt)}, \bfxx_0)], \forall \mmt,\\
            &\frac{1}{C(L-1)}\bfxx_0^{(i)}   K_t(\alpha_t\bfxx_0^{(i)}+\sigma_t\bfeps, \bfxx_0^{(i)})\stackrel{L \to \infty, \mathrm{a.s}}{\to}0, \\ &\frac{1}{C(L-1)} K_t(\alpha_t\bfxx_0^{(i)}+\sigma_t\bfeps, \bfxx_0^{(i)})\stackrel{L \to \infty, \mathrm{a.s}}{\to}0,
        \end{align} 
        then $ X_{\tilde{m}} \stackrel{L \to \infty, \mathrm{a.s}}{\to}\bbE_{p(\bfxx_0 \mid \bfxx_t^{(\mmt)})}[\bfxx_0]=\frac{\bbE_{p(\bfxx_0)} [\bfxx_0 K_t(\bfxx_t^{(\mmt)}, \bfxx_0)]}{\bbE_{p(\bfxx_0)} [K_t(\bfxx_t^{(\mmt)}, \bfxx_0)]}.$ So we have
        \begin{align}
            \Bh_t^\cDOL \stackrel{M,L \to \infty, \mathrm{a.s}}{\to}\bbE_{p(\bfxx_t)} \lrVert{\bbE_{p(\bfxx_0 \mid \bfxx_t)}[\bfxx_0]}^2,\quad \forall C>0.
        \end{align}
        Hence, $\Bh_t^\cDOL \stackrel{M,L \to \infty, \bbP}{\to}\bbE_{p(\bfxx_t)} \lrVert{\bbE_{p(\bfxx_0 \mid \bfxx_t)}[\bfxx_0]}^2, \forall C>0$, \ie, $\Bh_t^\cDOL$ is consistent.
        
        The SNIS estimator with subset size $L-1$ is given by 
        \begin{align}
            \Bh_t^\SNIS := \frac1M \sum_{m=1}^M Y_m, \quad\text{where} ~Y_m:=\lrVert{\frac{\sum_{l=1}^{L-1} \bfxx_0^{(l)} K_t(\bfxx_t^{(m)}, \bfxx_0^{(l)})}{\sum_{l'=1}^{L-1} K_t(\bfxx_t^{(m)}, \bfxx_0^{(l')})}}^2.
        \end{align}
        Notice that $\{\bfxx_t^{(m)}\}_{m=1}^M$ are i.i.d. random vectors, so $\{Y_m\}_{m=1}^M$ are i.i.d. random variables. So by the strong law of large number, $\Bh_t^\SNIS$ converges to the following expressions almost surely:
        \begin{align}
            \Bh_t^\SNIS &\stackrel{M \to \infty, \mathrm{a.s}}{\to} \bbE Y=\mathbb{E}_{\bfxx_t} \lrVert{\frac{\sum_{l=1}^{L-1} \bfxx_0^{(l)} K_t(\bfxx_t, \bfxx_0^{(l)})}{\sum_{l'=1}^{L-1} K_t(\bfxx_t, \bfxx_0^{(l')})}}^2.
        \end{align}
        Next, we take the expectation of the subset for the cDOL and the SNIS estimator, respectively. Since $\{\bfxx_0^{(l)}\}_{l=1}^L$ are i.i.d. random vectors, the expectation of the cDOL can be simplified as:
        $\mathbb{E}_{\{\bfxx_0^{(l)}\}_{l=1}^L}\Bh_t^\cDOL$
        \begin{align}
        &=\mathbb{E}_{\bfeps, \{\bfxx_0^{(l)}\}_{l=1}^L} \frac{1}{L}\sum_{i=1}^L\lrVert{\frac{\sum\limits_{\substack{l = 1, \\ l \ne i}}^L    \bfxx_0^{(l)}   K_t(\alpha_t\bfxx_0^{(i)}+\sigma_t\bfeps  , \bfxx_0^{(l)}) + \frac1C \bfxx_0^{(i)}   K_t(\alpha_t\bfxx_0^{(i)}+\sigma_t\bfeps, \bfxx_0^{(i)})}{\sum\limits_{\substack{l' = 1, \\ l' \ne i}}^L    K_t(\alpha_t\bfxx_0^{(i)}+\sigma_t\bfeps, \bfxx_0^{(l')}) + \frac1C K_t(\alpha_t\bfxx_0^{(i)}+\sigma_t\bfeps  , \bfxx_0^{(i)})}}^2\\
        &=\mathbb{E}_{\bfeps, \{\bfxx_0^{(l)}\}_{l=1}^{L-1}, \bfxx_0^{(L)}} \lrVert{\frac{\sum\limits_{\substack{l = 1}}^{L-1}    \bfxx_0^{(l)}   K_t(\alpha_t\bfxx_0^{(L)}+\sigma_t\bfeps, \bfxx_0^{(l)}) + \frac1C \bfxx_0^{(L)}   K_t(\alpha_t\bfxx_0^{(L)}+\sigma_t\bfeps  , \bfxx_0^{(L)})}{\sum\limits_{\substack{l' = 1}}^{L-1}    K_t(\alpha_t\bfxx_0^{(L)}+\sigma_t\bfeps  , \bfxx_0^{(l')}) + \frac1C K_t(\alpha_t\bfxx_0^{(L)}+\sigma_t\bfeps, \bfxx_0^{(L)})}}^2.
        \end{align}
        Notice that $\bfxx_t=\alpha_t\bfxx_0+\sigma_t\bfeps$, so when $C\to\infty$:
        \begin{align}
            \mathbb{E}_{\{\bfxx_0^{(l)}\}_{l=1}^L}\Bh_t^\cDOL &\stackrel{C \to \infty}{\to} \mathbb{E}_{\bfeps, \bfxx_0^{(L)}, \{\bfxx_0^l\}_{l=1}^{L-1}} \lrVert{\frac{\sum\limits_{\substack{l = 1}}^{L-1}    \bfxx_0^{(l)}   K_t(\alpha_t\bfxx_0^{(L)}+\sigma_t\bfeps  , \bfxx_0^{(l)}) }{\sum\limits_{\substack{l' = 1}}^{L-1}    K_t(\alpha_t\bfxx_0^{(L)}+\sigma_t\bfeps  , \bfxx_0^{(l')}) }}^2 \\
            &= \mathbb{E}_{\bfxx_t, \{\bfxx_0^{(l)}\}_{l=1}^{L-1}} \lrVert{\frac{\sum_{l=1}^{L-1} \bfxx_0^{(l)} K_t(\bfxx_t, \bfxx_0^{(l)})}{\sum_{l'=1}^{L-1} K_t(\bfxx_t, \bfxx_0^{(l')})}}^2.
        \end{align}
        The expectation of the SNIS estimator is given by
        \begin{align}
            \mathbb{E}_{\{\bfxx_0^{(l)}\}_{l=1}^L} \Bh_t^\SNIS = \bbE_{\{\bfxx_0^l\}_{l=1}^{L-1}}\bbE_{\bfxx_t}Y=\mathbb{E}_{\bfxx_t, \{\bfxx_0^{(l)}\}_{l=1}^{L-1}} \lrVert{\frac{\sum_{l\in[L-1]} \bfxx_0^{(l)} K_t(\bfxx_t, \bfxx_0^{(l)})}{\sum_{l'\in[L-1]} K_t(\bfxx_t, \bfxx_0^{(l')})}}^2.
        \end{align}
        So we can conclude that the $\Bh_t^\cDOL$ estimator with subset size $L$ has the same expectation as the SNIS estimator $\Bh_t^\SNIS$ with subset size $L-1$ when $M\to\infty, C\to\infty$. 
\end{proof}

\section{Background on Importance Sampling} \label{appx:IS}

Assume $\bfxx$ is a random variable, $\bfff:\bbR^d\to\bbR^d$ is a vector value function. Our goal is to estimate the expectation of $\bfff(\bfxx)$ under a given probability density function $\pi(\bfxx)$, that is,
\begin{align}
\bfI = \int \bfff(\bfxx)\pi(\bfxx)\mathrm{d}\bfxx.
\end{align}
However, if $\bfff(x)$ is significant primarily in regions where $\pi(x)$ is low, the standard Monte Carlo estimator may provide poor accuracy due to infrequent sampling of these regions—an issue often referred to as the rare event problem. Importance sampling is designed to address this challenge.

Note that
\begin{align}
\bfI = \int \bfff(\bfxx)\frac{\pi(\bfxx)}{q(\bfxx)}q(\bfxx)\mathrm{d}\bfxx = \int \bfff(\bfxx)w(\bfxx)q(\bfxx)\mathrm{d}\bfxx,
\end{align}
where $w(\bfxx) = \frac{\pi(\bfxx)}{q(\bfxx)}$ is the weight function and $q(\bfxx)$ is the probability density function of the proposal distribution. The importance sampling estimator is therefore given by
\begin{align}
\hat{\bfI}_{\text{IS}} = \frac{1}{N}\sum_{i=1}^N \bfff(\bfxx_i)w(\bfxx_i), \quad \bfxx_i \overset{\text{i.i.d.}}{\sim} q(\bfxx).
\end{align}

The IS estimator is a powerful tool when both $\pi(x)$ and $q(x)$ are known exactly. However, when the densities are only known up to a normalizing constant (\ie,, we can access only $\hat{\pi}(\bfxx) = \frac{\pi(\bfxx)}{Z_\pi}$ and $\hat{q}(\bfxx) = \frac{q(\bfxx)}{Z_q}$), the standard importance sampling estimator cannot be applied directly. In this case, self-normalized importance sampling (SNIS) is used. Define $\hat{w}(\bfxx) := \frac{\hat{\pi}(\bfxx)}{\hat{q}(\bfxx)}$; then
\begin{align}
\bfI = \frac{\int \bfff(\bfxx)\hat{w}(\bfxx)q(\bfxx)\mathrm{d}\bfxx}{\int \hat{w}(\bfxx)q(\bfxx)\mathrm{d}\bfxx}.
\end{align}
The SNIS estimator is thus constructed as
\begin{align}
\hat{\bfI}_\text{SNIS} = \frac{\sum_{i=1}^N \bfff(\bfxx_i)\hat{w}(\bfxx_i)}{\sum_{i=1}^N \hat{w}(\bfxx_i)}, \quad \bfxx_i \overset{\text{i.i.d.}}{\sim} q(\bfxx).
\end{align}
We can see that the IS estimator is unbiased:
\begin{align}
    \bbE\hat{\bfI}_{\text{IS}} &= \frac{1}{N}\sum_{i=1}^N \bbE[\bfff(\bfxx_i)w(\bfxx_i)]\\
    & = \frac{1}{N}\sum_{i=1}^N \int \bfff(\bfxx_i)w(\bfxx_i) q(\bfxx_i) \ud \bfxx_i\\
    &=\int \bfff(\bfxx)\pi(\bfxx) \ud \bfxx = \bfI.
\end{align}
Unfortunately, the SNIS estimator is biased, but it is asymptotically unbiased and remains a consistent estimator~\cite{sanz2024first}. The precise definitions are as follows.

\begin{definition} Assume $\hat{\bfI}$, $\{\hat{\bfI}_n\}$ are estimators for $\bfI$. Then we have the following definitions:
\begin{itemize}
    \item $\hat{\bfI}$ is said to be \emph{unbiased} if $\bbE\hat{\bfI}=\bfI$.
    \item $\{\hat{\bfI}_n\}$ is said to be \emph{asymptotically unbiased} if $\bbE\hat{\bfI}_n\to\bfI$ as $n\to\infty$.
\end{itemize}

\end{definition}

\begin{definition} Assume $\hat{\bfI}_n$ is the estimator for $\bfI$, $\forall n>0$. Then $\hat{\bfI}_n$ is called \emph{consistent} for $\bfI$ if $\hat{\bfI}_n\overset{\bbP}{\to}\bfI$ as $n\to\infty$.
\end{definition}
\begin{proposition}\label{appx:snis}
    Assume $\Vert\bfff(\bfxx)\Vert\le1$, $\bbE_{q(\bfxx)}\hat{w}(\bfxx)<\infty, \bbE_{q(\bfxx)}\hat{w}^2(\bfxx)<\infty$, then the following holds:
    \begin{enumerate}
        \item $\bbE\Vert\hat{\bfI}_\text{SNIS}-\bfI\Vert^2\le\frac{4}{N}\frac{\bbE_{q(\bfxx)}[\hat{w}(\bfxx)^2]}{(\bbE_{q(\bfxx)}[\hat{w}(\bfxx)])^2}$;
        \item $\Vert\bbE[(\hat{\bfI}_\text{SNIS}-\bfI)]\Vert\le\frac{2}{N}\frac{\bbE_{q(\bfxx)}[\hat{w}(\bfxx)^2]}{(\bbE_{q(\bfxx)}[\hat{w}(\bfxx)])^2}$.
    \end{enumerate}
    So we can conclude that the SNIS estimator is asymptotically unbiased.
\end{proposition}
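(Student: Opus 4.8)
The plan is to analyze $\hat{\bfI}_{\text{SNIS}}$ as a ratio of two i.i.d.\ sample averages and to tame the random denominator by exploiting the self-normalization. First I would normalize the weights: replacing $\hat w(\bfxx)$ by $\hat w(\bfxx)/\bbE_{q(\bfxx)}[\hat w(\bfxx)]$ leaves $\hat{\bfI}_{\text{SNIS}}$ unchanged, so we may assume $\bbE_{q(\bfxx)}[\hat w(\bfxx)]=1$ and set $\rho := \bbE_{q(\bfxx)}[\hat w(\bfxx)^2]\ (\ge 1)$, which is exactly the scale-invariant ratio $\bbE_q[\hat w^2]/(\bbE_q[\hat w])^2$ appearing in the statement. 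Writing $\bar{\mathbf U}_N := \frac1N\sum_{i\in[N]}\bfff(\bfxx_i)\hat w(\bfxx_i)$ and $\bar V_N := \frac1N\sum_{i\in[N]}\hat w(\bfxx_i)$, we have $\hat{\bfI}_{\text{SNIS}} = \bar{\mathbf U}_N/\bar V_N$ with $\bbE[\bar V_N]=1$ and, since $\bfI = \bbE_q[\bfff\hat w]$ after normalization, $\bbE[\bar{\mathbf U}_N]=\bfI$.

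The pivot of the argument is the algebraic identity obtained by multiplying $\hat{\bfI}_{\text{SNIS}}-\bfI = (\bar{\mathbf U}_N - \bfI\,\bar V_N)/\bar V_N$ through by $\bar V_N$ and rearranging:
\begin{align}
  \hat{\bfI}_{\text{SNIS}} - \bfI &= \mathbf A_N - (\hat{\bfI}_{\text{SNIS}} - \bfI)(\bar V_N - 1), \\
  \mathbf A_N &:= \bar{\mathbf U}_N - \bfI\,\bar V_N = \frac1N\sum\nolimits_{i\in[N]}\big(\bfff(\bfxx_i) - \bfI\big)\hat w(\bfxx_i).
\end{align}
Here $\mathbf A_N$ is a sample mean of i.i.d.\ vectors with $\bbE[\mathbf A_N]=\mathbf 0$. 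Since $\|\bfff\|\le 1$ and $\hat w\ge 0$, both $\bfI = \bbE_q[\bfff\hat w]$ and $\hat{\bfI}_{\text{SNIS}}$ (a convex combination of the $\bfff(\bfxx_i)$) lie in the unit ball, so $\|\hat{\bfI}_{\text{SNIS}}-\bfI\|\le 2$ almost surely; substituting into the identity gives the pathwise bound $\|\hat{\bfI}_{\text{SNIS}}-\bfI\|\le \|\mathbf A_N\| + 2\,|\bar V_N - 1|$.

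From this, both claims reduce to second-moment estimates of the two i.i.d.\ averages. For claim~1, I would square the pathwise bound, take expectations, and use $\bbE\|\mathbf A_N\|^2 = \frac1N\bbE[\|\bfff(\bfxx_1)-\bfI\|^2\hat w(\bfxx_1)^2]\le \frac{4\rho}{N}$ (from $\|\bfff(\bfxx)-\bfI\|\le 2$) together with $\bbE|\bar V_N-1|^2 = \frac1N\mathrm{Var}(\hat w(\bfxx_1)) = \frac{\rho-1}{N}\le\frac{\rho}{N}$; the dominant contribution is the variance term $\bbE\|\mathbf A_N\|^2$, giving $\bbE\|\hat{\bfI}_{\text{SNIS}}-\bfI\|^2\le \frac{4\rho}{N}$ after a careful accounting of the cross and $O(1/N^2)$ remainders (a crude split via $(x+y)^2\le 2x^2+2y^2$ already yields the $\rho/N$ rate with an absolute constant, which is all that is needed for the qualitative conclusion). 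For claim~2, taking expectations of the identity and using $\bbE[\mathbf A_N]=\mathbf 0$ gives $\bbE[\hat{\bfI}_{\text{SNIS}}-\bfI] = -\bbE[(\hat{\bfI}_{\text{SNIS}}-\bfI)(\bar V_N-1)]$, so Cauchy--Schwarz with claim~1 and $\bbE|\bar V_N-1|^2\le\rho/N$ yields $\|\bbE[\hat{\bfI}_{\text{SNIS}}-\bfI]\|\le\sqrt{(4\rho/N)(\rho/N)} = \frac{2\rho}{N}$; in particular $\bbE[\hat{\bfI}_{\text{SNIS}}]\to\bfI$ as $N\to\infty$, so the estimator is asymptotically unbiased.

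The one genuinely delicate point is that, unlike plain importance sampling, $\hat{\bfI}_{\text{SNIS}}$ is a ratio, so its error is not a single variance and the factor $1/\bar V_N$ must be controlled. The self-normalization identity is precisely the device for this: it trades $1/\bar V_N$ for the product $(\hat{\bfI}_{\text{SNIS}}-\bfI)(\bar V_N-1)$, whose first factor is a priori bounded (thanks to $\|\bfff\|\le 1$) and whose second factor is a mean-zero fluctuation of size $O(N^{-1/2})$. Everything else is routine i.i.d.\ second-moment bookkeeping; the only mild care needed is in tracking constants to land on the stated $4$ and $2$ rather than merely $O(\rho/N)$.
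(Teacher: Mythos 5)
Your overall strategy is the same as the paper's: normalize so that $\bbE_{q}[\hat w]=1$, exploit the algebraic identity relating $\hat{\bfI}_{\text{SNIS}}-\bfI$ to the fluctuations of the two sample averages, use the a~priori boundedness of the estimator coming from $\|\bfff\|\le 1$, and then apply second-moment estimates and Cauchy--Schwarz. The identity you write, $\hat{\bfI}_{\text{SNIS}}-\bfI=\mathbf A_N-(\hat{\bfI}_{\text{SNIS}}-\bfI)(\bar V_N-1)$, is algebraically the same as the paper's $\hat{\bfI}_{\text{SNIS}}-\bfI=(\bar{\mathbf U}_N-\bfI)-\hat{\bfI}_{\text{SNIS}}(\bar V_N-1)$. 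So the idea is right and the qualitative $O(\rho/N)$ conclusion, hence asymptotic unbiasedness, goes through.

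However, there is a concrete gap in the claim that "careful accounting" will land you on the stated constants $4$ and $2$: your \emph{choice of which terms to bound} is strictly lossier than the paper's and cannot be repaired by bookkeeping. You split the same expression as $a+b$ with $a=\mathbf A_N=\frac1N\sum_i(\bfff(\bfxx_i)-\bfI)\hat w(\bfxx_i)$ and $b=-(\hat{\bfI}_{\text{SNIS}}-\bfI)(\bar V_N-1)$, and you control each factor through the diameter bounds $\|\bfff-\bfI\|\le 2$ and $\|\hat{\bfI}_{\text{SNIS}}-\bfI\|\le 2$. This gives $\bbE\|\mathbf A_N\|^2\le 4\rho/N$ and $\bbE\|b\|^2\le 4\rho/N$, so even the tightest version of $\bbE\|a+b\|^2\le 2\bbE\|a\|^2+2\bbE\|b\|^2$ yields only $16\rho/N$, a factor $4$ off. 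The paper instead groups the identity as $a=\bar{\mathbf U}_N-\bfI$ and $b=-\hat{\bfI}_{\text{SNIS}}(\bar V_N-1)$, which allows the \emph{radius} bounds $\|\bfff\|\le 1$ and $\|\hat{\bfI}_{\text{SNIS}}\|\le 1$ rather than the diameter $2$, giving $\bbE\|a\|^2\le\rho/N$, $\bbE\|b\|^2\le\rho/N$, and hence $4\rho/N$. Also, your remark about ``$O(1/N^2)$ remainders'' is off: $\bbE\|\mathbf A_N\|^2$, $\bbE(\bar V_N-1)^2$, and the cross term are all $\Theta(1/N)$, so nothing of lower order is available to absorb the loss. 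The fix is purely to re-center: keep $\bfI$ paired with $\bar V_N$ once (as $\bar{\mathbf U}_N-\bfI$) and leave $\hat{\bfI}_{\text{SNIS}}$ uncentered in the second factor; then your own Cauchy--Schwarz step for the bias gives $\sqrt{(4\rho/N)(\rho/N)}=2\rho/N$ exactly as stated.
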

For completeness, we give the proof of the proposition. The proof is modified from ~\citet{sanz2024first}.
\begin{proof}
    To simplify our notation, let
    \begin{align}
        \hat{\bfJ}_N=\sum_{i=1}^N \bfff(\bfxx_i)\hat{w}(\bfxx_i)\quad \hat{P}_N=\sum_{i=1}^N \hat{w}(\bfxx_i), \quad \bfxx_i \overset{\text{i.i.d.}}{\sim} q(\bfxx).
    \end{align}
    Then $\hat{\bfI}_\text{SNIS}=\hat{\bfJ}_N/\hat{P}_N$.
    Notice that
    \begin{align}
        \hat{\bfI}_\text{SNIS}-\bfI&=\hat{\bfI}_\text{SNIS}-\frac{\bbE_{q(\bfxx)}[\bfff(\bfxx)\hat{w}(\bfxx)]}{\bbE_{q(\bfxx)}[\hat{w}(\bfxx)]}\\
        &=\frac{\hat{\bfI}_\text{SNIS}\bbE_{q(\bfxx)}[\hat{w}(\bfxx)]-\bbE_{q(\bfxx)}[\bfff(\bfxx)\hat{w}(\bfxx)]}{\bbE_{q(\bfxx)}[\hat{w}(\bfxx)]}\\
        &=\frac{\hat{\bfI}_\text{SNIS}\left(\bbE_{q(\bfxx)}[\hat{w}(\bfxx)]-\hat{P}_N\right)-\left(\bbE_{q(\bfxx)}[\bfff(\bfxx)\hat{w}(\bfxx)]-\hat{\bfJ}_N\right)}{\bbE_{q(\bfxx)}[\hat{w}(\bfxx)]}.
    \end{align}
    Since 
    \begin{align}
        \Vert\bfxx-\bfyy\Vert^2\le(\Vert\bfxx\Vert+\Vert\bfyy\Vert)^2\le2(\Vert\bfxx\Vert^2+\Vert\bfyy\Vert^2),
    \end{align}
    then we can use the identity to bound the variance of $\hat{\bfI}_\text{SNIS}$:
    \begin{align}
        & \bbE\Vert\hat{\bfI}_\text{SNIS}-\bfI\Vert^2 \\
        \le& \frac{2}{(\bbE_{q(\bfxx)}[\hat{w}(\bfxx)])^2}\left(\bbE\left[\Vert\hat{\bfI}_\text{SNIS}\Vert^2(\bbE_{q(\bfxx)}[\hat{w}(\bfxx)]-\hat{P}_N)^2\right]+\bbE\left[(\bbE_{q(\bfxx)}[\bfff(\bfxx)\hat{w}(\bfxx)]-\hat{\bfJ}_N)^2\right]\right)\\
        \le& \frac{2}{(\bbE_{q(\bfxx)}[\hat{w}(\bfxx)])^2}\left(\bbE\left[(\bbE_{q(\bfxx)}[\hat{w}(\bfxx)]-\hat{P}_N)^2\right]+\bbE\left[(\bbE_{q(\bfxx)}[\bfff(\bfxx)\hat{w}(\bfxx)]-\hat{\bfJ}_N)^2\right]\right)\\
        =& \frac{2}{(\bbE_{q(\bfxx)}[\hat{w}(\bfxx)])^2}\left(\text{Var}(\hat{P}_N)+\text{Var}(\hat{\bfJ}_N)\right).
    \end{align}
    Since $\text{Var}(\hat{P}_N)=\frac{1}{N}\text{Var}(\hat{w}(\bfxx))$, $\text{Var}(\hat{\bfJ}_N)=\frac{1}{N}\text{Var}(\bfff(\bfxx)\hat{w}(\bfxx))$, then we have
    \begin{align}
        \bbE\Vert\hat{\bfI}_\text{SNIS}-\bfI\Vert^2&=\frac{2}{(\bbE_{q(\bfxx)}[\hat{w}(\bfxx)])^2N}\left(\text{Var}(\hat{w}(\bfxx))+\text{Var}(\bfff(\bfxx)\hat{w}(\bfxx))\right)\\
        &\le \frac{2}{(\bbE_{q(\bfxx)}[\hat{w}(\bfxx)])^2N}\left(\bbE_{q(\bfxx)}[\hat{w}(\bfxx)^2]+\bbE_{q(\bfxx)}[\Vert\bfff(\bfxx)\hat{w}(\bfxx)\Vert^2]\right) \\
        &\le \frac{4}{N}\frac{\bbE_{q(\bfxx)}[\hat{w}(\bfxx)^2]}{(\bbE_{q(\bfxx)}[\hat{w}(\bfxx)])^2},
    \end{align}
    where we use $\Vert\bfff\Vert\le 1$ to get the last inequality. Hence, we have proved the first result. Similarly, we can prove the result for the bias. Since $\bbE\left[\hat{\bfJ}_N-\bbE_{q(\bfxx)}[\bfff(\bfxx)\hat{w}(\bfxx)]\right]=0, \bbE\left[\hat{P}_N-\bbE_{q(\bfxx)}[\hat{w}(\bfxx)]\right]=0$, we have:
    \begin{align}
        \Vert\bbE[(\hat{\bfI}_\text{SNIS}-\bfI)]\Vert&=\frac{1}{\bbE_{q(\bfxx)}[\hat{w}(\bfxx)]}\left\Vert\bbE\left[\hat{\bfI}_\text{SNIS}\left(\bbE_{q(\bfxx)}[\hat{w}(\bfxx)]-\hat{P}_N\right)-\left(\bbE_{q(\bfxx)}[\bfff(\bfxx)\hat{w}(\bfxx)]-\hat{\bfJ}_N\right)\right]\right\Vert\\
        &=\frac{1}{\bbE_{q(\bfxx)}[\hat{w}(\bfxx)]}\left\Vert\bbE\left[\left(\hat{\bfI}_\text{SNIS}-\bfI\right)\left(\bbE_{q(\bfxx)}[\hat{w}(\bfxx)]-\hat{P}_N\right)\right]\right\Vert\\
        &\le \frac{1}{\bbE_{q(\bfxx)}[\hat{w}(\bfxx)]} \left(\bbE\left[\left\Vert\hat{\bfI}_\text{SNIS}-\bfI\right\Vert^2\right]\right)^{\frac{1}{2}}\left(\bbE\left[\left(\bbE_{q(\bfxx)}[\hat{w}(\bfxx)]-\hat{P}_N\right)^2\right]\right)^{\frac{1}{2}}\\
        &\le\frac{1}{\bbE_{q(\bfxx)}[\hat{w}(\bfxx)]} \left(\bbE\left[\left\Vert\hat{\bfI}_\text{SNIS}-\bfI\right\Vert^2\right]\right)^{\frac{1}{2}}\left(\frac{\bbE_{q(\bfxx)}[\hat{w}(\bfxx)^2]}{N}\right)^{\frac{1}{2}}.
    \end{align}
    By our first result, $\bbE\Vert\hat{\bfI}_\text{SNIS}-\bfI\Vert^2\le\frac{4}{N}\frac{\bbE_{q(\bfxx)}[\hat{w}(\bfxx)^2]}{(\bbE_{q(\bfxx)}[\hat{w}(\bfxx)])^2}$, then we have
    \begin{align}
        \Vert\bbE[(\hat{\bfI}_\text{SNIS}-\bfI)]\Vert\le \frac{1}{\bbE_{q(\bfxx)}[\hat{w}(\bfxx)]} \left(\frac{4}{N}\frac{\bbE_{q(\bfxx)}[\hat{w}(\bfxx)^2]}{(\bbE_{q(\bfxx)}[\hat{w}(\bfxx)])^2}\right)^{\frac{1}{2}}\left(\frac{\bbE_{q(\bfxx)}[\hat{w}(\bfxx)^2]}{N}\right)^{\frac{1}{2}}=\frac{2}{N}\frac{\bbE_{q(\bfxx)}[\hat{w}(\bfxx)^2]}{(\bbE_{q(\bfxx)}[\hat{w}(\bfxx)])^2}.
    \end{align}
    So we have prove the second result. When $N\to \infty$, $\Vert\bbE[(\hat{\bfI}_\text{SNIS}-\bfI)]\Vert\to0$, then the SNIS estimator is asymptotically unbiased.
\end{proof}
Assume the dataset $\{\bfxx_0^{(i)}\}_{i=1}^N\overset{\text{i.i.d.}}{\sim} p_{\text{data}}(\bfxx)$. Then then estimator
\begin{align}
    \frac{\sum_{n\in[N]} \bfxx_0^{(n)} K_t(\bfxx_t, \bfxx_0^{(n)})}{\sum_{n'\in[N]} K_t(\bfxx_t, \bfxx_0^{(n')})}
\end{align}
is the SNIS estimator of the posterior expectation $\bbE[\bfxx_0\mid\bfxx_t]$. By Prop.~\ref{appx:snis}, the SNIS estimator is asymptotically unbiased. Then the estimator is
\begin{align}
    \Bh_t = \frac1M \sum_{m\in[M]} \lrVert{\frac{\sum_{n\in[N]} \bfxx_0^{(n)} K_t(\bfxx_t^{(m)}, \bfxx_0^{(n)})}{\sum_{n'\in[N]} K_t(\bfxx_t^{(m)}, \bfxx_0^{(n')})}}^2
\end{align}
is also asymptotically unbiased. For more details on importance sampling, see \citep{robert1999monte}.

\section{Experimental Details}\label{appx:experiments}
\subsection{Optimal Loss Estimation}\label{appx:cdol-convergence}
In this subsection, we show more experiments results of our cDOL estimator. Our cDOL estimator is concluded in \algref{subset-estimator-C}.
\begin{algorithm}
\caption{The corrected Diffusion Optimal Loss (cDOL) estimator}
\label{alg:subset-estimator-C}
\begin{algorithmic}[1]
  \INPUT Diffusion schedule $\alpha_t$ and $\sigma_t$, diffusion step $t$,
  training dataset $\{\bfxx_0^{(n)}\}_{n\in[N]}$; number of repeats $R$, data sample $\bfxx_0$ subset size $L$, $\bfxx_t$ sample size $M$, correction parameter $C$.
  \OUTPUT Estimation of the diffusion optimal loss ${J_t^\txtx}^*$ at $t$.
  \FOR{$r \in [R]$}
    \STATE Sample a data subset $\{\bfxx_0^{(r,l)}\}_{l\in[L]}$ independently randomly from $\{\bfxx_0^{(n)}\}_{n\in[N]}$;
    \FOR{$\mmt \in [M]$}
       \STATE Sample an index $l_\mmt$ randomly from $[L]$; 
       \STATE Construct $\bfxx_t^{(r,\mmt)} = \alpha_t \bfxx_0^{(r,l_\mmt)} + \sigma_t \bfeps^{(\mmt)}$ with $\bfeps^{(\mmt)} \sim \clN(\bfzro, \bfI)$;
    \ENDFOR
    \STATE Compute $\Bh_t^{\cDOL^{(r)}}$ using \eqnref{Bhat-cdol} (where $K_t$ is defined in \eqnref{def-K});
  \ENDFOR
  \STATE Compute $\Bh_t^\cDOL = \frac1R \sum_{r\in[R]} \Bh_t^{\cDOL^{(r)}}$ and $\Ah$ using \eqnref{Ahat};
  \STATE \textbf{Return} $\Ah - \Bh_t^\cDOL$.
\end{algorithmic}
\end{algorithm}

\paragraph{Convergence of cDOL estimator.} Our cDOL estimator has four parameters $(R,M,L,C)$. As mentioned in \secref{diff-conv}, we empirically choose $C=4N/L$, the subset size L can be taken to fully utilize memory. The parameters $R,M$ should be large enough to ensure that the estimator converges. We perform a convergence analysis with respect to $R,M$ in the CIFAR-10 dataset to justify our choice of $M,R$. The results are summarized in \figref{appx-converge} and \figref{appx-converge-M}.  As shown in \figref{appx-converge}, our cDOL estimator will converge when $R$ is large enough and can approximate the ground truth optimal loss accurately. Empirically, we find that $R=3N/L$ is enough for an accurate estimate. Next, we verify that our cDOL estimator will converge when $M$ is large enough. As shown in \figref{appx-converge-M}, we can see that the cDOL estimator converges when $M\approx4L$.

\paragraph{Efficiency.} The computational complexity of the naive estimator $\hat{B}_t$ (\eqnref{Bhat-full}) is $\mathcal{O}(N^2)$, where $N$ is the size of the dataset. 
The cDOL estimator reduces this complexity to $\mathcal{O}(L^2 \times R)$. Based on the verifications the the previous paragraph, we set the subset size $L$ to fully utilize memory, and is often set to 2500 or 5000. We set $R=3N/L, M=4L$, then the total complexity becomes $\clO(NL)$.
With this setting, the total running time is approximately 0.5 hour on CIFAR-10, 2.5 hours on FFHQ, and about 1 day for the ImageNet dataset when using 2400 8G CPU cores. 

\begin{figure}[h!]  
\centering  
\begin{tabular}{ccc}  
    \includegraphics[width=0.3\textwidth]{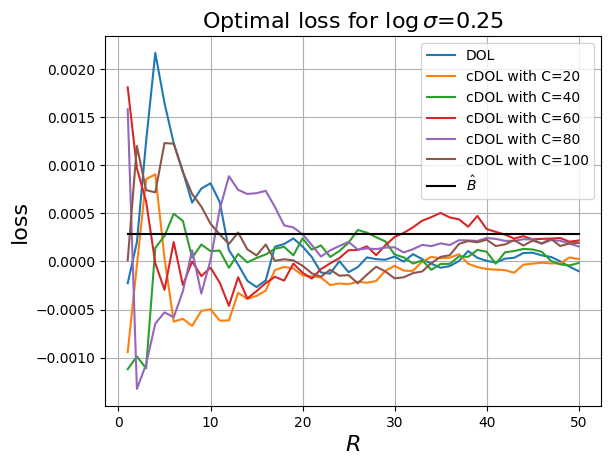} &  
    \includegraphics[width=0.3\textwidth]{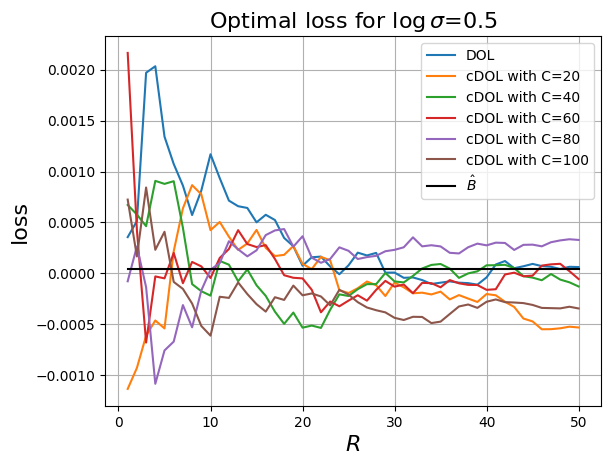} &  
    \includegraphics[width=0.3\textwidth]{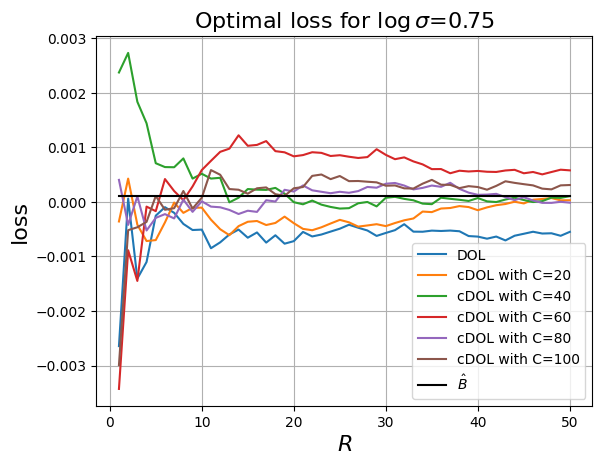} \\  
    \includegraphics[width=0.3\textwidth]{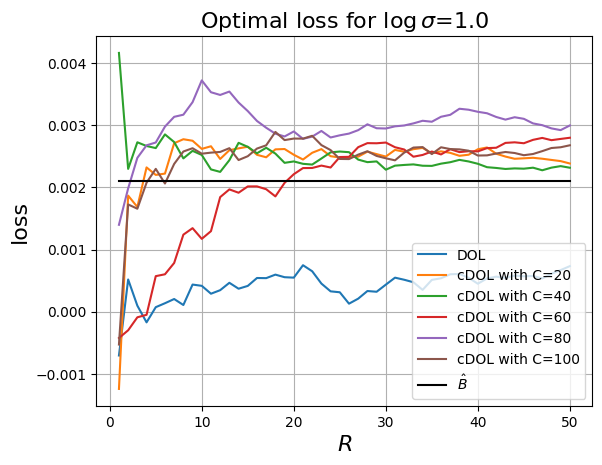} &  
    \includegraphics[width=0.3\textwidth]{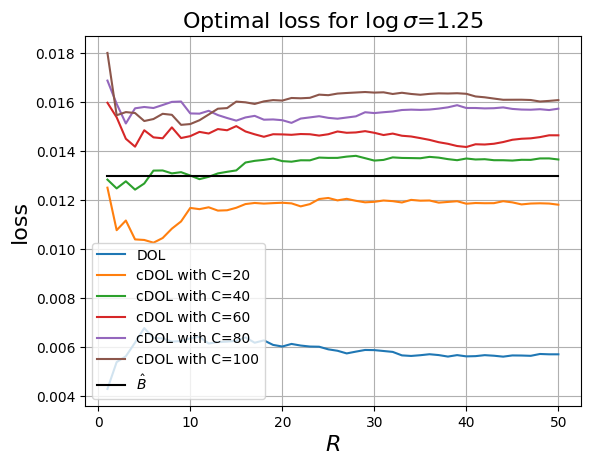} &  
    \includegraphics[width=0.3\textwidth]{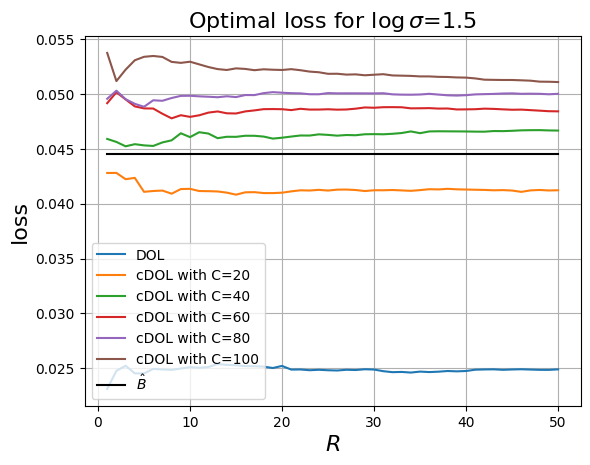} \\  
    \includegraphics[width=0.3\textwidth]{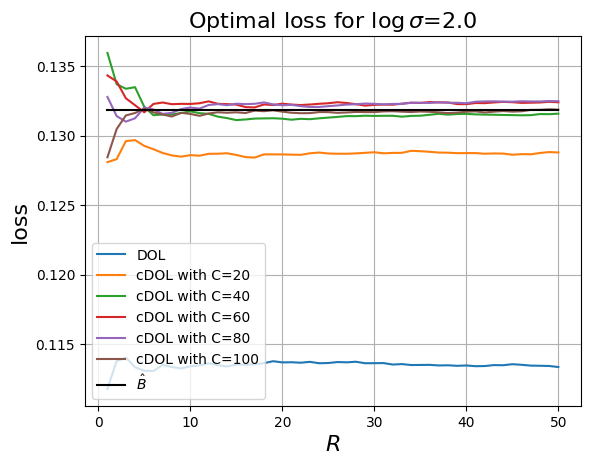} &  
    \includegraphics[width=0.3\textwidth]{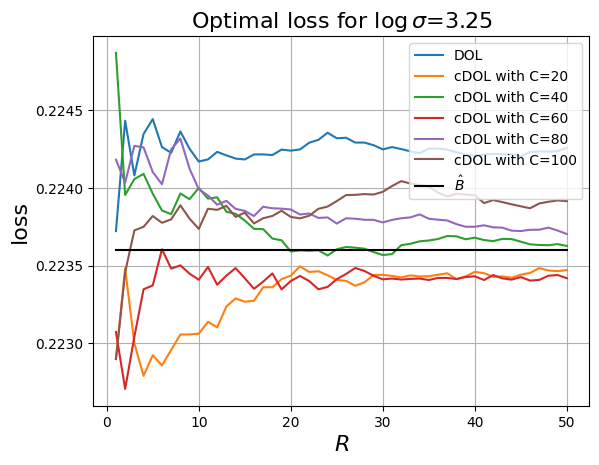} &  
    \includegraphics[width=0.3\textwidth]{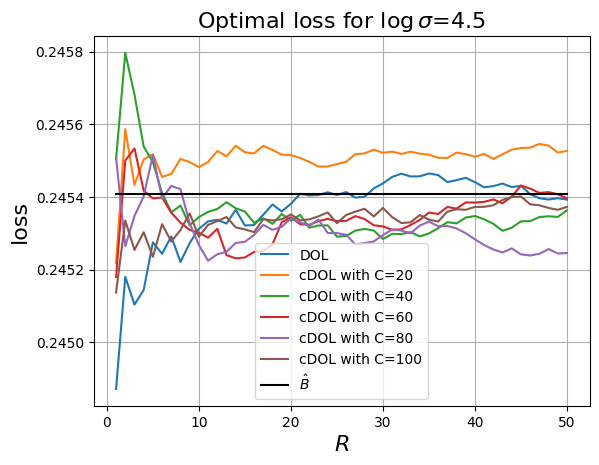} \\  
\end{tabular}  
\caption{Convergence of our estimator with respect to the number of subsets $R$ on CIFAR-10. We plot the estimated optimal loss v.s. $R$ for several choices of $C$ among different noise scales. For fair comparison, we fix the subset size $L=5000$.}  
\label{fig:appx-converge}  
\end{figure}  

\begin{figure}[h!]  
\centering  
\begin{tabular}{ccc}  
    \includegraphics[width=0.3\textwidth]{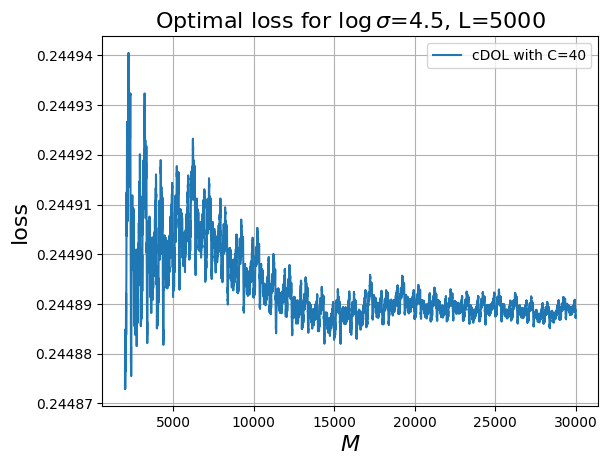} &  
    \includegraphics[width=0.3\textwidth]{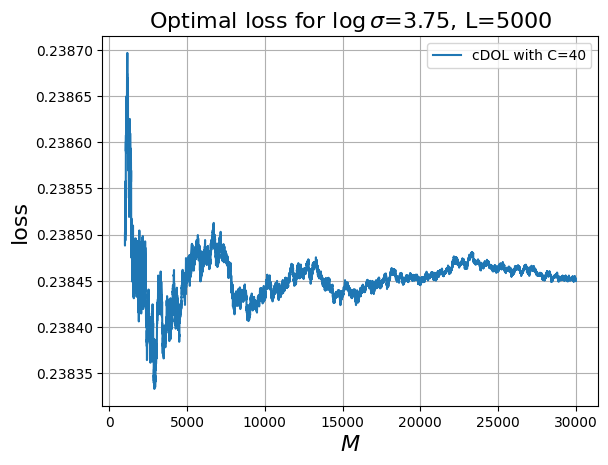} &  
    \includegraphics[width=0.3\textwidth]{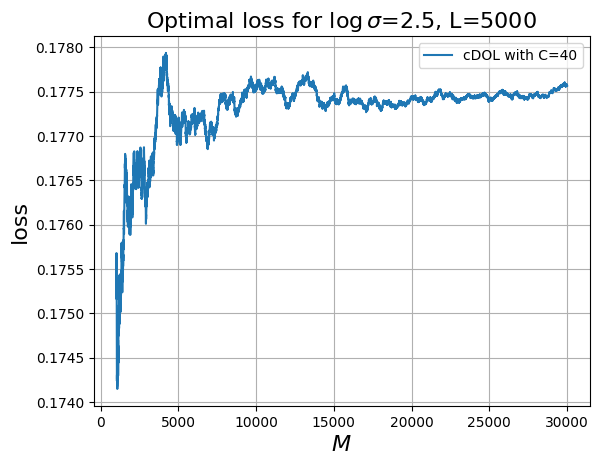}  \\
    \includegraphics[width=0.3\textwidth]{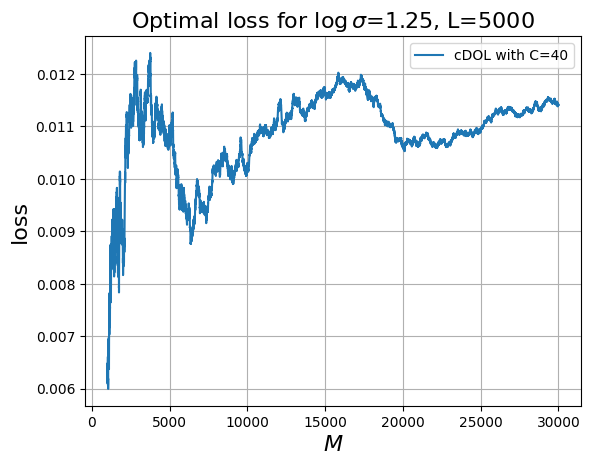} &  
    \includegraphics[width=0.3\textwidth]{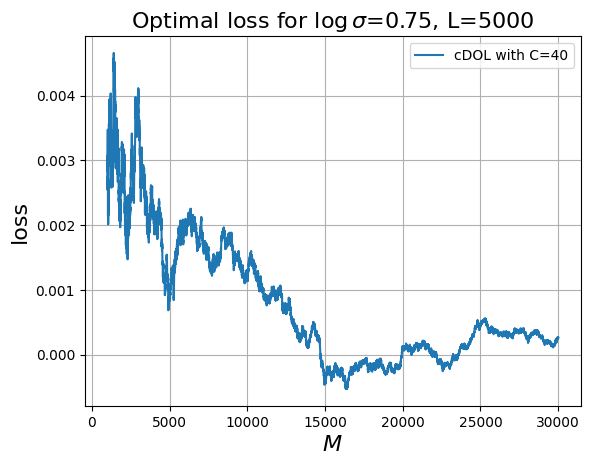} &  
    \includegraphics[width=0.3\textwidth]{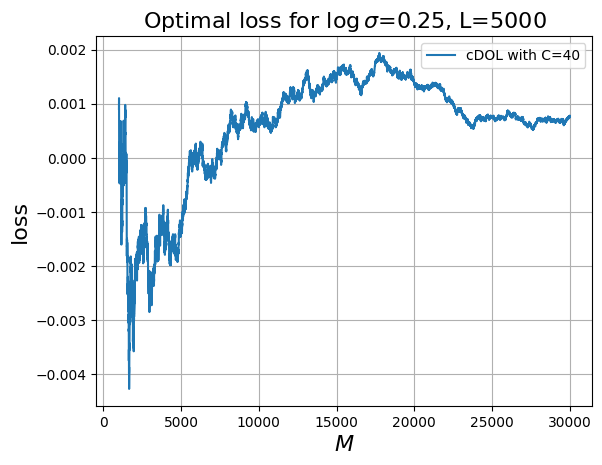}  \\
\end{tabular}  
\caption{Convergence of our estimator with respect to $M$ (the number of $\bfxx_t$ samples) on CIFAR-10. We plot the estimated optimal loss v.s. $M$ for a fixed $C=40, L=5000$ among different noise scales.}  
\label{fig:appx-converge-M}
\end{figure}

\subsection{Direct Comparisons with Works on Optimal Solution} \label{appx:knn}

\citet{xu2023stable} proposes an SNIS estimator for the inner expectation of the optimal loss in \eqnref{optlosst-x0}. 
If $\bfxx_t$ is sampled independently from a batch separate from the $\bfxx_0$ batch used for the inner expectation, 
then the estimator reduces to the SNIS estimator described in \secref{optloss-subset}. 
By contrast, if the same $\bfxx_0$ batch is used to sample $\bfxx_t$ and compute the outer expectation, 
this corresponds to our DOL estimator. 
As shown in \figref{opt-loss-cifar}, the SNIS estimator suffers from high variance, leading to poor empirical performance. 
Meanwhile, the DOL estimator introduces extra bias and also does not achieve good performance.  

\citet{niedoba2024nearest} proposes a nearest neighbor estimator of the optimal solution. 
Given a noisy sample $\bfxx_t$, the KNN estimator finds the $K$-nearest $\bfxx_0$ samples in the dataset to estimate the optimal solution. 
The KNN search method used in~\citep{niedoba2024nearest} (Faiss with a flat index) has $\mathcal{O}(N)$ complexity per query of $\bfxx_t$, 
leading to an overall complexity of $\mathcal{O}(N^2)$, which matches that of the naive estimator $\hat{B}_t$ (\eqnref{Bhat-full}).  
Moreover, KNN search requires significantly more memory, as it needs to generate an index of the entire dataset, 
which prevents effective multithreading parallelism.  

For a direct comparison, we report the error rate of each estimator, defined as
\begin{align}
  e = \frac{|J_{\text{estimated}} - J_{\text{ground\_truth}}|}{J_{\text{ground\_truth}}}.
\end{align}
We consider the error rate more suitable than absolute error, since the scale of the optimal loss varies significantly across noise levels.  
For a fair comparison, we tested $L=2500$ for the cDOL estimator and $n=K=2500$ for the KNN estimator, the results are shown in \tabref{knn}.  
We can see from the results that cDOL achieves comparable accuracy and variance to KNN, but with significantly lower runtime: 
approximately $5\times$ faster than KNN ($n=2500, K=2500$) due to its lower complexity $\mathcal{O}(L \times N)$ (with $R \propto N/L$) 
versus $\mathcal{O}(N^2)$ for KNN.  
Moreover, cDOL benefits from straightforward multithreading parallelism since it only loads $L$ samples into memory at a time, 
making it more scalable for large, high-resolution datasets such as ImageNet.  

\begin{table}[h]
\centering
\caption{Comparison between different estimators on CIFAR-10 for an intermediate noise level $\log\sigma = 1.25$.}
\label{tab:knn}
\resizebox{0.99\textwidth}{!}{
\begin{tabular}{lccccc}
\toprule
Methods & DOL & cDOL ($L=2500$)  & KNN ($n=2500, K=2500$) & SNIS \\
\midrule
error rate & 0.55 & 0.04  & 0.03 & 5.72 \\
variance (per dimension) & 0.0170 & 0.0182  & 0.0183 & 0.0210 \\
run time & 12 min & $\sim$12 min & $\sim$67 min & -- \\
\bottomrule
\end{tabular}
}
\end{table}

\subsection{Detailed Settings and Additional Results for \figref{model-loss}} \label{appx:model-loss-settings}

\paragraph{Detailed settings.} Following EDM~\citep{karras2022elucidating}, we configure our training settings as follows. We train all models on CIFAR-10 until a total of 200 million images have been sampled from the training set. The batch size is set to 512. For sampling, we employ the EDM deterministic sampler, consistently setting the time steps according to $\sigma_i=\big(\sigma_{\text{max}}^{1/\hat{\rho}}+\frac{i}{N-1}(\sigma_{\text{min}}^{1/\hat{\rho}}-\sigma_{\text{max}}^{1/\hat{\rho}})\big)^{\hat{\rho}}$, where $\hat{\rho}=7, \sigma_{\text{min}}=0.002, \sigma_{\text{max}}=80$, and the number of function evaluations (NFE) is set to 35 for all models.

For training loss calculation, we evaluate the clean-data prediction loss for fair comparisons. In practice, we perform inference over three epochs to estimate the training loss across noise levels and observe that these estimates exhibit good convergence.

\paragraph{Justification on loss estimation convergence.}
To justify the convergence of training loss estimation, we present additional results using a model trained with the DDPM schedule (VP-$\bfeps$ in \tabref{conversion}) on CIFAR-10. We computed the mean, variance, and standard error of the mean from 150{,}000 independent evaluations (corresponding to 3 epochs) of the training loss. The results (\tabref{variance}) demonstrate that although the variance depends on the noise level, the largest standard deviation is still orders lower than the corresponding mean value. More accurate estimates can be obtained by increasing the number of model evaluations.

\begin{table}[h]
\centering
\caption{Training loss statistics across noise levels using a DDPM schedule on CIFAR-10.}
\label{tab:variance}
\begin{tabular}{lccc}
\toprule
Noise level & Mean & Variance & Standard Error of Mean \\
\midrule
$\log \sigma = 4$  & 0.230  & 0.015   & 3.95e-5 \\
$\log \sigma = 2$  & 0.130  & 0.0033  & 8.77e-6 \\
$\log \sigma = 0$  & 0.025  & 8.67e-5 & 2.23e-7 \\
$\log \sigma = -2$ & 0.0026 & 7.41e-7 & 1.91e-9 \\
\bottomrule
\end{tabular}
\end{table}

\paragraph{Effect of sampling methods.} To decouple the possible influence of using different sampling methods, we make an investigation on the sampler effect. For a fair comparison, we use the EDM sampler with identical parameters to evaluate different training methods in \figref{model-loss}. Additionally, we conducted experiments with the 250-step DDIM~\citep{song2021denoising} and Flow Matching~\citep{lipman2023flow} samplers, with the FID results presented in \tabref{sampling}. These results show that while sampling quality depends on the choice of sampler, a better-trained model consistently achieves higher sample quality across different samplers.

\begin{table}[h]
\centering
\caption{FID ($\downarrow$) results across different samplers and training methods on CIFAR-10 with 250 sampling steps.}
\label{tab:sampling}
\begin{tabular}{lccccc}
\toprule
Sampler$\backslash$Training Method & EDM & DDPM & NCSN & FM & FM + \textbf{Our schedule} \\
\midrule
EDM sampler   & 1.94 & 1.97 & 2.72 & 2.36 & \textbf{1.79} \\
DDIM sampler  & 2.14 & 2.23 & 2.91 & 2.27 & \textbf{1.99} \\
FM sampler    & 2.19 & 2.25 & 3.07 & 2.28 & \textbf{2.04} \\
\bottomrule
\end{tabular}
\end{table}

As we primarily evaluate the model by the FID metric in \figref{model-loss}, we give some complementary results for \figref{model-loss} in this subsection.

\paragraph{Precision and recall metrics.} As the FID metric is sensitive to both sample quality and diversity, it cannot reflect the sample diversity and quality separately. The precision and recall metrics are designed to test the sample quality and diversity, respectively. We evaluate our models trained by different training schedule and formulations in \figref{model-loss} under these two metrics. The results are shown in the following \tabref{pre-rec}.

\begin{table}[h]
\centering
\caption{Precision and recall results under different training schedules on CIFAR-10.}
\label{tab:pre-rec}
\setlength{\tabcolsep}{0pt}
\begin{tabular*}{0.65\textwidth}{@{\extracolsep{\fill}} lcc}
\toprule
Training schedule & Precision & Recall \\
\midrule
EDM~\citep{karras2022elucidating}& 0.615     & 0.682   \\
DDPM\citep{ho2020denoising} & 0.608     & 0.683   \\
FM~\citep{lipman2023flow} & 0.615     & 0.677   \\
SD3~\citep{esser2024scaling} & 0.595     & \textbf{0.694}   \\
NCSN~\citep{song2021score} & 0.614     & 0.647   \\
\midrule
\textbf{Our schedule}  & \textbf{0.626}& 0.667   \\
\bottomrule
\end{tabular*}
\end{table}

Combining the results in \figref{model-loss}(a,b) with the results shown in the \tabref{pre-rec}, we observe that the precision metric also has a stronger correlation to the training loss gap in the small noise regions. Thus, our training schedule outperforms all other training schedules under this metric. In contrast, the recall metric has a stronger correlation to the training performance in the larger noise levels around the critical point $\sigma^\star$, thus the SD3 training schedule achieves the best performance. These results justify the intuition that the image quality is related to training performance at small noise scales, while recall or diversity is related to larger noise scales.

\paragraph{Memorization metrics.} As shown in \figref{model-loss}(a), the training loss gap is large for all mainstream diffusion models. This implies that these diffusion models are still not overfit to the optimal solution. To study the memorization behavior, we follow \citet{gu2023memorization} for the metric and the experimental settings. We train a model using Flow Matching precondition and our training schedule on a subset of CIFAR-10 with 5k data samples, and we train the same architecture with Flow Matching training schedule on the same dataset as a baseline. We report the memorization rate, where a sample is memorized if its $L^2$ distance to the nearest neighbor is smaller than $1/3$ of that to the second nearest neighbor in the training data \citep{gu2023memorization}. Here the factor $1/3$ is an empirical threshold proposed in \citet{gu2023memorization}.
The results are shown in the \tabref{mem}. We can observe that our schedule improves the generation performance without leading to severe memorization.

\begin{table}[h]
\centering
\caption{Memorization rate ($\downarrow$) across different training epochs under different training schedules on CIFAR-10.}
\label{tab:mem}
\setlength{\tabcolsep}{0pt}
\begin{tabular*}{0.9\textwidth}{@{\extracolsep{\fill}} lcccccccc}
\toprule
Training schedule$\backslash$Training Epochs & 0.5k & 1k & 1.5k & 2k & 2.5k & 3k & 3.5k & 4k \\
\midrule
Flow Matching schedule & 0.0 & 0.0 & 0.0 & 0.0 & 0.0001 & 0.0024 & 0.0102 & 0.0224 \\
\textbf{Our schedule} & 0.0 & 0.0 & 0.0 & 0.0 & 0.0 & 0.0 & 0.0 & 0.0 \\
\bottomrule
\end{tabular*}
\end{table}

\subsection{Image Generation Experimental Details} \label{appx:experiments-image}

Following EDM~\citep{karras2022elucidating}, we configure our training settings as follows. We train all models on CIFAR-10 until a total of 200 million images have been sampled from the training set. The batch size is set to 512. Checkpoints are saved every 2.5 million images, and we report results based on the checkpoint with the lowest FID. We adopt the DDPM++ network architecture used in EDM, with our primary modifications being the incorporation of our loss weighting scheme and adaptive noise distribution. All models are trained on 8 NVIDIA A100 GPUs. For sampling, we employ the EDM deterministic sampler, consistently setting the discretization steps according to $\sigma_i=(\sigma_{\text{max}}^{1/\hat{\rho}}+\frac{i}{N-1}(\sigma_{\text{min}}^{1/\hat{\rho}}-\sigma_{\text{max}}^{1/\hat{\rho}}))^{\hat{\rho}}$, where $\hat{\rho}=7, \sigma_{\text{min}}=0.002, \sigma_{\text{max}}=80$, and the number of function evaluations (NFE) is set to 35 for CIFAR-10 experiments.

For ImageNet-64, we follow a similar setup as EDM. We use the ADM architecture, which matches that of EDM~\citep{karras2022elucidating}. The batch size is set to 2048, and our loss weighting and adaptive noise distribution are applied as well. Training proceeds until 2.5 billion images have been sampled from the training set. Checkpoints are saved every 10 million images, and we report the checkpoint with the lowest FID. All ImageNet-64 models are trained on 32 NVIDIA A100 GPUs. In sampling, we again use the EDM deterministic sampler with $\hat{\rho} = 7$ and NFE = 79.

For ImageNet-256, we adopt a setup similar to LightningDiT~\citep{yao2025vavae}. Specifically, we utilize VA-VAE~\citep{yao2025vavae} as the tokenizer and implement a modified LightningDiT~\citep{yao2025vavae} architecture enhanced with QK-Normalization~\citep{dehghani2023scaling} to improve training stability. The batch size is set to 2048, and we apply the same loss weighting and adaptive noise distribution strategies. Optimization is performed using AdamW with parameters $(\beta_1, \beta_2) = (0.9, 0.95)$ and a learning rate of $2 \times 10^{-4}$. The model is trained for 1600 epochs (approximately 10 million iterations), with checkpoints saved every 10,000 iterations. Again, we use 32 NVIDIA A100 GPUs to train the model on ImageNet-256. Consistent with LightningDiT~\citep{yao2025vavae}, we employ the FM Euler ODE sampler with 250 function evaluations (NFE = 250). 

\paragraph{Details on our schedule.}
We maintain a bin that records the training loss gap (note that the optimal loss is computed before training and does not need to be evaluated repeatedly; the loss gap can be evaluated only by monitoring the training loss) at each noise scale. Inspired by the adaptive schedule proposed by \citet{kingma2023understanding}, we update this bin using an exponential moving average (EMA) during training. Specifically, the bin is updated every time 2 million images have been drawn, with a decay rate set to 0.9. The collected training loss gap statistics are then used to construct a piecewise-linear probability density function, which serves as the adaptive noise schedule. As mentioned, our loss weight is given by \eqnref{w-schedule}:
\begin{align}
    w_\sigma = a \, \min\big\{1/J_\sigma^\star,w^\star\big\} + f(\sigma) \, \bbI_{\sigma < \sigma^\star}.
\end{align}
Typically, $w^\star$ is set to be $20$ and $a = 1/50$. $f(\sigma)= \clN(\log \sigma; \mu, \varsigma^2)$ is an additional weighting function to let us put more weight on the region $\sigma < \sigma^*$, which is simply set as a normal pdf. We set $\mu = -7.5, \varsigma = 2$ for CIFAR-10, $\mu = -5.75, \varsigma = 2$ for ImageNet-64 and $\mu = -4.37, \varsigma = 1.75$ for ImageNet-256.

In \figref{our-p}, we plot our noise schedule calculated from the loss gap in the final optimization step of the training process. We can see from the result that our noise schedule indeed allocates more optimization steps on the region $\sigma<\sigma^\star$ with positive correlation to generation performance, as identified in \figref{model-loss}. We also show some samples generated by our model trained on the ImageNet-256 dataset in \figref{samples1} and \figref{samples2}.

\paragraph{Human preference study.} We also conduct the human preference study to further evaluate the generative performance of different methods. We randomly generated 24 image pairs from the ImageNet-256 model trained from the baseline and our schedule. We asked 19 independent evaluators to select the image with better visual quality. The results are summarized in \tabref{human} and show that our method was preferred in 55.26\% of the cases, while the baseline was preferred in 44.73\%. This human evaluation aligns with our FID improvements, confirming that the reduction in the ``Loss Gap'' translates into perceptibly better image quality.

\begin{table}[h]
\centering
\caption{Human preference between baseline and our schedule on ImageNet-256.}
\label{tab:human}
\setlength{\tabcolsep}{0pt}
\begin{tabular*}{0.75\textwidth}{@{\extracolsep{\fill}} lcc}
\toprule
Training schedule & FID with guidance ($\downarrow$) & Human preference ($\uparrow$) \\
\midrule
LightningDiT~\citep{yao2025vavae}& 1.42     & 44.74\%   \\
+ \textbf{Our schedule}  & \textbf{1.30}& \textbf{55.26\%}   \\
\bottomrule
\end{tabular*}
\end{table}

\begin{figure}
    \centering
    \includegraphics[width=0.5\linewidth]{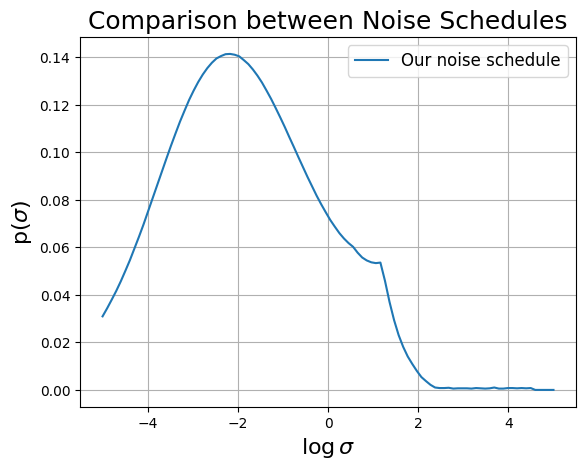}
    \caption{Plot of our proposed noise schedule on the CIFAR-10 dataset. We plot the noise schedule calculated by the loss gap in the final optimization step of the training process.}
    \label{fig:our-p}
\end{figure}

\begin{figure}
    \centering
    \includegraphics[width=0.7\linewidth]{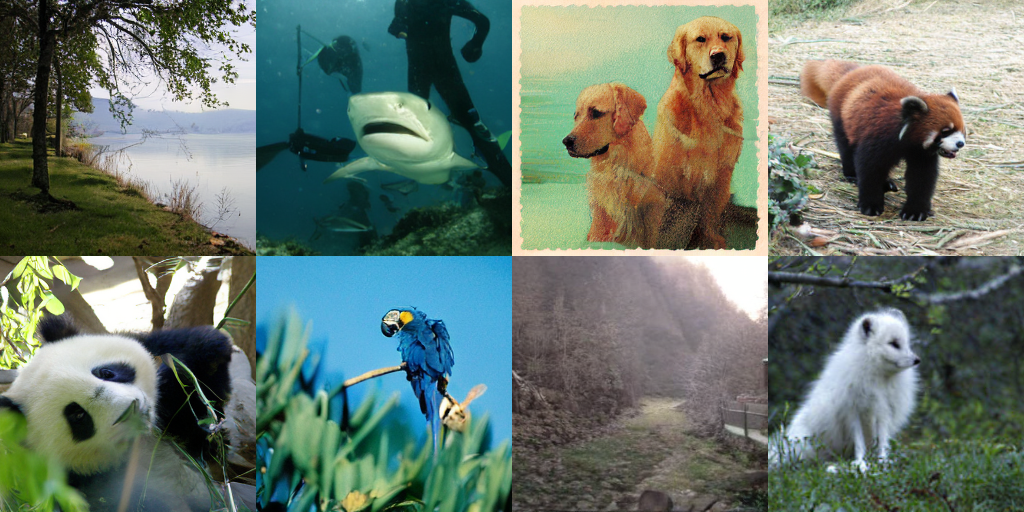}
    \caption{Samples generated by our ImageNet-256 model.}
    \label{fig:samples1}
\end{figure}

\begin{figure}
    \centering
    \includegraphics[width=0.7\linewidth]{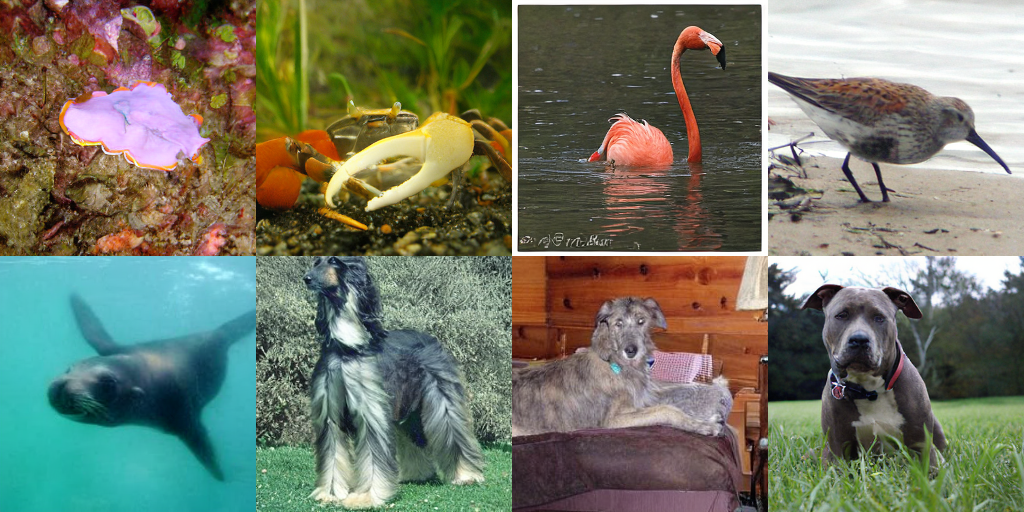}
    \caption{Samples generated by our ImageNet-256 model.}
    \label{fig:samples2}
\end{figure}
 
\begin{figure}[t!]
  \centering
  \begin{minipage}[t]{0.45\textwidth}  
    \centering  
    \includegraphics[width=\textwidth]{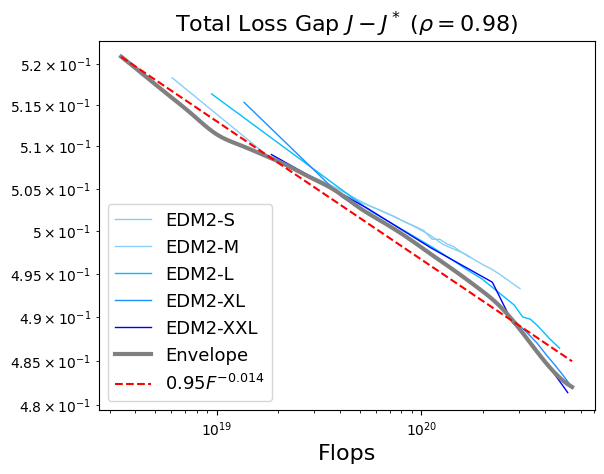}  
  \end{minipage}
  \caption{Scaling law fitting results using the modified power law in \eqnref{diffu-scaling-law} for the total diffusion loss on ImageNet-512.}
  \label{fig:scaling-law-img512}
\end{figure}

\subsection{Scaling Law Experimental Details} \label{appx:scaling law}
In this subsection, we provide a comprehensive account of our scaling law analysis. Our experiments employ the state-of-the-art diffusion model EDM2~\citep{karras2024analyzing}, with parameter counts ranging from 120M to 1.5B. In accordance with the training protocols outlined in EDM2, models are trained in the RGB space for ImageNet-64 and in the latent space derived from a pretrained VAE for ImageNet-512.

We begin by reporting the results on ImageNet-64. As described in \secref{scalaw}, we apply our modified scaling law (\eqnref{diffu-scaling-law}) to model performance. In \figref{appx-sl-64}, we present a detailed comparison between the original and modified scaling laws across various noise scales. Our findings indicate that the modified formulation yields a loss envelope that adheres more closely to a linear relationship, with corresponding improvements in the correlation coefficient, especially at large noise scales. The enhancements at small noise scales are less pronounced, largely due to the relatively minor optimal loss values compared to the actual training losses.

Subsequently, we extend our analysis to ImageNet-512. Mirroring the experimental setup used for ImageNet-64, we adopt the optimized adaptive loss weighting from EDM2~\citep{karras2024analyzing} when calculating total loss. The results, shown in \figref{scaling-law-img512}, achieve a correlation coefficient of $\rho = 0.9857$, with the fitted scaling law given by:
\begin{align}
J(F) = 0.9493  F^{-0.014} + 0.001.
\end{align}
A thorough comparison between the original and modified scaling laws across multiple noise scales is presented in \figref{appx-sl-512}. These results are consistent with those observed on ImageNet-64, once again demonstrating that the modified scaling law yields a loss envelope that is closer to linearity, with higher correlation coefficients, particularly at large noise scales. As before, improvements at small noise scales remain limited due to the diminutive size of optimal loss relative to training loss.

\begin{figure}[h!]  
\centering  
\begin{tabular}{cc}  
    \includegraphics[width=0.45\textwidth]{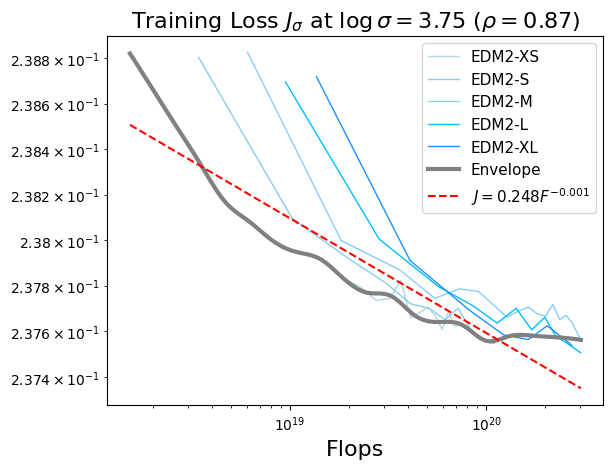} &  
    \includegraphics[width=0.461\textwidth]{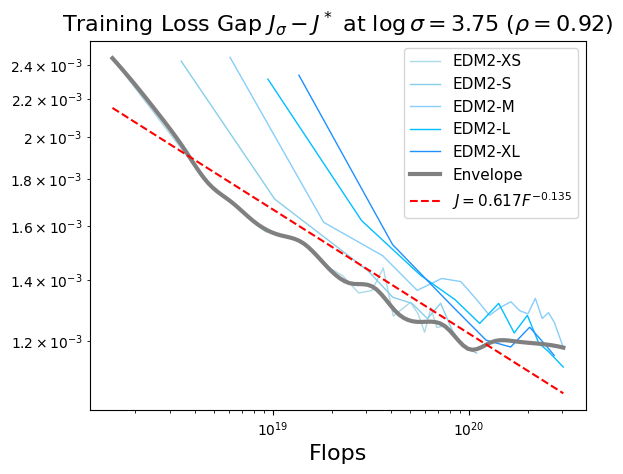} \\ 
    \includegraphics[width=0.45\textwidth]{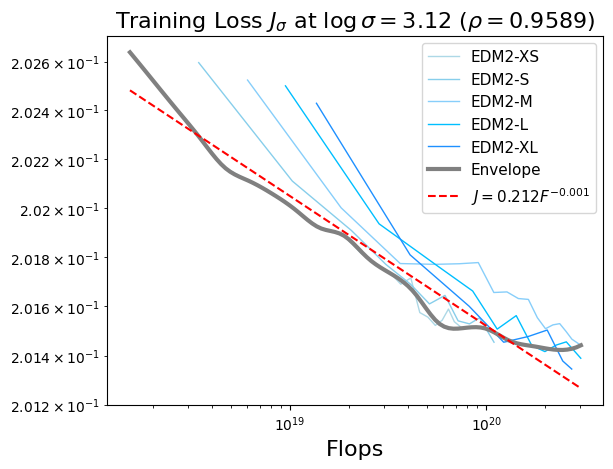} &
    \includegraphics[width=0.474\textwidth]{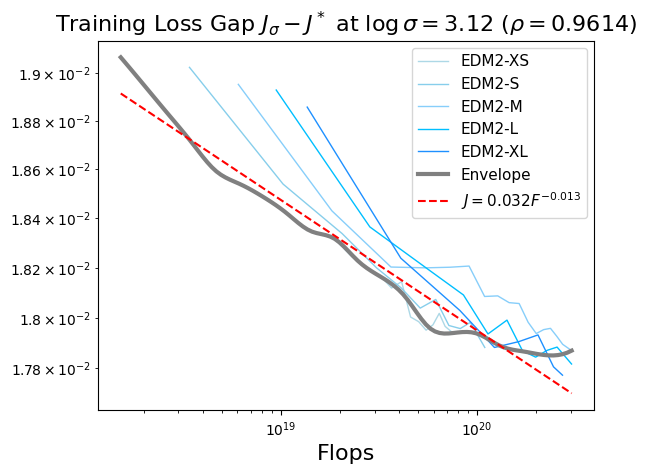} \\  
    \includegraphics[width=0.45\textwidth]{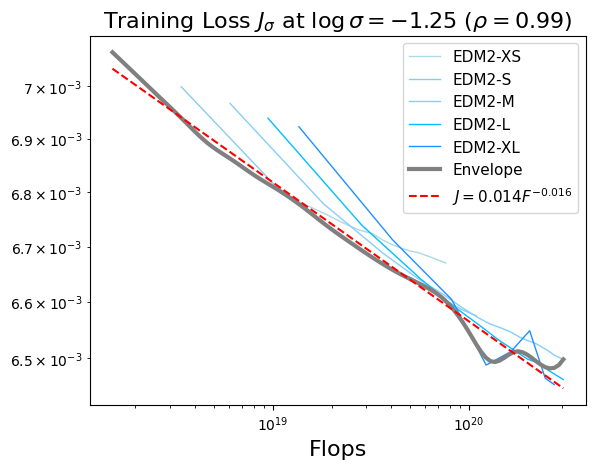} &  
    \includegraphics[width=0.475\textwidth]{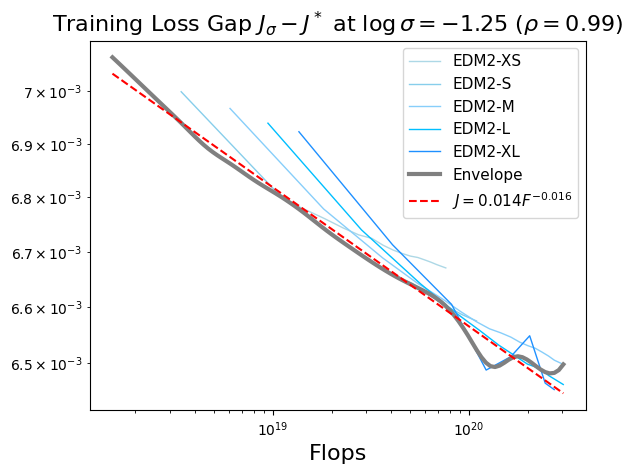} 
\end{tabular}  
\caption{Scaling law study on ImageNet-64. Each row corresponds to a different noise scale. The left column shows the raw training loss values, while the right column displays the training loss gap relative to the optimal loss at each noise scale. We observe that in the modified version, the envelope aligns more closely with a straight line, particularly at larger noise scales. For smaller noise scales, the improvement is less pronounced, since the optimal loss is still very small compared to the training loss of the model.}
\label{fig:appx-sl-64}
\end{figure}  

\begin{figure}[h!]  
\centering  
\begin{tabular}{cc}  
    \includegraphics[width=0.45\textwidth]{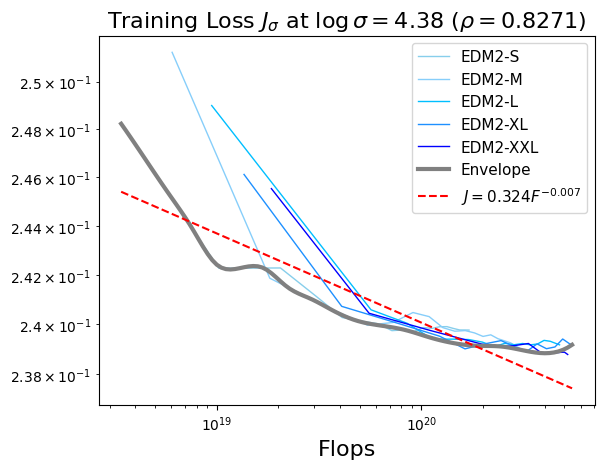} &  
    \includegraphics[width=0.465\textwidth]{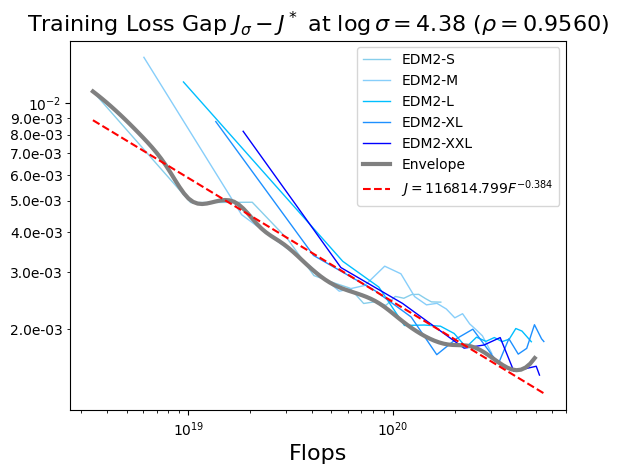} \\ 
    \includegraphics[width=0.45\textwidth]{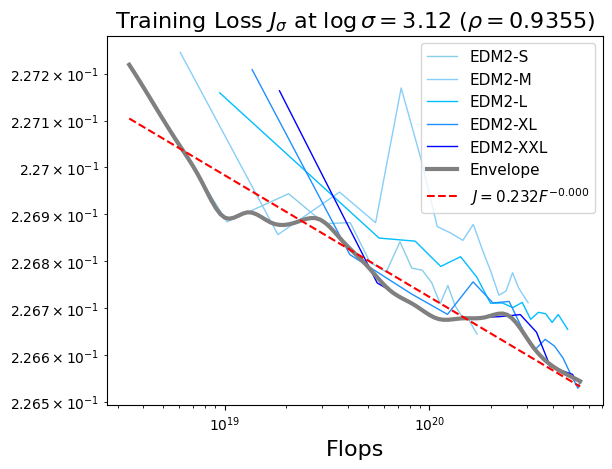} &
    \includegraphics[width=0.475\textwidth]{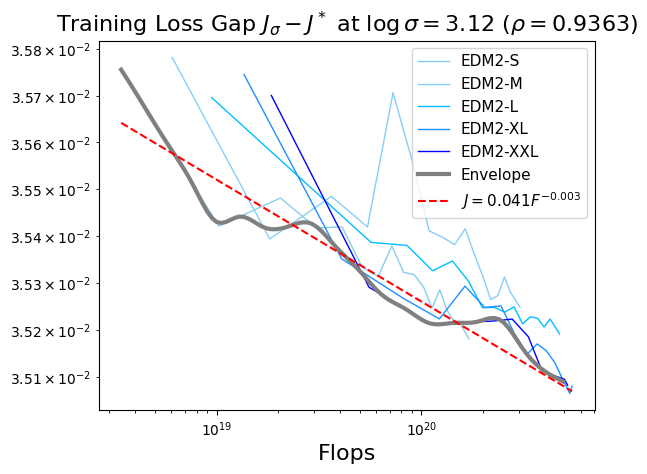} \\  
    \includegraphics[width=0.45\textwidth]{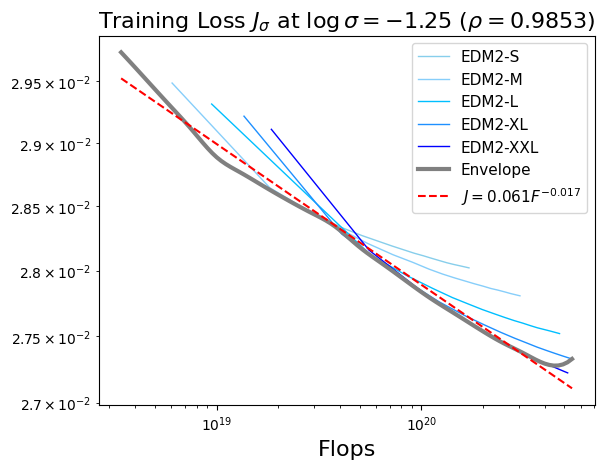} &  
    \includegraphics[width=0.478\textwidth]{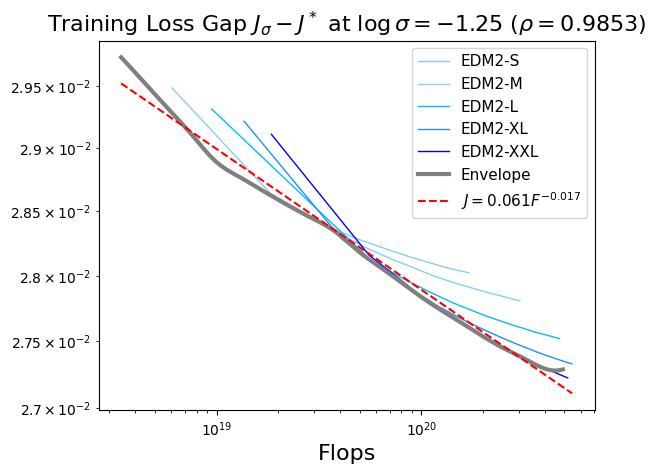} 
\end{tabular}
\caption{Scaling law study on ImageNet-512. Each row corresponds to a different noise scale. The left column shows the raw training loss values, while the right column presents the training loss gap relative to the optimal loss at each noise scale. We observe that in the modified version, the envelope aligns more closely with a linear trend, especially for larger noise scales. For smaller noise scales, the improvement is less significant, as the optimal loss is still very small compared to the training loss of the model.}
\label{fig:appx-sl-512}
\end{figure}

\end{document}